\documentclass[12pt, a4paper, oneside]{article}
\usepackage[margin=1in]{geometry} 
\usepackage{authblk} 
\usepackage{amsmath, amsthm, amssymb, appendix, bm, graphicx, mathrsfs, subcaption}
\usepackage{enumitem}
\usepackage{algorithm}
\usepackage{multirow}
\usepackage{amsmath}
\usepackage{xcolor}
\usepackage{makecell} 
\usepackage{booktabs} 
\usepackage{algpseudocode}
\usepackage[round]{natbib} 
\usepackage{mathtools}
\usepackage{tocloft} 
\usepackage{etoc}    

\newlength{\springerfigwidth}
\newlength{\springerfigmaxheight}
\newcommand{\springersetfigwidth}{%
  \setlength{\springerfigwidth}{\linewidth}%
  \setlength{\springerfigmaxheight}{0.9\textheight}%
}
\providecommand{\backmatter}{}

\theoremstyle{plain} 
\newtheorem{theorem}{Theorem}[section]
\newtheorem{lemma}{Lemma}[section]
\newtheorem{corollary}{Corollary}[section]
\newtheorem{proposition}{Proposition}[section]

\theoremstyle{definition} 
\newtheorem{definition}{Definition}[section]
\newtheorem{assumption}{Assumption}[section]
\newtheorem{example}{Example}[section]

\theoremstyle{remark} 
\newtheorem{remark}{Remark}[section]

\newcommand{\A}{\mathcal{A}}
\newcommand{\R}{\mathbb{R}}

\newcommand{\G}{\mathbf{G}}
\newcommand{\U}{\mathbf{U}}
\newcommand{\V}{\mathbf{V}}

\newcommand{\sign}{\mathbf{sign}}
\newcommand{\msgn}{\mathbf{msgn}}
\newcommand{\Tmsgn}{\mathbf{Tmsgn}}

\newcommand{\cmark}{\checkmark}
\newcommand{\xmark}{$\times$}

\usepackage[colorlinks,
            linkcolor=blue,       
            anchorcolor=blue,  
            citecolor=blue,        
            ]{hyperref}

\title{\textbf{Convergence of Spectral Descent for Non-smooth Optimization}}
\date{}
\author[1]{Yixuan Yang}
\author[1]{Yuqing He}
\author[1]{Song Li\thanks{Corresponding author}}
\affil[1]{School of Mathematical Sciences, Zhejiang University, Hangzhou, China}
\affil[ ]{\texttt{\{yixuanyang, yuqinghe25, songli\}@zju.edu.cn}}

\begin{document}
\etocdepthtag.toc{mtmain} 
\maketitle
\begin{abstract}
    
The Muon optimizer has recently demonstrated remarkable empirical success in training large language models. 
However, the theoretical understanding of its mechanisms remains limited. 
Current convergence guarantees for Muon rely heavily on smoothness assumptions, leaving its non-smooth convergence behavior largely unexplored. 
In this work, we take a step toward bridging this gap by investigating Spectral Descent (SD), a simplified variant of Muon, together with its truncated counterpart, Truncated Spectral Descent (TSD). 
Under convexity, Lipschitz continuity, and sharpness conditions, we establish global linear convergence for both SD and TSD in non-smooth convex formulations. 
We also study regularized variants equipped with decoupled weight decay and derive sublinear convergence guarantees through their connection with Frank-Wolfe methods. 
Finally, we apply our theoretical framework to robust low-rank matrix recovery under mixed sparse and dense noise regimes and provide rigorous recovery guarantees. 
Numerical experiments support the theoretical findings and demonstrate the effectiveness of Muon-type methods for non-smooth optimization.\\

\textbf{Keywords:} Muon, Spectral Descent, Non-smooth Optimization, Low-rank Matrix Recovery, Weight Decay.
\end{abstract}
\thispagestyle{empty}

\section{Introduction}
Large Language Models (LLMs) have emerged as a central paradigm in modern artificial intelligence, demonstrating strong performance across a wide range of tasks and applications \citep{brown2020language, team2023gemini}.
However, the immense computational overhead of large-scale LLM training makes computational efficiency a primary practical concern.
In this context, the choice of optimizer is particularly important.
Over the past few years, stochastic gradient descent (SGD) and adaptive methods, particularly Adam \citep{kingma2014adam} and AdamW \citep{loshchilov2017decoupled}, have become the primary choices for large-scale model training.

Recently, Muon \citep{jordan6muon} has emerged as a highly promising alternative to these standard adaptive optimizers for large-scale training.
Its core mechanism lies in leveraging the matrix-wise geometry of the parameters to formulate an approximately orthogonalized descent direction from the gradients.
The practical advantages of Muon have been validated by recent empirical studies, which demonstrate its capability to improve the compute-time tradeoff in large-scale pretraining \citep{shah2025practical} and facilitate better generalization \citep{tveit2025muon}.
Figure~\ref{fig:nn_experiments} further illustrates this empirical advantage in a non-smooth setting, where Muon and Spectral Descent achieve faster empirical loss decay than GD and Adam in ReLU network training.
Moreover, with decoupled weight decay, Muon has been scaled to large language models with billions of parameters trained on trillions of tokens \citep{liu2025muon, zeng2025glm}.
DeepSeek-AI also reports that Muon is used in most modules of the DeepSeek-V4 series, including the 1.6-trillion-parameter DeepSeek-V4-Pro model, to improve convergence speed and training stability \citep{deepseekai2026deepseekv4}.

In parallel with these empirical successes, the theoretical investigation of Muon-type optimizers has also seen steady progress \citep{shen2025convergence,li2025note,fan2026implicit,lau2025polargrad,ma2026preconditioning,braun2026spectral,yang2026manifold}.
Recently, \citet{shen2025convergence} provided convergence guarantees for Muon and SD, and showed that these optimizers outperform gradient descent.
For non-convex objectives, \citet{li2025note} established Muon's convergence under a smoothness assumption.
Moreover, \citet{ma2026preconditioning} investigated Spectral Descent for specific tasks such as matrix factorization and in-context learning, revealing that spectral orthogonalization acts as a natural preconditioner to achieve a linear convergence rate independent of the condition number.
However, the convergence behavior of Muon-type optimizers in non-smooth optimization problems, which arise widely in data science and deep learning, remains poorly understood.

\paragraph{Our Goal and Contributions.}
In this work, we take a step towards bridging this gap by analyzing the convergence of Spectral Descent and its truncated variant.
To the best of our knowledge, this represents the first theoretical guarantee for muon-type optimizers under non-smooth conditions.
A theoretical comparison between our results and existing analyses is summarized in Table \ref{tab:problem_setting_model_assumption}.
    \begin{figure}[H]
    \centering
    \springersetfigwidth
    \begin{minipage}{\springerfigwidth}
    \centering

    \begin{subfigure}[b]{0.48\linewidth}
        \centering
        \includegraphics[width=\linewidth,height=\springerfigmaxheight,keepaspectratio]{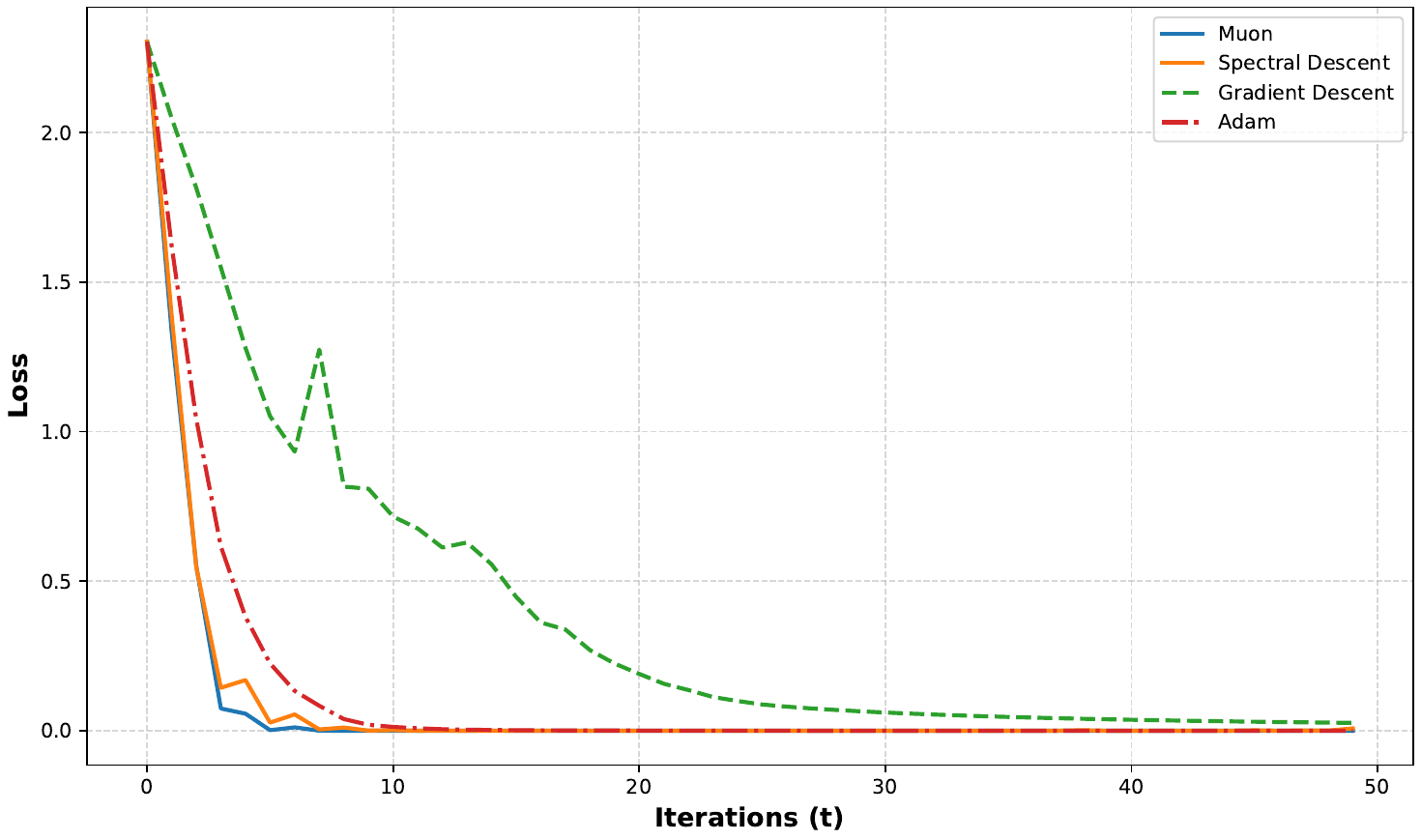}
        \caption{MNIST - Constant Step Size}
        \label{fig:mnist_constant}
    \end{subfigure}
    \hfill
    \begin{subfigure}[b]{0.48\linewidth}
        \centering
        \includegraphics[width=\linewidth,height=\springerfigmaxheight,keepaspectratio]{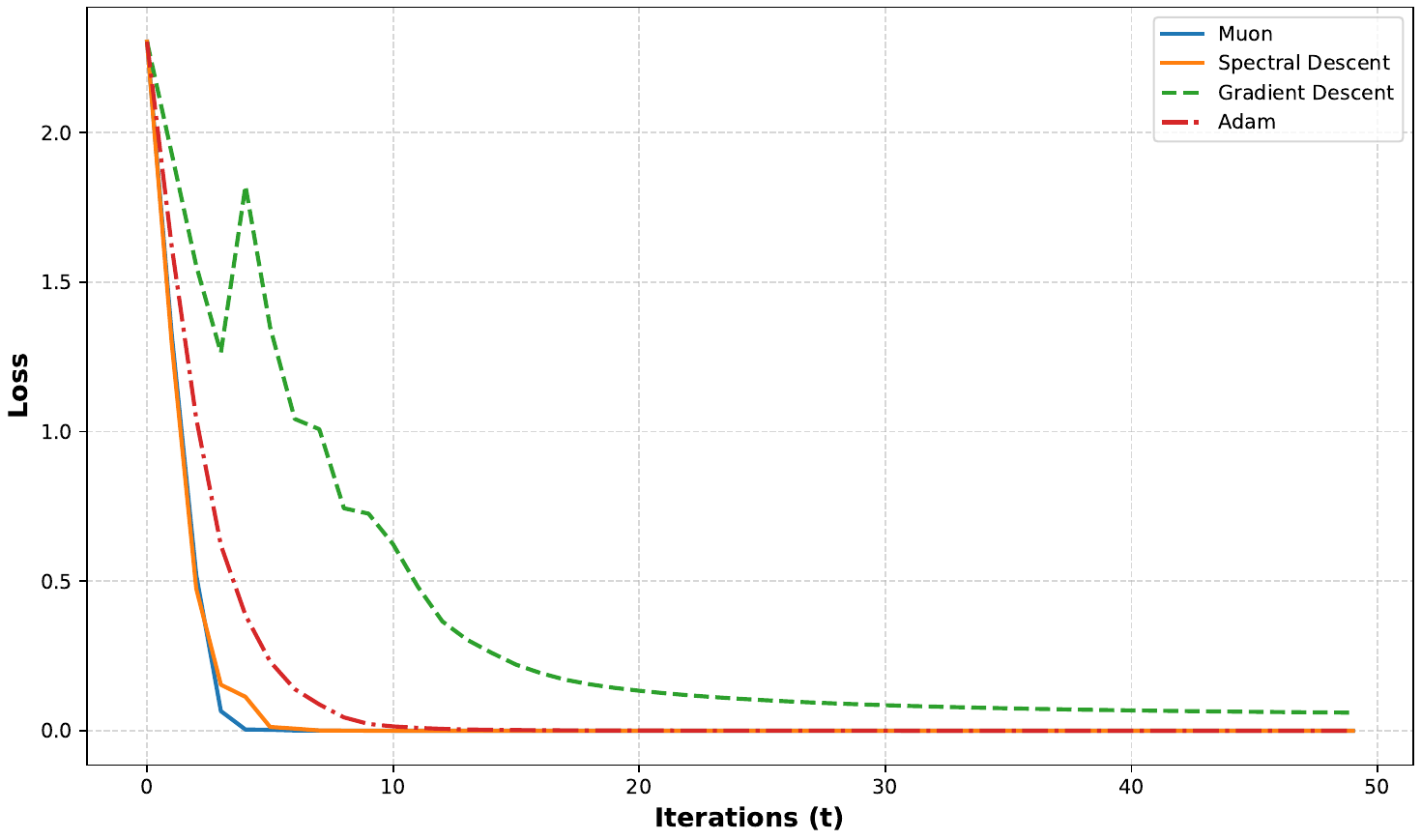}
        \caption{MNIST - Geometric Decay}
        \label{fig:mnist_decay}
    \end{subfigure}
    \vspace{0.4cm}

    \begin{subfigure}[b]{0.48\linewidth}
        \centering
        \includegraphics[width=\linewidth,height=\springerfigmaxheight,keepaspectratio]{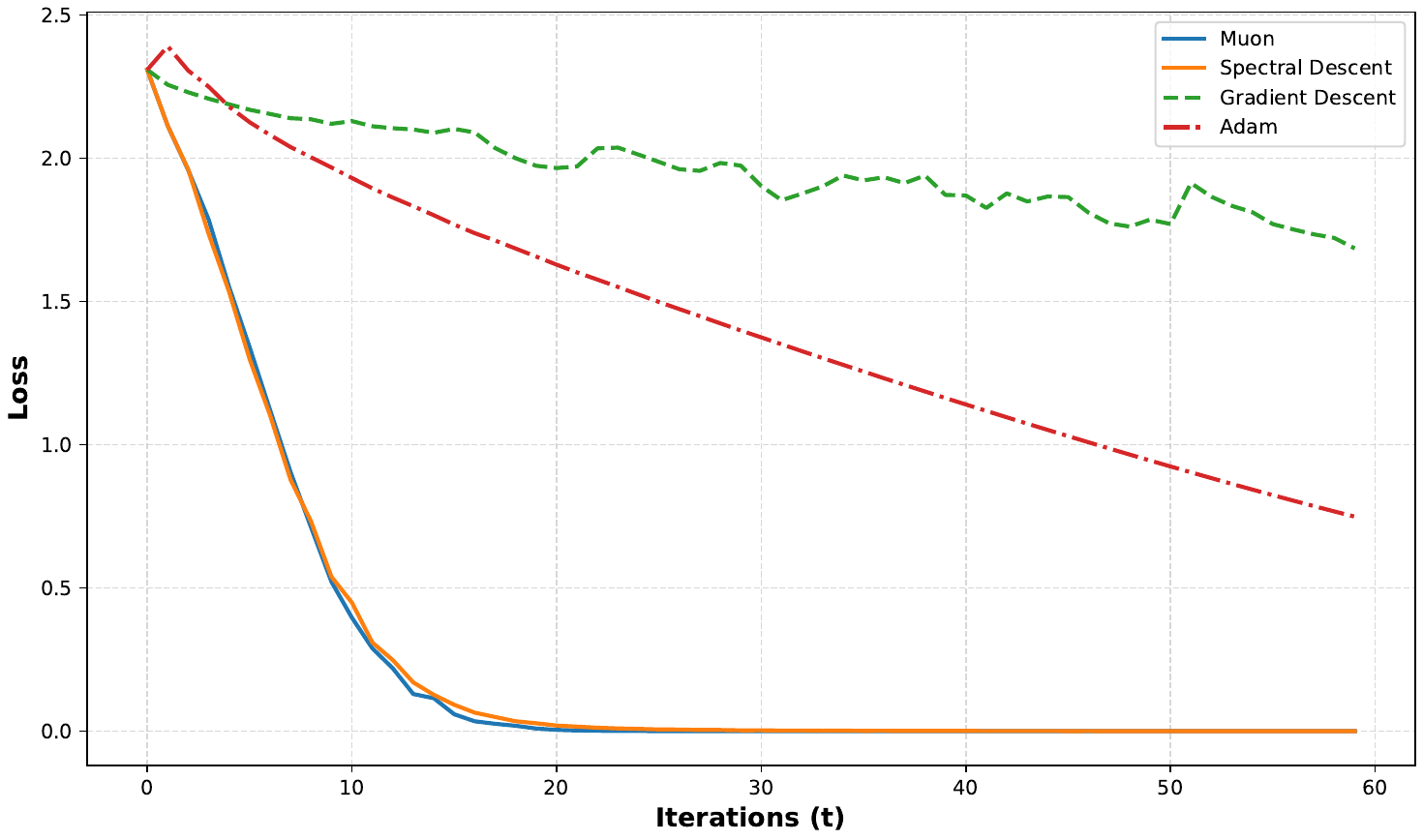}
        \caption{CIFAR-10 - Constant Step Size}
        \label{fig:cifar_constant}
    \end{subfigure}
    \hfill
    \begin{subfigure}[b]{0.48\linewidth}
        \centering
        \includegraphics[width=\linewidth,height=\springerfigmaxheight,keepaspectratio]{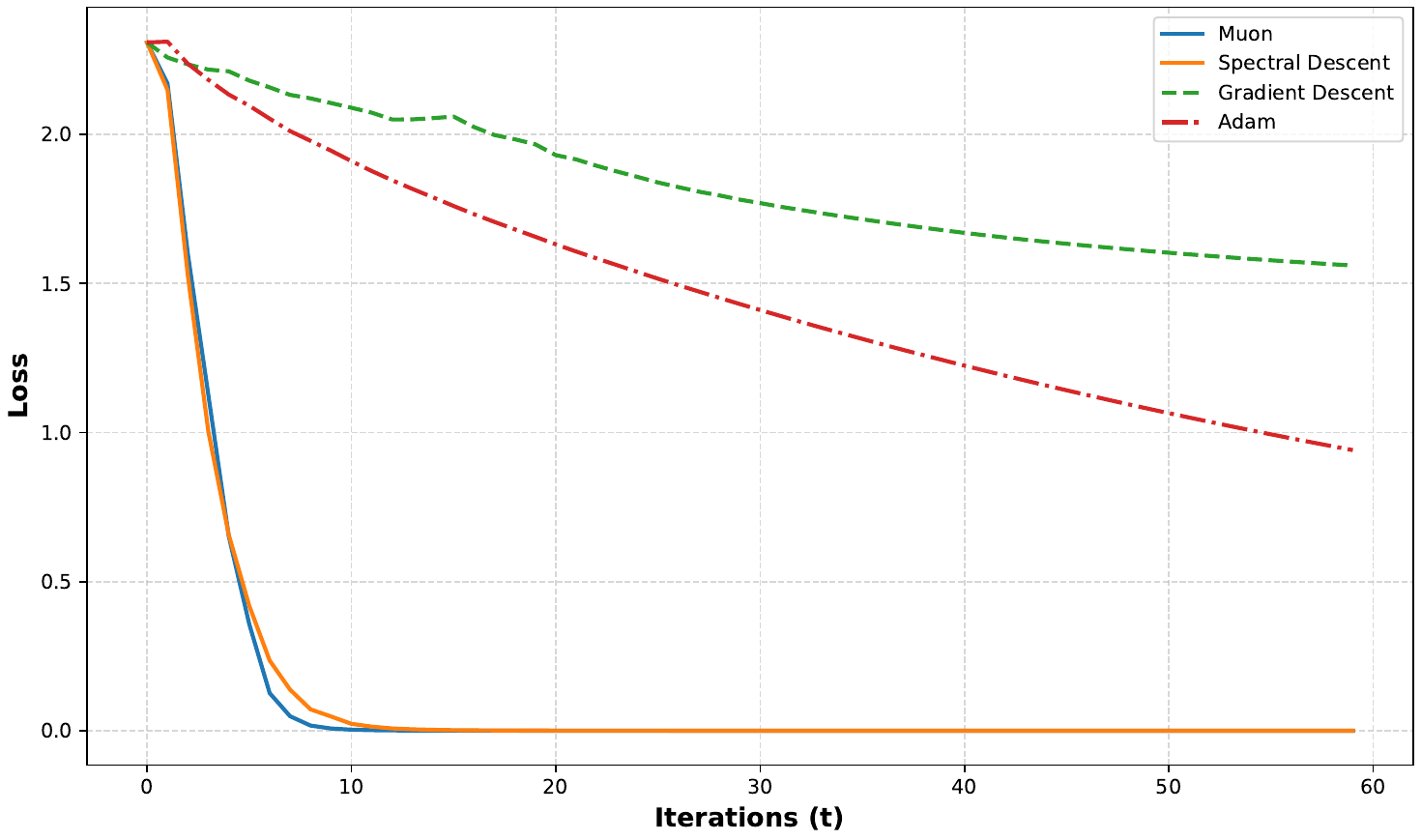}
        \caption{CIFAR-10 - Geometric Decay}
        \label{fig:cifar_decay}
    \end{subfigure}
    \end{minipage}
    \caption{Empirical comparison of GD, Adam, Spectral Descent, and Muon for training two-layer fully connected ReLU neural networks on MNIST and CIFAR-10.
    \textbf{Top row:} Performance on MNIST with (a) constant step size and (b) geometric decay step size. \textbf{Bottom row:} Performance on CIFAR-10 with (c) constant step size and (d) geometric decay step size.}
    \label{fig:nn_experiments}
\end{figure}
Our Contributions can be summarized as follows:
\begin{itemize}
    \item \textit{Global Linear Convergence of (Truncated) Spectral Descent:} In Theorem \ref{theorem:spectral_descent_convergence}, we rigorously establish the global linear convergence of standard Spectral Descent under convexity, Lipschitz continuity, and sharpness conditions.
    Our analysis reveals a rank-dependent descent mechanism, where the contraction rate is explicitly governed by the subgradient rank.
    Furthermore, Theorem~\ref{theorem:truncated_spectral_descent_convergence} extends this fast global linear convergence to the computationally efficient Truncated Spectral Descent (TSD) under the identical structural assumptions.
    Notably, we demonstrate that truncation not only reduces computational overhead but also theoretically relaxes the worst-case condition parameter requirement.
    Conceptually, SD and TSD are driven by distinct steepest descent geometries: while standard SD aligns with the spectral norm, TSD operates under the Ky Fan s-norm steepest descent geometry.

    \item \textit{Theoretical Guarantees with Decoupled Weight Decay:} We analyze the optimization dynamics when decoupled weight decay is incorporated, revealing a fundamental structural connection between regularized spectral updates and the Conditional Subgradient (Frank-Wolfe) method.
    To stabilize the optimization trajectory in non-smooth landscapes without the use of momentum, we introduce a neighborhood-based subgradient selection mechanism (acting as spatial smoothing).
    In Theorem~\ref{theorem:convergence_SD-WD} and Theorem~\ref{theorem:convergence_RTSD-WD}, we establish rigorous $\mathcal{O}(1/\sqrt{T})$ sublinear convergence guarantees for the regularized versions of Spectral Descent and its truncated variant under general structural assumptions, where $T$ denotes the number of iterations.
    We further instantiate this theoretical framework to the robust low-rank matrix recovery problem via a Least Absolute Deviation (LAD) formulation (cf. Theorem~\ref{theorem:convergence_low_rank_rtsd}).
    Our results provide theoretical guarantees for robust recovery under mixed-noise regimes and show that using the minimal truncation rank ($s=1$) can avoid the expensive $\mathcal{O}(n^3)$ bottleneck of full SVD computations.

    \item \textit{Experimental Validation:} We provide extensive numerical experiments to validate our theoretical findings and demonstrate the empirical performance of Muon-type optimizers.
    First, we show that Spectral Descent and Muon achieve significantly accelerated convergence and robustness to learning rate schedules when training non-smooth ReLU neural networks.
    Second, we empirically verify the convergence behaviors of SD and TSD on linear programming and matrix classification tasks, confirming their ability to efficiently minimize non-smooth objectives and accurately recover optimal solutions.
    Finally, we validate the Regularized Truncated Spectral Descent with Weight Decay (RTSD-WD) on robust low-rank matrix recovery, confirming its theoretical sublinear convergence and its robustness to mixed noise regimes.
\end{itemize}
\begin{table}[!h]
    \centering

    \renewcommand{\arraystretch}{1.2}
    \resizebox{\textwidth}{!}{
        \begin{tabular}{c|c|c|c|c|c}
            \Xhline{1.2pt}
            \multirow{3}{*}{} & \multirow{3}{*}{Optimizer} & \multicolumn{3}{c|}{Problem Setting} & \multirow{3}{*}{Convergence Rate} \\
            \cmidrule{3-5}
             & & \multirow{2}{*}{Model} & \multicolumn{2}{c|}{Assumptions} & \\
            \cmidrule{4-5}
             & &  & Smooth & L.C.$^*$ & \\
            \hline
            \multicolumn{6}{c}{\textbf{Without Weight Decay}} \\

            \Xhline{0.8pt}
            \multirow{2}{*}{\citep{shen2025convergence}} & SD & {General} & \cmark & \xmark & $\mathcal{O}(\log \frac{1}{\varepsilon})$ \\
            & SD \& Muon &  General(nonconvex) & \cmark & \xmark & $\mathcal{O}(\frac{1}{\varepsilon^2})$ \\

            \hline
            {\citep{li2025note}} & {SD \& Muon} & {General(nonconvex)}  & {\cmark} & {\xmark} & {$\mathcal{O}(\frac{1}{\varepsilon^2})$} \\

            \hline
            \citep{ma2026preconditioning} & SD & \multicolumn{3}{c|}{\begin{tabular}{@{}c@{}}Matrix factorization \\ \& in-context learning\end{tabular}} & $\mathcal{O}(\log \frac{1}{\varepsilon})$ \\

            \hline
            This work(Theorem \ref{theorem:spectral_descent_convergence}) & SD & General & \xmark & \cmark & $\mathcal{O}(\log \frac{1}{\varepsilon})$ \\
            \hline
            This work(Theorem \ref{theorem:truncated_spectral_descent_convergence}) & TSD & General  & \xmark & \cmark & $\mathcal{O}(\log \frac{1}{\varepsilon})$ \\

            \hline
            \multicolumn{6}{c}{\textbf{With Weight Decay}} \\
            \Xhline{0.8pt}
            \citep{sfyraki2025lions} & MuonW & General  & \cmark & \xmark & $\mathcal{O}(\frac{1}{\varepsilon^2})$ \\
            \hline
            This work(Theorem \ref{theorem:convergence_SD-WD}) & RSD-WD & General  & \xmark & \cmark & $\mathcal{O}(\frac{1}{\varepsilon^2})$ \\
            \hline
            This work(Theorem \ref{theorem:convergence_RTSD-WD}) & RTSD-WD & General  & \xmark & \cmark & $\mathcal{O}(\frac{1}{\varepsilon^2})$ \\
            \hline
            This work (Theorem \ref{theorem:convergence_low_rank_rtsd}) & RTSD-WD & low-rank matrix recovery via LAD & \xmark & \cmark & $\mathcal{O}(\frac{1}{\varepsilon^2})$ \\
            \Xhline{1.2pt}
        \end{tabular}
    }
    \vspace{0.15cm}
    \raggedright
    \footnotesize{$^*$ L.C. denotes the Lipschitz Continuous assumption.} \\
    \caption{Theoretical comparison of representative Muon-type optimizers.
    For general settings, we contrast the required assumptions (smoothness, and L.C.).
    We also report their corresponding convergence rates.
    }

    \label{tab:problem_setting_model_assumption}
\end{table}

\section{Related Work}
\label{sec:related_work}
\paragraph{Spectral Descent and Muon.}
The conceptual foundation of the Spectral Descent (SD) optimizer can be traced back to the works \citep{carlson2015stochasticdgm, carlson2015stochasticrbm, carlson2015preconditioned}, which initially introduced it for machine learning applications.
Fundamentally, SD operates as steepest descent with respect to the spectral norm. Recently, \citet{bernstein2024old} showed that the Shampoo \citep{gupta2018shampoo} optimizer can also be understood from this identical perspective.

Driven by these matrix-wise geometric insights, \citet{jordan6muon} proposed the Muon optimizer; it incorporates a momentum term into SD and demonstrates superior performance in large-scale model training.
Inspired by the empirical success of Muon, a proliferation of variants has recently emerged to improve its memory efficiency, stability, and overall performance \citep{an2025asgo, ahn2025dion, pang2026htmuon}.
These methods typically integrate Muon's core orthogonalization step with standard adaptive optimizers or other mechanisms \citep{liu2025cosmos, si2025adamuon, zhang2025adagrad, wang2025conda}.

A crucial line of research investigates the convergence of Muon-type optimizers \citep{ma2026preconditioning, braun2026spectral, chen2025muon, liu2025muon, sfyraki2025lions, pethick2025training}.
We note that recent works have explored extending the Muon optimizer to manifold optimization problems \citep{bernstein2025manifolds,buchanan2025mmuonadmm,kexuefm-11215,kexuefm-11221,yang2026manifold}.

Another line of research has recently explored the implicit bias of Spectral Descent and Muon.
\citet{fan2026implicit} proved that these algorithms converge to spectral norm max-margin solutions in multi-class linear classification.
This geometric bias was extended to smooth homogeneous neural networks by \citet{gronich2026implicit}, and further analyzed in practical stochastic settings by \citet{li2026implicit}.

\paragraph{Decoupled Weight Decay.}
Decoupled weight decay, initially popularized by AdamW \citep{loshchilov2017decoupled} to overcome the limitations of $L_2$ regularization in adaptive methods, has become a standard technique in large-scale training.
Recently, the integration of decoupled weight decay into Muon has garnered special attention \citep{chen2024lion,chen2025muon, liu2025muon, sfyraki2025lions, pethick2025training}.
Incorporating decoupled weight decay intrinsically alters the optimization dynamics, exposing a structural equivalence with the Frank-Wolfe (FW) method over spectral norm constraints.
Building upon this insight, we establish corresponding convergence guarantees for Spectral Descent and Truncated Spectral Descent under decoupled weight decay in non-smooth settings.

\paragraph{Sharpness.}
Sharpness and weak sharpness conditions \citep{burke1993weak, luo1993error, studniarski1999weak, burke2002weak, ma2025second} are prevalent across many mathematical problems, serving as central tools for establishing fast local convergence.
Recently, sharpness has become a key ingredient in the analysis of nonconvex low-rank matrix recovery \citep{li2020nonconvex, charisopoulos2021low, ding2023sharpness, xu2024convergence} and phase retrieval \citep{duchi2019solving, davis2020nonsmooth, zheng2024adaptive}.
Motivated by these developments, we adopt sharpness as a key structural assumption in our analysis of non-smooth problems.

\section{Preliminaries}
\label{sec:preliminaries}
\subsection{Notations and Muon Optimizer}
\textbf{Notations.}
For matrices $X, Y \in \mathbb{R}^{n_1 \times n_2}$, the trace inner product is defined as $\langle X, Y \rangle := \text{Tr}(X^\top Y)$.
Let $\sigma_1(X) \ge \sigma_2(X) \ge \dots \ge \sigma_n(X) \ge 0$ denote the singular values of $X$ in descending order, where $n = \min(n_1, n_2)$.
The operator norm (or spectral norm) is denoted by $\|X\|_2 = \sigma_1(X)$.
Furthermore, for any integer $1 \le k \le n$, the Ky Fan $k$-norm of $X$, denoted by $\|X\|_{(k)}$, is defined as the sum of its $k$ largest singular values, i.e., $\|X\|_{(k)} = \sum_{i=1}^k \sigma_i(X)$.
For a matrix $\G\in\R^{n_1\times n_2}$, its matrix sign function is defined as
$\msgn(\G):=(\G\G^T)^{-1/2}\G$,
where $(\cdot)^{-1/2}$ denotes the Moore-Penrose inverse square root.
If $\U_r\mathbf{\Sigma_r}\V_r^T$ is the compact SVD of $\G$ with $\U_r\in\R^{n_1\times r}$, $\V_r\in\R^{n_2\times r}$ and $r=\text{rank}(\G)$,
then the definition yields $\msgn(\G)=\U_r\V_r^T$.
For an integer $1\leq s\leq r$, we define the set-valued rank-$s$ truncated matrix sign set by
$\mathfrak T_s(\G):=\left\{U_sV_s^\top:\G=U\operatorname{Diag}(\sigma)V^\top\text{ is an SVD with }\sigma_1\ge \sigma_2\ge\cdots\ge \sigma_n\ge0 \right\}$, where \(U_s\) and \(V_s\) denote the first \(s\) columns of \(U\) and \(V\), respectively. 
Equivalently, $\mathfrak T_s(G)$ contains all admissible choices of the top-$s$ singular directions. 
If $\sigma_s(G)>\sigma_{s+1}(G)$, with the convention $\sigma_{n+1}(G)=0$, then \(\mathfrak T_s(G)\) is a singleton, and we write its unique element as $\Tmsgn_s(G)$. 
If $\sigma_s(G)=\sigma_{s+1}(G)$, then the top-\(s\) singular subspace is not unique, and $\Tmsgn_s(G)$ should be understood as an arbitrary element of $\mathfrak T_s(G)$. 
In the TSD/RTSD-WD update, any $D_s\in\mathfrak T_s(G)$ may be selected, and the convergence analysis below holds uniformly for every such admissible selection.
The tangent space to the manifold of rank-$r$ matrices at $G$ is given by $T_{G} = \{U_rA^\top+BV_r^\top:A,B\in\R^{n_1\times r}\}$ with its orthogonal complement denoted by $T_{G}^\perp$.
Let $P_{U_r} = U_r U_r^\top$ and $P_{V_r} = V_r V_r^\top$.
The orthogonal projections of any matrix $M$ onto these spaces are directly given by $M_{T_G} = P_{U_r} M + M P_{V_r} - P_{U_r} M P_{V_r}$ and $M_{T_G^\perp} = (I - P_{U_r}) M (I - P_{V_r})$.
Let $\mathbb{S}_+^{n-1} = \{ \mathbf{x} \in \mathbb{R}^n : \|\mathbf{x}\|_2 = 1, x_i \ge 0, \forall i \}$ denote the non-negative unit sphere in $\mathbb{R}^n$.
The set of $k \times n$ matrices with orthonormal columns is denoted by $\mathcal{V}_{k, n} = \{ \mathbf{U} \in \mathbb{R}^{k \times n} : \mathbf{U}^\top \mathbf{U} = \mathbf{I}_n \}$.
We write $a\lesssim b$ to indicate that $a\leq C b$ for some absolute constant $C>0$, independent of the dimension and other relevant parameters (unless otherwise specified).

Throughout this work, we consider the following optimization problem
\begin{equation}
    \label{eq:general_problem}
    \min_{X\in\mathcal{X}} f(X),
\end{equation}
where $\mathcal{X} \subseteq \mathbb{R}^{n_1\times n_2}$ denotes the feasible set, accounting for both unconstrained and constrained scenarios, and $f:\mathbb{R}^{n_1\times n_2}\to\mathbb{R}$ is a proper, lower semicontinuous, and possibly non-smooth function.

Applying the \textit{Muon} optimizer to problem (\ref{eq:general_problem}) yields the following update step:
\begin{equation}
    \label{eq:Muon_update}
    \begin{aligned}
    &G^{(t)} \in \partial f(X^{(t)}),\\
    &B^{(t)} = \mu B^{(t-1)} + G^{(t)},\\
    &{X}^{(t+1)} = X^{(t)} - \eta_t\msgn(B^{(t)}).
    \end{aligned}
\end{equation}
Here, $\mu\in[0,1)$ is the momentum parameter, and $\eta_t$ is the step size at iteration $t$.
Exactly computing the matrix sign function requires a full Singular Value Decomposition (SVD), which is computationally prohibitive for large-scale problems.
In practice, Newton-Schulz iterations \citep{bernstein2024old, jordan6muon} are employed as an efficient alternative.
Moreover, \citet{amsel2025polar} recently introduced the Polar Express method for computing the matrix polar decomposition, demonstrating faster convergence and improved validation loss when integrated into the Muon optimizer for training large language models.

Separately, we introduce \textit{Muon with decoupled weight decay (MuonW)} \citep{sfyraki2025lions,pethick2025training} for problem (\ref{eq:general_problem}), which incorporates a weight decay term into the update step:
\begin{equation}
    \label{eq:MuonW_update}
    \begin{aligned}
    &G^{(t)} \in \partial f(X^{(t)}),\\
    &B^{(t)} = \mu B^{(t-1)} + G^{(t)},\\
    &{X}^{(t+1)} = X^{(t)} - \eta_t(\msgn(B^{(t)}) + \lambda X^{(t)}),
    \end{aligned}
\end{equation}
where $\lambda>0$ is the weight decay parameter.
This variant has been shown to achieve significantly better generalization performance in LLM training compared to Muon \citep{liu2025muon}.

\subsection{This work: Spectral Descent}
In our work, we focus on the momentum-free case ($\mu=0$) for both the Muon and MuonW optimizers.
In this regime, the Muon update simplifies to the following \textit{Spectral Descent} \citep{bernstein2024old} step:
\begin{equation}
	\label{eq:spectral_descent}
	{X}^{(t+1)} = X^{(t)} - \eta_t\msgn(\G^{(t)}), \quad \G^{(t)}\in\partial f(X^{(t)}).
\end{equation}
The update removes the momentum term and directly uses the current subgradient to compute the descent direction, thus serving as a simplified variant of Muon that retains its core descent mechanism.

We also consider a truncated variant of Spectral Descent, \textit{Truncated Spectral Descent}, which replaces the matrix sign function with its truncated counterpart:
\begin{equation}
	\label{eq:truncated_spectral_descent}
	{X}^{(t+1)} = X^{(t)} - \eta_t\Tmsgn_{s}(\G^{(t)}), \quad \G^{(t)}\in\partial f(X^{(t)}).
\end{equation}
Here $\text{Tmsgn}_{s}(G^{(t)})$ extracts the top-$s$ singular subspace.
This update strategy is identical to the Fanion-$s$ algorithm \citep{kravatskiy2025ky}, operating as a Linear Minimization Oracle (LMO) with respect to the dual of the Ky Fan $s$-norm.
We will show the convergence of Spectral Descent and Truncated Spectral Descent in section \ref{ssec:spectral_descent}.

Similarly, in Section \ref{ssec:regularized_spectral_descent_with_weight_decay}, we consider the regularized versions of Spectral Descent and Truncated Spectral Descent, further incorporating decoupled weight decay.
These approaches correspond to the momentum-free variant of MuonW. 
For the regularized algorithms below, we use the following convention.
Let
$$
    \mathsf{Sgn}_*(G):=\partial\|G\|_*,
    \qquad
    \mathsf{Sgn}_{(s)}(G):=\partial\|G\|_{(s)}.
$$
When the corresponding norm is differentiable at \(G\), these set-valued maps reduce to the usual matrix sign and truncated matrix sign. 
In particular, if \(G\) has full rank, then \(\mathsf{Sgn}_*(G)=\{\msgn(G)\}\), and if
\(\sigma_s(G)>\sigma_{s+1}(G)\), then \(\mathsf{Sgn}_{(s)}(G)=\{\Tmsgn_s(G)\}\).
In degenerate cases, the regularized algorithms use a KKT-compatible element of these subdifferentials.
The corresponding updates are formulated as:

\textit{Regularized Spectral Descent with Weight Decay (RSD-WD)}
\begin{equation}
    \label{eq:regularized_spectral_descent_wd}
    \begin{aligned}
    &\tilde{\mathbf{G}}^{(t)}
    \in
    \arg\min_{\mathbf{G}\in T(X^{(t)},\epsilon_t)}
    \left\{
    \|\mathbf{G}\|_*+\lambda\langle X^{(t)},\mathbf{G}\rangle
    \right\},\\
    &D^{(t)}\in \mathsf{Sgn}_*(\tilde{\mathbf{G}}^{(t)})
    \quad\text{such that}\quad
    D^{(t)}\in
    -\lambda X^{(t)}
    -N_{T(X^{(t)},\epsilon_t)}(\tilde{\mathbf{G}}^{(t)}),\\
    &X^{(t+1)}
    =
    X^{(t)}
    -
    \eta_t\left(D^{(t)}+\lambda X^{(t)}\right).
    \end{aligned}
\end{equation}
where $ T(X, \epsilon) := \text{conv}(\bigcup_{u \in N(X, \epsilon)} \partial f(u)) $, $N(X, \epsilon)$ is the $\epsilon$-neighborhood of $X$ and $N_{T(X^{(t)},\epsilon_t)}(\tilde{\mathbf{G}}^{(t)})$ is the normal cone of the neighborhood $T(X^{(t)},\epsilon_t)$ at $\tilde{\mathbf{G}}^{(t)}$.

\textit{Regularized Truncated Spectral Descent with Weight Decay (RTSD-WD)}
\begin{equation}
    \label{eq:regularized_truncated_spectral_descent_wd}
    \begin{aligned}
    &\tilde{\mathbf{G}}^{(t)}
    \in
    \arg\min_{\mathbf{G}\in T(X^{(t)},\epsilon_t)}
    \left\{
    \|\mathbf{G}\|_{(s)}+\lambda\langle X^{(t)},\mathbf{G}\rangle
    \right\},\\
    &D_s^{(t)}\in \mathsf{Sgn}_{(s)}(\tilde{\mathbf{G}}^{(t)})
    \quad\text{such that}\quad
    D_s^{(t)}\in
    -\lambda X^{(t)}
    -N_{T(X^{(t)},\epsilon_t)}(\tilde{\mathbf{G}}^{(t)}),\\
    &X^{(t+1)}
    =
    X^{(t)}
    -
    \eta_t\left(D_s^{(t)}+\lambda X^{(t)}\right).
    \end{aligned}
\end{equation}

Note that as $\epsilon_t \to 0$, the neighborhood $T(X^{(t)}, \epsilon_t)$ shrinks, and $\tilde{\mathbf{G}}^{(t)}$ approaches a subgradient of the original objective function at $X^{(t)}$.
In this context, 'regularization' designates a specific descent direction selection mechanism explicitly designed to stabilize optimization within non-smooth landscapes.
For non-smooth objectives, applying the matrix sign operator to an arbitrary subgradient may produce oscillatory updates.
The $\mathop{\arg\min}$ problems in Eqs. (\ref{eq:regularized_spectral_descent_wd}) and (\ref{eq:regularized_truncated_spectral_descent_wd}) avoid this by selecting a subgradient $\tilde{\mathbf{G}}^{(t)}$ from the neighborhood $T(X^{(t)}, \epsilon_t)$.
This subproblem is carefully crafted to balance the pursuit of a minimal-norm direction for stability (governed by $\|\mathbf{G}\|_*$ or $\|\mathbf{G}\|_{(s)}$) with a structural alignment penalty ($\langle X^{(t)}, \mathbf{G} \rangle$).
In doing so, the neighborhood-based optimization performs a form of \textit{spatial smoothing}.
This mechanism provides a surrogate for the \textit{temporal smoothing} of momentum, stabilizing the update trajectory and providing rigorous convergence guarantees in non-smooth setting.

\paragraph{Organization.}
The remainder of this paper is organized as follows.
In Section~\ref{sec:main_results}, we present our core theoretical results: Subsection~\ref{ssec:spectral_descent} establishes the convergence guarantees for standard Spectral Descent and its truncated variant, while Subsection~\ref{ssec:regularized_spectral_descent_with_weight_decay} extends this analysis to their regularized counterparts equipped with decoupled weight decay.
Section~\ref{sec:application} instantiates our theoretical framework to the specific problem of robust low-rank matrix recovery.
Subsequently, Section~\ref{section:experiments} provides extensive numerical experiments to validate our theoretical claims and demonstrate the empirical efficacy of Muon-type optimizers.
Finally, we conclude the paper in Section~\ref{sec:discussion}.

\section{Main Results}
\label{sec:main_results}
In this section, we present our core theoretical results.
We begin in Subsection~\ref{ssec:spectral_descent} by establishing the convergence guarantees for standard Spectral Descent (SD) \eqref{eq:spectral_descent} and Truncated Spectral Descent (TSD) \eqref{eq:truncated_spectral_descent}.
By omitting the momentum term, we directly isolate and analyze the core geometry-aware descent mechanism.
Subsequently, Subsection~\ref{ssec:regularized_spectral_descent_with_weight_decay} extends our analysis to regularized optimizers that incorporate decoupled weight decay.
Specifically, we prove the convergence of RSD-WD \eqref{eq:regularized_spectral_descent_wd} and TSD-WD \eqref{eq:regularized_truncated_spectral_descent_wd}, theoretically demonstrating that our proposed neighborhood selection mechanism effectively stabilizes the optimization dynamics.

Let $\mathcal{X}^* = \{ X \in \mathbb{R}^{n_1 \times n_2} : f(X) \leq f(Z), \forall Z \in \mathbb{R}^{n_1 \times n_2} \}$ denote the set of global minimizers of $f$, and let $f^*$ be the optimal value of $f$.
We consider the following assumptions on the objective function $f$.
\assumption[Convexity]
	\label{assumption:Convexity}
	We say a function $f:\R^{n_1\times n_2}\to\R$ is convex,
	if for any $X,Y\in\R^{n_1\times n_2}$ and $\theta\in[0,1]$, we have
	$$f(\theta X + (1-\theta)Y) \leq \theta f(X) + (1-\theta)f(Y).$$
\endassumption
\assumption[Lipschitz Continuity]
	\label{assumption:Lipschitz Continuity}
A function $f:\R^{n_1\times n_2}\to\R$ is said to be $L$-Lipschitz continuous
if there exists a constant $L>0$ such that
$$|f(X)-f(Y)| \leq L\|X-Y\|_F, \quad \forall X,Y\in\R^{n_1\times n_2}.$$
\endassumption
This is a mild and standard assumption in non-smooth optimization.
In contrast to the smoothness assumption (i.e., Lipschitz continuous gradients) which strictly requires the function to be differentiable, Assumption \ref{assumption:Lipschitz Continuity} accommodates non-smooth landscapes.
Moreover, this condition ensures that the subgradients of $f$ are uniformly bounded, i.e., $\|G\|_F \leq L$ for any $G \in \partial f(X)$ and all $X\in\mathbb{R}^{n_1\times n_2}$.
\assumption[Sharpness]
    \label{assumption:sharpness}
    We say a function $f:\mathbb{R}^{n_1\times n_2}\to\mathbb{R}$ is $\mu$-sharp with respect to the optimal set $\mathcal{X}^*$ if there exists a constant $\mu>0$ such that
    $$f(X)-f^* \geq \mu \cdot \mathrm{dist}(X, \mathcal{X}^*), \quad \forall X\in\mathbb{R}^{n_1\times n_2},$$
    where $\mathrm{dist}(X, \mathcal{X}^*) := \inf_{Y \in \mathcal{X}^*} \|X-Y\|_F$ represents the distance from $X$ to the optimal set $\mathcal{X}^*$ w.r.t. the Frobenius norm.
\endassumption
The $\mu$-sharpness condition characterizes the local geometry around the optimal solution set, naturally capturing the non-differentiable "V-shaped" valleys in non-smooth optimization landscapes.
Utilizing this property is crucial for establishing rigorous convergence bounds across various data science problems \citep{davis2018subgradient,davis2020nonsmooth,li2020nonconvex,xu2024convergence}.
Mathematically, this condition is directly satisfied by polyhedral convex functions (e.g., $L_1$-regularization and SVM hinge loss), spectral penalties like the nuclear norm, as well as linear programming and linear complementarity problems \citep{burke1993weak}.
\subsection{Convergence Analysis of Spectral Descent}
\label{ssec:spectral_descent}
We begin by analyzing the convergence properties of Spectral Descent (\ref{eq:spectral_descent}).
For the convergence analysis, we use the following convention. 
If $0\in \partial f(X^{(t)})$, then by convexity $X^{(t)}\in\mathcal X^*$, and we set $G^{(t)}=0$ and $X^{(k)}=X^{(t)}$ for all $k\geq t$. 
Otherwise, we choose a nonzero subgradient $G^{(t)}\in\partial f(X^{(t)})\setminus\{0\}$.
All rank-dependent quantities below are defined only for active iterations, namely iterations with $G^{(t)}\neq0$.

To establish theoretical guarantees in non-smooth setting, our theoretical framework relies on three standard assumptions: convexity \ref{assumption:Convexity}, Lipschitz continuity \ref{assumption:Lipschitz Continuity}, and sharpness \ref{assumption:sharpness}.
As formalized in Theorem \ref{theorem:spectral_descent_convergence}, employing a geometrically decaying step-size schedule enables the algorithm to achieve global linear convergence.

\begin{theorem}[Convergence of spectral descent]
    \label{theorem:spectral_descent_convergence}
    Suppose that $f:\mathbb{R}^{n_1\times n_2}\to\mathbb{R}$ satisfies Assumptions~\ref{assumption:Convexity}, \ref{assumption:Lipschitz Continuity}, and~\ref{assumption:sharpness}.
    Let $\{X^{(t)}\}_{t=0}^{T}$ be the sequence generated by Spectral Descent (\ref{eq:spectral_descent}) with step sizes $\{\eta_t\}_{t=0}^{T-1}$.
    Define the condition parameter $\kappa:=\mu/L \in (0,1)$.
    Then the following statements hold:
    \begin{enumerate}[nosep, leftmargin=*]
        \item \textnormal{Rank-dependent descent:}
        For each $0 \leq t < T$, we have
        \begin{equation}
            \mathrm{dist}^2(X^{(t+1)}, \mathcal{X}^*) \leq \mathrm{dist}^2(X^{(t)}, \mathcal{X}^*)+ \eta_t^2 r_t - 2\eta_t C_t\mathrm{dist}(X^{(t)}, \mathcal{X}^*),
        \end{equation}
        where $r_t:=\mathrm{rank}(\mathbf{G}^{(t)})$ for the chosen subgradient $\mathbf{G}^{(t)}\in\partial f(X^{(t)})$, and $C_t:=\kappa-\sqrt{r_t-1}\sqrt{1-\kappa^2}$.

        \item \textnormal{Global linear convergence:}
        Let $\bar{r}\geq 1$ be any upper bound on the subgradient ranks along the
        trajectory, i.e., $r_t\leq \bar{r}$ for all $0\leq t<T$. In particular, one may
        take the a priori bound $\bar{r}=\min\{n_1,n_2\}$, or the sharper a posteriori
        choice $\bar{r}=\max_{0\leq t<T}r_t$.
        Assume that the condition parameter satisfies $\kappa > \sqrt{1-\frac{1}{\bar{r}}}$.
        Define the constant $C := \kappa - \sqrt{\bar{r}-1}\sqrt{1-\kappa^2}$.
        If the step size is chosen as $\eta_t = \frac{C}{\bar{r}}\gamma^t\mathrm{dist}(X^{(0)}, \mathcal{X}^*)$, where the decay rate $\gamma$ satisfies $\gamma := \max\left\{\frac{C}{\sqrt{\bar{r}}}, \sqrt{1 - \frac{C^2}{\bar{r}}}\right\} \in (0,1)$, then the sequence satisfies
        \begin{equation}
            \mathrm{dist}(X^{(t)}, \mathcal{X}^*) \leq \gamma^t\mathrm{dist}(X^{(0)}, \mathcal{X}^*),\quad \forall\, t \in \{0, 1, \dots, T\}.
        \end{equation}
    \end{enumerate}
\end{theorem}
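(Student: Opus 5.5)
The plan is to fix an active iteration $t$ (if $G^{(t)}=0$ the iterate is already optimal and both claims are trivial) and let $P:=\Pi_{\mathcal X^*}(X^{(t)})$. This projection exists because $\mathcal X^*$ is nonempty (by sharpness and the definition of $f^*$), closed and convex, since $f$ is convex and continuous. Write $d_t:=\mathrm{dist}(X^{(t)},\mathcal X^*)=\|X^{(t)}-P\|_F$ and $D:=\msgn(G^{(t)})$. Using $\mathrm{dist}(X^{(t+1)},\mathcal X^*)\le\|X^{(t+1)}-P\|_F$ and $\|D\|_F^2=r_t$, expanding the square gives
\[
\mathrm{dist}^2(X^{(t+1)},\mathcal X^*)\le d_t^2-2\eta_t\langle D,X^{(t)}-P\rangle+\eta_t^2 r_t .
\]
So part 1 reduces to the alignment inequality $\langle D,X^{(t)}-P\rangle\ge C_t d_t$. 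I expect this to be the main step.

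To prove the alignment inequality, set $g:=G^{(t)}/\|G^{(t)}\|_F$ and $\delta:=(X^{(t)}-P)/d_t$ (assume $d_t>0$, otherwise there is nothing to show). Convexity gives $\langle G^{(t)},X^{(t)}-P\rangle\ge f(X^{(t)})-f^*$. Sharpness bounds this below by $\mu d_t$, and Lipschitz continuity gives $\|G^{(t)}\|_F\le L$. Hence $c:=\langle g,\delta\rangle\ge\kappa$.

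Now decompose $\delta=c\,g+\sqrt{1-c^2}\,w$ with $w\perp g$ and $\|w\|_F=1$. Let $a:=\langle D,g\rangle=\|G^{(t)}\|_*/\|G^{(t)}\|_F$; then $a\in[1,\sqrt{r_t}]$. The component of $D$ orthogonal to $g$ has Frobenius norm $\sqrt{r_t-a^2}$, so
\[
\langle D,\delta\rangle\ge a c-\sqrt{1-c^2}\sqrt{r_t-a^2}.
\]
The right-hand side is increasing in $c$ (because $a\ge0$), so we may set $c=\kappa$. It is also increasing in $a$: its derivative is $\kappa+a\sqrt{1-\kappa^2}/\sqrt{r_t-a^2}>0$. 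Its minimum is therefore attained at $a=1$, giving exactly $C_t=\kappa-\sqrt{r_t-1}\sqrt{1-\kappa^2}$. This proves part 1.

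For part 2, first note that $\kappa>\sqrt{1-1/\bar r}$ is equivalent to $C>0$. Also $C\le\kappa<1\le\sqrt{\bar r}$, so $\gamma\in(0,1)$. Since $C_t$ is decreasing in $r_t$ and $r_t\le\bar r$, we have $C_t\ge C>0$ and $\eta_t^2r_t\le\eta_t^2\bar r$. Part 1 then gives $d_{t+1}^2\le\varphi(d_t)$, where $\varphi(d):=d^2-2\eta_tCd+\eta_t^2\bar r$.

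We argue by induction on $t$, assuming $d_t\le R_t:=\gamma^t d_0$. Since $\varphi$ is convex, its maximum on $[0,R_t]$ is attained at an endpoint, and we check each:
\begin{itemize}
\item At $d=0$: $\varphi(0)=C^2\gamma^{2t}d_0^2/\bar r\le\gamma^{2t+2}d_0^2$, because $\gamma\ge C/\sqrt{\bar r}$.
\item At $d=R_t$: with $\eta_t=CR_t/\bar r$, we get $\varphi(R_t)=R_t^2(1-C^2/\bar r)\le\gamma^2R_t^2$, because $\gamma\ge\sqrt{1-C^2/\bar r}$.
\end{itemize}
Hence $d_{t+1}\le\gamma^{t+1}d_0$, which completes the induction. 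Frozen iterates, where $G^{(t)}=0$, trivially preserve the bound.
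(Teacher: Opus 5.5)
Your proposal is correct. Part 2, the convex-quadratic endpoint induction, is exactly the paper's argument. The difference is in how you prove the alignment bound $\langle \msgn(G^{(t)}), X^{(t)}-P\rangle \ge C_t d_t$.

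\textbf{The paper's route.} It passes to coordinates in the singular basis, $x_i=\langle X^{(t)}-P,\,u_iv_i^\top\rangle$, and uses Parseval to get $\|x\|_2\le d_t$. It then proves a vector lemma by splitting indices according to the sign of $x_i$, using $\|x_{\mathcal I}\|_1\ge\|x_{\mathcal I}\|_2\ge\kappa d_t$ and $\|x_{\mathcal I^c}\|_1\le\sqrt{r_t-1}\,\|x_{\mathcal I^c}\|_2$. This produces an explicit extremal vector showing the constant cannot be improved.

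\textbf{Your route.} You work coordinate-free in the plane spanned by $g$ and $\delta$, so only Cauchy--Schwarz is needed. Along the way you get the sharper intermediate bound $\kappa a-\sqrt{1-\kappa^2}\sqrt{r_t-a^2}$, with $a=\|G^{(t)}\|_*/\|G^{(t)}\|_F$. This is strictly larger than $C_t$ whenever $r_t\ge 2$, since then $a>1$. That makes precise the paper's remark that the worst case is only a degenerate limit of nearly rank-one subgradients.

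\textbf{What your route buys.} Your argument transfers essentially verbatim to Truncated Spectral Descent. Replace $D$ by $D_s\in\mathfrak T_s(G^{(t)})$, so that $\|D_s\|_F^2=s$ and $a_s:=\langle D_s,g\rangle=\sum_{i\le s}\sigma_i/\|\sigma\|_2\le\sqrt s$. Evaluating your bound at $c=\mu/\|G^{(t)}\|_F$ reproduces the paper's Step~1 formula without the KKT computation. What remains is only the extremal estimate $a_s\ge\alpha_t$, which is the paper's Step~2.

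\textbf{One small justification.} Monotonicity in $c$ also uses $c\ge\kappa>0$, so that $-\sqrt{1-c^2}$ is increasing in $c$. The fact $a\ge 0$ alone is not enough.
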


\begin{proof}
    Let $X_*^{(t)} \in \mathcal{X}^*$ denote the projection of $X^{(t)}$ onto the optimal solution set $\mathcal{X}^*$, such that $\mathrm{dist}(X^{(t)}, \mathcal{X}^*) = \|X^{(t)} - X_*^{(t)}\|_F$.
    We define the residual vector at iteration $t$ as $\mathbf{R}^{(t)} := X^{(t)} - X_*^{(t)}$.
    Let $\mathbf{G}^{(t)} \in \partial f(X^{(t)})$ be the subgradient used in the update.
    Consider the compact SVD $\mathbf{G}^{(t)}=\sum_{i=1}^{r_t}\sigma^{(t)}_i u^{(t)}_i (v^{(t)}_i)^\top$, where $r_t = \mathrm{rank}(\mathbf{G}^{(t)}) \leq \bar{r}$.

    The update rule implies $X^{(t+1)} = X^{(t)} - \eta_t \msgn(\mathbf{G}^{(t)})$.
	Consequently, the residual dynamics (in terms of the squared Frobenius norm) follow
    \begin{align}
        \mathrm{dist}^2(X^{(t+1)}, \mathcal{X}^*) & \leq \|X^{(t+1)} - X_*^{(t)}\|^2_F \nonumber\\
        & = \|X^{(t)} - X_*^{(t)} - \eta_t \msgn(\mathbf{G}^{(t)}) \|^2_F \nonumber\\
        & = \|\mathbf{R}^{(t)}\|^2_F + \eta_t^2\|\msgn(\mathbf{G}^{(t)})\|^2_F - 2\eta_t\langle \mathbf{R}^{(t)}, \msgn(\mathbf{G}^{(t)})\rangle.
        \label{eq:residual_update}
    \end{align}
    We proceed to bound the second and third terms on the right-hand side of \eqref{eq:residual_update}.

    For the second term, by the definition of the matrix sign function, we have
    \begin{equation}
        \label{ineq:second_term}
        \|\msgn(\mathbf{G}^{(t)})\|^2_F = \|U^{(t)} {V^{(t)}}^\top\|^2_F = \sum_{i=1}^{r_t} 1^2 = r_t.
    \end{equation}

    Then, we aim to lower bound the inner product $\langle \mathbf{R}^{(t)}, \msgn(\mathbf{G}^{(t)})\rangle$.
    First, using the convexity and sharpness assumption yields
    \begin{equation}
        \label{ineq:third_term_1}
        \langle \mathbf{R}^{(t)}, \mathbf{G}^{(t)}\rangle \geq f(X^{(t)}) - f(X^{(t)}_*) \geq \mu\|\mathbf{R}^{(t)}\|_F.
    \end{equation}
    Second, extending the singular vectors to orthonormal bases of $\mathbb{R}^{n_1}$ and $\mathbb{R}^{n_2}$, Parseval's identity implies
    \begin{equation}
        \label{ineq:third_term_2}
        \|\mathbf{R}^{(t)}\|^2_F \geq \sum_{i=1}^{r_t} \langle \mathbf{R}^{(t)}, u^{(t)}_i (v^{(t)}_i)^\top \rangle^2.
    \end{equation}
    Third, by the $L$-Lipschitz continuity, we have
    \begin{equation}
        \label{ineq:third_term_3}
        \|\mathbf{G}^{(t)}\|^2_F = \sum_{i=1}^{r_t}(\sigma^{(t)}_i)^2 \leq L^2.
    \end{equation}
    Combining \eqref{ineq:third_term_1}, \eqref{ineq:third_term_2}, and \eqref{ineq:third_term_3}, we apply Lemma \ref{lemma:lower_bound_descent_term_SD} to obtain the lower bound for the descent term
    \begin{equation}
		\label{ineq:third_term}
        \langle \mathbf{R}^{(t)}, \msgn(\mathbf{G}^{(t)})\rangle \geq \left(\kappa - \sqrt{r_t-1}\sqrt{1-\kappa^2}\right)\|\mathbf{R}^{(t)}\|_F.
    \end{equation}

    Substituting \eqref{ineq:second_term} and \eqref{ineq:third_term} into \eqref{eq:residual_update}, we obtain
    \begin{equation} \label{eq:recurrence_relation}
        \mathrm{dist}^2(X^{(t+1)}, \mathcal{X}^*) \leq \mathrm{dist}^2(X^{(t)}, \mathcal{X}^*) + \eta_t^2 r_t - 2\eta_t C_t \mathrm{dist}(X^{(t)}, \mathcal{X}^*).
    \end{equation}
    This proves the first statement.

Since $r_t\leq \bar{r}$ and $C_t\geq C$, the rank-dependent recursion \eqref{eq:recurrence_relation} implies the uniform upper bound
    \begin{equation}
        \label{eq:uniform_recursion}
        \mathrm{dist}^2(X^{(t+1)}, \mathcal{X}^*) \leq \mathrm{dist}^2(X^{(t)}, \mathcal{X}^*) + \eta_t^2 \bar{r} - 2\eta_t C \mathrm{dist}(X^{(t)}, \mathcal{X}^*).
    \end{equation}
    Let $\Delta_t:=\mathrm{dist}(X^{(t)}, \mathcal{X}^*)$ and set $B_t:=\gamma^t\Delta_0$ with $\Delta_0=\mathrm{dist}(X^{(0)}, \mathcal{X}^*)$.
    Choose the step size $\eta_t=\frac{C}{\bar{r}}B_t$.
    We show by induction that $\Delta_t\leq B_t$ for all $t \in \{0, \dots, T\}$.
    The base case $t=0$ is immediate.
    Assume $\Delta_t\leq B_t$ holds for some $t \geq 0$. We now prove it for $t+1$.

Define the quadratic function $h(x):=x^2-2\eta_t Cx+\eta_t^2 \bar{r}$.
Then \eqref{eq:uniform_recursion} implies $\Delta_{t+1}^2\le h(\Delta_t)$.
Since $h$ is convex and $\Delta_t\in[0,B_t]$, we have
$$
h(\Delta_t)\le \max\{h(0),h(B_t)\}.
$$
A direct computation gives
$$
h(0)=\eta_t^2 \bar{r}=\frac{C^2}{\bar{r}}B_t^2,
\qquad
h(B_t)=\Bigl(1-\frac{C^2}{\bar{r}}\Bigr)B_t^2.
$$
Recall decay rate $\gamma = \max\left\{\frac{C}{\sqrt{\bar{r}}}, \sqrt{1 - \frac{C^2}{\bar{r}}}\right\}$.
We obtain the upper bound
$$
\Delta_{t+1}^2\le h(\Delta_t)\le \max\{h(0), h(B_t)\}=\gamma^2B_t^2,
$$
which implies $\Delta_{t+1}\le B_{t+1}$ and closes the induction.
This proves $\mathrm{dist}(X^{(t)}, \mathcal{X}^*)\le \gamma^t\mathrm{dist}(X^{(0)}, \mathcal{X}^*)$ for all $t \in \{0, \dots, T\}$.
\end{proof}
\begin{remark}[Rank-dependent descent and linear convergence]
Theorem \ref{theorem:spectral_descent_convergence} characterizes the precise optimization dynamics of Spectral Descent from both local and global perspectives.
First, the rank-dependent descent reveals that the per-step contraction is explicitly governed by the subgradient rank $r_t$: a smaller rank yields a larger constant $C_t$, resulting in a sharper one-step reduction of the residual.
Second, by leveraging the maximum trajectory rank $\bar{r}$, we establish global linear convergence.
This ensures that the distance to the optimal solution set decays steadily at a strict geometric rate $\gamma^t$.
\end{remark}
\begin{remark}[condition parameter $\kappa$ and worst-case geometry]
Fundamentally, the condition parameter $\kappa > \sqrt{1 - 1/\bar{r}}$ (which degrades to $\sqrt{1 - 1/n}$ in the extreme case where $\bar{r} = n$) is required to ensure that the lower bound of the single-step descent term remains strictly positive.
While Lemma \ref{lemma:lower_bound_descent_term_SD} proves that this bound is tight, the worst-case scenario achieving it exhibits a highly adversarial geometry.
Specifically, the construction in Eq.~(\ref{eq:tightness_construction}) corresponds to a degenerate limiting configuration in which the subgradient is nearly rank one, while the residual matrix has uniformly negative projections along the remaining orthogonal singular directions.
In practice, such extreme conditions rarely occur.
This explains the empirical observation that Spectral Descent converges rapidly even when the theoretical condition on $\kappa$ is violated, see Figure~\ref{fig:12_grid_comparison}.

However, it is possible to relax this pessimistic deterministic constraint of $\kappa$.
By leveraging specific problem structures and probabilistic tools, one could establish a tighter, structure-aware alignment condition $\langle \mathbf{R}^{(t)}, \msgn(\mathbf{G}^{(t)})\rangle \geq \beta \|\mathbf{R}^{(t)}\|_F$, where $\beta > 0$ is a refined and even dimension-independent constant.
We leave the exploration of such fine-grained convergence guarantees to future work.
Moreover, it is worth noting that we avoid the constraint of the condition parameter $\kappa$ by introducing regularization and weight decay in Section \ref{ssec:regularized_spectral_descent_with_weight_decay}.
\end{remark}

\begin{remark}[Choice of step size $\eta_t$]
   Note that the linear convergence in Theorem \ref{theorem:spectral_descent_convergence} requires a step size $\eta_t$ proportional to the initial distance to the optimal set, i.e., $\mathrm{dist}(X^{(0)}, \mathcal{X}^*)$.
   While utilizing this distance is a standard convention to establish convergence rates in theory \citep{li2020nonconvex,xu2024convergence}, it is typically unknown in practice.
   Nevertheless, our theoretical result provides clear guidance for practical implementations.
   In practice, we can simply absorb this unknown distance into the proportionality constant, adopting the practical geometric decay schedule $\eta_t \propto \gamma^t$.
   As shown in Figure~\ref{fig:12_grid_comparison}, this practical step-size scheme is empirically consistent with the linear convergence behavior predicted by the theory.
\end{remark}
We now construct an explicit example that strictly satisfies the (worst-case) conditions required in our theorem.
\begin{example}[Worst-case Robust Matrix Estimation]
Consider the following robust matrix estimation problem under an adversarial continuous measurement setting:
\begin{equation}
\label{eq:robust_matrix_estimation}
\min_{X\in\R^{n_1\times n_2}} f(X):= \max_{A\in\mathcal{X}} |\langle X, A \rangle - y_{A} |,
\end{equation}
where $\mathcal{X} = \{A\in\R^{n_1\times n_2}:A=U\text{diag}(s)V^\top, U\in\mathcal{N}_{U}, V\in\mathcal{N}_{V}, s\in\mathcal{N}_{\sigma}\}$.
Here, $\mathcal{N}_{\sigma}$ is an $\epsilon$-net of the non-negative unit sphere $\mathbb{S}_+^{n-1}$, while $\mathcal{N}_{U}$ and $\mathcal{N}_{V}$ denote the $\epsilon$-nets of the Stiefel manifolds $\mathcal{V}_{n_1, n}$ and $\mathcal{V}_{n_2, n}$, respectively.
It can be shown that $f(X)$ is convex, globally Lipschitz continuous with $L=1$, and satisfies the sharpness condition with $\mu=1-(\frac{3}{2}\epsilon^2+2\epsilon)$.
Provided $\epsilon$ is chosen sufficiently small such that the condition parameter satisfies $\kappa = \mu/L > \sqrt{1 - 1/n}$, this setup strictly meets the landscape requirement of Theorem \ref{theorem:spectral_descent_convergence}.
While exact subgradient evaluation over the dense $\epsilon$-nets $\mathcal{X}$ makes practical algorithmic optimization intractable, this formulation acts as a purely theoretical vehicle.
It mathematically guarantees that the worst-case bound is strictly attainable.
\end{example}

Next, we turn to the convergence analysis of Truncated Spectral Descent (\ref{eq:truncated_spectral_descent}), which incorporates a rank truncation mechanism.
\begin{theorem}[Convergence of Truncated Spectral Descent]
    \label{theorem:truncated_spectral_descent_convergence}
    Suppose that $f:\mathbb{R}^{n_1\times n_2}\to\mathbb{R}$ satisfies Assumptions~\ref{assumption:Convexity}, \ref{assumption:Lipschitz Continuity}, and~\ref{assumption:sharpness}.
    Let $\{X^{(t)}\}_{t=0}^{T}$ be the sequence generated by Truncated Spectral Descent \eqref{eq:truncated_spectral_descent} with step sizes $\{\eta_t\}_{t=0}^{T-1}$ and a fixed truncation level $s$.
    Assume that $1\le s\le r_t:=\operatorname{rank}(\G^{(t)})$ for all $0\le t<T$.
    Let $\kappa := \mu/L \in (0,1)$ be the condition parameter and define the truncation-dependent constants $\alpha_t:=\min\left\{1,\frac{s}{\sqrt{r_t}}\right\}$ and $\widetilde C_t:=\kappa\alpha_t-\sqrt{s-\alpha_t^2}\sqrt{1-\kappa^2}$.
    Then the following statements hold:
     \begin{enumerate}[nosep, leftmargin=*]
        \item \textnormal{Truncation-dependent descent:}
        For each $0 \leq t < T$, the iterates satisfy
        \begin{equation}
            \mathrm{dist}^2(X^{(t+1)}, \mathcal{X}^*) \leq \mathrm{dist}^2(X^{(t)}, \mathcal{X}^*) + \eta_t^2 s - 2\eta_t \widetilde C_t\mathrm{dist}(X^{(t)}, \mathcal{X}^*).
        \end{equation}

        \item \textnormal{Global linear convergence:}
        Let $\bar r:=\max_{0\le t<T}r_t$ and $\alpha_s:=\min\left\{1,\frac{s}{\sqrt{\bar r}}\right\}$.
        Define $\widetilde C:=\kappa\alpha_s-\sqrt{s-\alpha_s^2}\sqrt{1-\kappa^2}$.
        Provided that $\kappa>\sqrt{1-\frac{\alpha_s^2}{s}}$, if the step size is chosen as $\eta_t = \frac{\widetilde C}{s}\gamma^t\mathrm{dist}(X^{(0)}, \mathcal{X}^*)$, where the decay rate satisfies $\gamma :=\max\left\{\frac{\widetilde C}{\sqrt{s}},\sqrt{1-\frac{\widetilde C^2}{s}}\right\}\in(0,1)$, then the sequence converges linearly as
        \begin{equation}
            \mathrm{dist}(X^{(t)}, \mathcal{X}^*) \leq \gamma^t\mathrm{dist}(X^{(0)}, \mathcal{X}^*),\quad \forall\, t \in \{0, 1, \dots, T\}.
        \end{equation}
    \end{enumerate}
\end{theorem}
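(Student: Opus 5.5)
The argument runs parallel to the proof of Theorem~\ref{theorem:spectral_descent_convergence}, with $\msgn$ replaced by $D_t:=\Tmsgn_s(\G^{(t)})=\sum_{i=1}^{s}u_i^{(t)}(v_i^{(t)})^\top$, for any admissible choice of top-$s$ singular pairs. Let $X_*^{(t)}$ be the projection of $X^{(t)}$ onto $\mathcal X^*$ and set $\mathbf R^{(t)}:=X^{(t)}-X_*^{(t)}$. Expanding the square gives
\begin{equation*}
\mathrm{dist}^2(X^{(t+1)},\mathcal X^*)\le \|\mathbf R^{(t)}\|_F^2+\eta_t^2\|D_t\|_F^2-2\eta_t\langle \mathbf R^{(t)},D_t\rangle .
\end{equation*}
Here $\|D_t\|_F^2=s$ exactly. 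Everything then reduces to the alignment bound $\langle \mathbf R^{(t)},D_t\rangle\ge \widetilde C_t\|\mathbf R^{(t)}\|_F$.

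For that bound, write $a_i:=\langle \mathbf R^{(t)},u_iv_i^\top\rangle$ for $i\le r_t$. The scalars we control are:
\begin{itemize}
\item $\sum_{i\le r_t}\sigma_i a_i=\langle \mathbf R,\G\rangle\ge \mu\|\mathbf R\|_F$, by convexity and sharpness as in \eqref{ineq:third_term_1};
\item $\sum_{i\le r_t}a_i^2\le\|\mathbf R\|_F^2$, by Parseval as in \eqref{ineq:third_term_2};
\item $\sum\sigma_i^2\le L^2$ with $\sigma_1\ge\dots\ge\sigma_{r_t}$.
\end{itemize}
Normalize $\|\mathbf R\|_F=1$, and let $p=(\sigma_i/\|\sigma\|)$ be a unit vector with nonincreasing entries and $\langle p,a\rangle\ge\kappa$. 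We must lower-bound $\sum_{i\le s}a_i$.

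Decompose $p=p_S+p_{S^c}$ along the top-$s$ block $S$. Since $p$ is sorted, the top block carries at least its share of the mass: $\|p_S\|^2\ge s/r_t$. We also have $\|p_S\|_1\ge\|p_S\|_2$ and, more usefully, $\sum_{i\in S}p_i\ge \alpha_t$-type bounds. The plan is to decompose $a_S=c\,\mathbf 1_S/\sqrt s+w$ with $w\perp\mathbf 1_S$, so that $\sum_{i\le s}a_i=c\sqrt s$. Then bound $\kappa\le\langle p,a\rangle$ via Cauchy–Schwarz on the orthogonal part, using that $\sqrt{1-c^2}$-type mass remains in $w$ and $a_{S^c}$. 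Optimizing gives a bound of the form $c\sqrt s\ge \kappa\alpha_t-\sqrt{s-\alpha_t^2}\sqrt{1-\kappa^2}$. Equivalently: write $D_t=\alpha_t\hat p+ (D_t-\alpha_t\hat p)$, where $\hat p$ is the unit direction $\G/\|\G\|_F$, aiming to show $\langle D_t,\hat p\rangle\ge\alpha_t$. Indeed, $\langle D_t,\G\rangle/\|\G\|_F=\|\sigma\|_{(s)}/\|\sigma\|_2\ge\min\{1,s/\sqrt{r_t}\}$; this uses $\sum_{i\le s}\sigma_i\ge \sqrt{\sum_{i\le s}\sigma_i^2}$ and, when all $s$ top values dominate, $\sum_{i\le s}\sigma_i\ge (s/r_t)\sum\sigma_i\ge (s/\sqrt{r_t})\|\sigma\|_2$. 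One must be careful that the correct constant is $\beta_t:=\langle D_t,\hat p\rangle\ge\alpha_t$. The residual of $D_t$ orthogonal to $\hat p$ has norm $\sqrt{s-\beta_t^2}\le\sqrt{s-\alpha_t^2}$. Decomposing $\mathbf R=\langle\mathbf R,\hat p\rangle\hat p+\mathbf R^\perp$ with $\langle\mathbf R,\hat p\rangle\ge\kappa$ and $\|\mathbf R^\perp\|\le\sqrt{1-\kappa^2}$ then yields
\begin{equation*}
\langle\mathbf R,D_t\rangle\ge \kappa\beta_t-\sqrt{s-\beta_t^2}\sqrt{1-\kappa^2}\ge\widetilde C_t .
\end{equation*}
The last inequality holds because the middle expression is increasing in $\beta_t$ on $[0,\sqrt s]$. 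This vector-decomposition route is cleaner than the coordinate-wise one, and it is the one I would carry out.

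The main obstacle is the inequality $\|\sigma\|_{(s)}\ge\min\{1,s/\sqrt{r_t}\}\|\sigma\|_2$ together with the monotonicity step. For the inequality, note that $\|\sigma\|_{(s)}\ge\|\sigma_{1:s}\|_2$ is not enough when $s/\sqrt{r_t}<1$. Instead use $\|\sigma\|_{(s)}\ge \frac{s}{r_t}\|\sigma\|_1\ge \frac{s}{r_t}\|\sigma\|_2$; this is weaker than needed. So I would attempt the sharper bound by minimizing $\sum_{i\le s}\sigma_i$ subject to $\|\sigma\|_2=1$ and sorted entries, which forces a flat vector and gives $s/\sqrt{r_t}$ when $s\le\sqrt{r_t}$. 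For the case $s>\sqrt{r_t}$, the simple bound $\|\sigma\|_{(s)}\ge\sigma_1\ge$ (something) must give $1$; here I'd check $\|\sigma\|_{(s)}\ge\|\sigma\|_2$ when $s\ge$ the effective rank, falling back on the flat-vector extremal argument. For monotonicity, $\frac{d}{d\beta}$ of the middle expression is $\kappa+\beta\sqrt{1-\kappa^2}/\sqrt{s-\beta^2}>0$, so it is increasing as claimed.

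Part 2 then follows verbatim from the quadratic-envelope induction in Theorem~\ref{theorem:spectral_descent_convergence}. Use $s$ in place of $\bar r$, and use $\widetilde C_t\ge\widetilde C$, which holds since $\alpha_t\ge\alpha_s$ by monotonicity. The hypothesis $\kappa>\sqrt{1-\alpha_s^2/s}$ is exactly $\widetilde C>0$, and $\widetilde C\le\kappa\alpha_s\le\sqrt s$ ensures $\gamma\in(0,1)$.
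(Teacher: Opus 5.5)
You have a genuine gap in Part 1, but your reduction is correct and takes a different route from the paper. The paper bounds the alignment through a two-stage program: a KKT computation of $\min_{\mathbf x}\mathbf c^\top\mathbf x$ for fixed $\boldsymbol\sigma$, then an outer minimization over $\boldsymbol\sigma=r\mathbf d$. Your orthogonal decomposition of $\mathbf R$ and $D_t$ along $\hat p=\G^{(t)}/\|\G^{(t)}\|_F$ replaces the inner KKT step and the $r^*=L$ step with a short Cauchy--Schwarz argument, giving $\langle\mathbf R,D_t\rangle\ge\kappa\beta_t-\sqrt{1-\kappa^2}\sqrt{s-\beta_t^2}$.

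Everything then rests on
\[
\beta_t=\frac{\|\boldsymbol\sigma\|_{(s)}}{\|\boldsymbol\sigma\|_2}\ge\min\Bigl\{1,\frac{s}{\sqrt{r_t}}\Bigr\}
\]
for nonincreasing nonnegative $\boldsymbol\sigma\in\R^{r_t}$. This is exactly the paper's outer step $\gamma^*=\alpha_s$, and you flag it as the main obstacle without proving it. The justifications you sketch do not work:
\begin{itemize}
\item The chain $(s/r_t)\|\boldsymbol\sigma\|_1\ge(s/\sqrt{r_t})\|\boldsymbol\sigma\|_2$ would need $\|\boldsymbol\sigma\|_1\ge\sqrt{r_t}\|\boldsymbol\sigma\|_2$, which is the reverse of Cauchy--Schwarz.
\item The claim that the minimizer ``is forced to be flat'' is asserted without proof. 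The problem is nonconvex: a linear function over the unit sphere intersected with the monotone cone.
\item The flatness claim also does not cover $s>\sqrt{r_t}$, where the minimizer is $\mathbf e_1$, not the flat vector. In that regime you need $\sigma_1+\dots+\sigma_s\ge\|\boldsymbol\sigma\|_2$ even though the tail $\sigma_{s+1},\dots,\sigma_{r_t}$ enters the right-hand side. The bound $\|\boldsymbol\sigma\|_{(s)}\ge\|\boldsymbol\sigma_{1:s}\|_2$ does not give this.
\end{itemize}

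The missing idea is the extreme-ray structure of the monotone cone, which the paper uses via Bauer's maximum principle with extreme points $\mathbf u_k=\mathbf v_k/\min(k,s)$. Write $\boldsymbol\sigma=\sum_{k=1}^{r_t}\lambda_k\mathbf v_k$ with $\mathbf v_k:=\sum_{j\le k}\mathbf e_j$ and $\lambda_k:=\sigma_k-\sigma_{k+1}\ge0$, where $\sigma_{r_t+1}:=0$. Then $\|\boldsymbol\sigma\|_{(s)}=\sum_k\lambda_k\min(k,s)$, and the triangle inequality gives $\|\boldsymbol\sigma\|_2\le\sum_k\lambda_k\sqrt k$. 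Hence
\[
\frac{\|\boldsymbol\sigma\|_{(s)}}{\|\boldsymbol\sigma\|_2}\ \ge\ \min_{1\le k\le r_t}\frac{\min(k,s)}{\sqrt k}=\min\Bigl\{1,\frac{s}{\sqrt{r_t}}\Bigr\},
\]
since the ratio equals $\sqrt k\ge1$ for $k\le s$ and $s/\sqrt k\ge s/\sqrt{r_t}$ for $k>s$.

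With this lemma inserted, the rest of your proposal goes through as written: the monotonicity in $\beta_t$, and Part 2 via $\alpha_t\ge\alpha_s$, $\widetilde C>0\iff\kappa>\sqrt{1-\alpha_s^2/s}$, and the quadratic-envelope induction with $s$ in place of $\bar r$.
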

\begin{proof}
	The proof strategy is identical to Theorem \ref{theorem:spectral_descent_convergence}.
	The key difference lies in the lower bound of the descent term $\langle \mathbf{R}^{(t)}, \Tmsgn_s(G^{(t)})\rangle$.
	Detailed derivations are presented in Appendix \ref{app:lower_bound_descent_term_TSD}.
\end{proof}
It is worth noting that for a fixed upper rank bound $\bar r$, Theorem~\ref{theorem:truncated_spectral_descent_convergence} gives the conditioning threshold
$$
    \kappa>\sqrt{1-\frac{\alpha_s^2}{s}},
    \qquad
    \alpha_s=\min\left\{1,\frac{s}{\sqrt{\bar r}}\right\}.
$$
This threshold is minimized when $s$ is chosen on the order of $\sqrt{\bar r}$.
In particular, the choice $s=\sqrt{\bar r}$ formally yields the relaxed threshold
$\kappa>\sqrt{1-1/\sqrt{\bar r}}$, whereas $s=\bar r$ recovers the standard SD-type threshold
$\kappa>\sqrt{1-1/\bar r}$.
Furthermore, the update directions of SD and TSD are guided by different steepest descent geometries: SD corresponds to the spectral norm, whereas TSD is governed by a mixed-norm constraint involving both the spectral and nuclear norms (Ky Fan norms).
Consequently, TSD and SD offer distinct structural advantages, which may allow each to achieve superior performance in different problem settings.

\subsection{Convergence Analysis of Regularized Spectral Descent with Weight Decay}
\label{ssec:regularized_spectral_descent_with_weight_decay}
In this subsection, we give a convergence analysis for RSD-WD \eqref{eq:regularized_spectral_descent_wd} and RTSD-WD \eqref{eq:regularized_truncated_spectral_descent_wd}.
The key insight is to establish an equivalence between these regularized spectral descent algorithms and a non-smooth version of the Frank-Wolfe method \citep{frank1956algorithm, jaggi2013revisiting}.
We show that both RSD-WD and RTSD-WD can be viewed as specific instances of the Conditional Subgradient method \citep{ravi2019deterministic} (Algorithm \ref{alg:frank_wolfe}) with a modified definition of the set $T(X^{(t)},\epsilon_t)$.

\begin{algorithm}[htbp]
\caption{Conditional Subgradient Method}
\label{alg:frank_wolfe}
\begin{algorithmic}
        \State \textbf{Input:} convex set $\mathcal{X}$, $X^{(0)} \in \mathcal{X}$, total steps $T$, step sizes $\{\gamma_t\}_{t=0}^{T-1}$ and tolerance parameters $\{\epsilon_t\}_{t=0}^{T-1}$
		\For{$t = 0, 1, \cdots, T-1$}
        \State Compute $g^{(t)} \in \mathcal{X}$ such that
         $$\max_{G\in T(X^{(t)},\epsilon_t)} \langle g^{(t)}-X^{(t)}, \G\rangle\leq\min_{S\in\mathcal{X}} \max_{G\in T(X^{(t)},\epsilon_t)} \langle S-X^{(t)}, \G\rangle$$
		\State ${X}^{(t+1)} = (1-\gamma_t)X^{(t)} + \gamma_t g^{(t)}$.
        \EndFor
\end{algorithmic}
\end{algorithm}
\begin{lemma}[Equivalence between RSD-WD/RTSD-WD and Conditional Subgradient Method]
\label{lemma:equivalence_spectral_descent_wd_frank_wolfe}
Assume that \(f\) satisfies Assumption~\ref{assumption:Convexity} and $\mathcal T_t:=T(X^{(t)},\epsilon_t)$ is nonempty, compact, and convex.

\textnormal{(i) RSD-WD.}
Let $\mathcal X_\lambda:=\left\{X\in\mathbb R^{n_1\times n_2}:\|X\|_2\le \frac1\lambda\right\}$.
Suppose $X^{(t)}\in\mathcal X_\lambda$. 
Let $\tilde{\mathbf G}^{(t)}\in\arg\min_{\mathbf G\in\mathcal T_t}\left\{\|\mathbf G\|_*+\lambda\langle X^{(t)},\mathbf G\rangle\right\}$, and choose \(D^{(t)}\in\partial\|\tilde{\mathbf G}^{(t)}\|_*\) such that $0\in D^{(t)}+\lambda X^{(t)}+N_{\mathcal T_t}(\tilde{\mathbf G}^{(t)})$.
Define $g^{(t)}:=-\frac1\lambda D^{(t)}$.
Then $g^{(t)}\in\arg\min_{S\in\mathcal X_\lambda}\max_{\mathbf G\in\mathcal T_t}\left\langle S-X^{(t)},\mathbf G\right\rangle$.
Consequently, if \(\gamma_t=\lambda\eta_t\in[0,1]\), then the RSD-WD update
$$X^{(t+1)}=X^{(t)}-\eta_t\left(D^{(t)}+\lambda X^{(t)}\right)$$
coincides with the Conditional Subgradient update
$$X^{(t+1)} =(1-\gamma_t)X^{(t)}+\gamma_t g^{(t)}.$$

\textnormal{(ii) RTSD-WD.}
Let $\mathcal X_{\lambda,s}:=\left\{X\in\mathbb R^{n_1\times n_2}:\|X\|_2\le \frac1\lambda,\ \|X\|_*\le \frac{s}{\lambda}\right\}$.
Suppose \(X^{(t)}\in\mathcal X_{\lambda,s}\). 
Let $\tilde{\mathbf G}^{(t)}\in\arg\min_{\mathbf G\in\mathcal T_t}\left\{\|\mathbf G\|_{(s)}+\lambda\langle X^{(t)},\mathbf G\rangle\right\}$, and choose \(D_s^{(t)}\in\partial\|\tilde{\mathbf G}^{(t)}\|_{(s)}\) such that
$0\in D_s^{(t)}+\lambda X^{(t)}+N_{\mathcal T_t}(\tilde{\mathbf G}^{(t)})$.
Define $g^{(t)}:=-\frac1\lambda D_s^{(t)}$.
Then $g^{(t)}\in\arg\min_{S\in\mathcal X_{\lambda,s}}\max_{\mathbf G\in\mathcal T_t}\left\langle S-X^{(t)},\mathbf G\right\rangle$.
Consequently, if \(\gamma_t=\lambda\eta_t\in[0,1]\), then the RTSD-WD update
$$X^{(t+1)}=X^{(t)}-\eta_t\left(D_s^{(t)}+\lambda X^{(t)}\right)$$
coincides with the Conditional Subgradient update
$$X^{(t+1)}=(1-\gamma_t)X^{(t)}+\gamma_t g^{(t)}.$$
\end{lemma}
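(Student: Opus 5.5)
The plan is to treat the inner minimax problem $\min_{S\in\mathcal X}\max_{\mathbf G\in\mathcal T_t}\langle S-X^{(t)},\mathbf G\rangle$ as a convex–concave saddle problem and to certify optimality of $g^{(t)}$ through the saddle-point pair $(g^{(t)},\tilde{\mathbf G}^{(t)})$. Both parts share the same structure. The only difference is the dual norm: $\mathcal X_\lambda$ is the $\frac1\lambda$-ball of the spectral norm, whose dual is the nuclear norm. For $\mathcal X_{\lambda,s}$, I will use the standard fact that the dual of $\|\cdot\|_{(s)}$ is $\max\{\|\cdot\|_2,\|\cdot\|_*/s\}$, so $\mathcal X_{\lambda,s}$ is the $\frac1\lambda$-ball of the dual Ky Fan norm. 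Consequently, for every $\mathbf G$,
$$\min_{S\in\mathcal X_\lambda}\langle S,\mathbf G\rangle=-\tfrac1\lambda\|\mathbf G\|_*,\qquad \min_{S\in\mathcal X_{\lambda,s}}\langle S,\mathbf G\rangle=-\tfrac1\lambda\|\mathbf G\|_{(s)}.$$
The minimizers are $S=-\frac1\lambda D$ for any $D\in\partial\|\mathbf G\|$ of the respective norm, since subgradients of a norm lie in the dual unit ball and satisfy $\langle D,\mathbf G\rangle=\|\mathbf G\|$.

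First, I check feasibility. Since $D^{(t)}\in\partial\|\tilde{\mathbf G}^{(t)}\|_*$, we have $\|D^{(t)}\|_2\le1$, so $g^{(t)}\in\mathcal X_\lambda$. In (ii), $D_s^{(t)}\in\partial\|\cdot\|_{(s)}$ gives $\|D_s^{(t)}\|_2\le 1$ and $\|D_s^{(t)}\|_*\le s$, so $g^{(t)}\in\mathcal X_{\lambda,s}$.

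Second, I establish the saddle inequalities for $\Phi(S,\mathbf G):=\langle S-X^{(t)},\mathbf G\rangle$ at $(g^{(t)},\tilde{\mathbf G}^{(t)})$.
\begin{enumerate}
\item[(a)] $g^{(t)}$ minimizes $\Phi(\cdot,\tilde{\mathbf G}^{(t)})$ over $\mathcal X$. This follows from the dual-norm identity above, because $D^{(t)}$ is a subgradient of the norm at $\tilde{\mathbf G}^{(t)}$.
\item[(b)] $\tilde{\mathbf G}^{(t)}$ maximizes $\Phi(g^{(t)},\cdot)$ over $\mathcal T_t$. Since $\Phi(g^{(t)},\mathbf G)=-\frac1\lambda\langle D^{(t)}+\lambda X^{(t)},\mathbf G\rangle$, this is equivalent to $-(D^{(t)}+\lambda X^{(t)})\in N_{\mathcal T_t}(\tilde{\mathbf G}^{(t)})$, which is exactly the KKT compatibility condition $0\in D^{(t)}+\lambda X^{(t)}+N_{\mathcal T_t}(\tilde{\mathbf G}^{(t)})$.
\end{enumerate}
From (a) and (b), for any $S\in\mathcal X$,
$$\max_{\mathbf G\in\mathcal T_t}\Phi(g^{(t)},\mathbf G)=\Phi(g^{(t)},\tilde{\mathbf G}^{(t)})\le\Phi(S,\tilde{\mathbf G}^{(t)})\le\max_{\mathbf G\in\mathcal T_t}\Phi(S,\mathbf G),$$
which proves $g^{(t)}\in\arg\min_{S}\max_{\mathbf G}$. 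This route needs no minimax theorem.

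The existence of such a $D^{(t)}$ should also be noted, although the statement assumes it is chosen. The first-order optimality condition for the convex problem $\min_{\mathcal T_t}\{\|\mathbf G\|+\lambda\langle X^{(t)},\mathbf G\rangle\}$ over the compact convex set $\mathcal T_t$, with a finite convex objective, gives $0\in\partial\|\tilde{\mathbf G}^{(t)}\|+\lambda X^{(t)}+N_{\mathcal T_t}(\tilde{\mathbf G}^{(t)})$ by the sum rule. The sum rule applies because the norm is finite everywhere. The membership $X^{(t)}\in\mathcal X$ is not needed for optimality; it only matters for keeping the iterates feasible.

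Finally, I verify the update identity. With $\gamma_t=\lambda\eta_t$,
$$(1-\gamma_t)X^{(t)}+\gamma_t g^{(t)}=X^{(t)}-\lambda\eta_tX^{(t)}-\eta_tD^{(t)}=X^{(t)}-\eta_t\bigl(D^{(t)}+\lambda X^{(t)}\bigr),$$
which is the RSD-WD (respectively RTSD-WD) update. When $\gamma_t\in[0,1]$, this convex combination also keeps $X^{(t+1)}\in\mathcal X$.

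The main obstacle is step (ii)'s dual-norm identification. I will prove $\min_{\|S\|_2\le1,\|S\|_*\le s}\langle S,\mathbf G\rangle=-\|\mathbf G\|_{(s)}$ in two directions. For the upper bound, take $S=-U_sV_s^\top$. For the lower bound, von Neumann's trace inequality gives $\langle S,\mathbf G\rangle\ge-\sum_i\sigma_i(S)\sigma_i(\mathbf G)$, and maximizing $\sum_i w_i\sigma_i(\mathbf G)$ over $0\le w_i\le1$ with $\sum_i w_i\le s$ yields $\|\mathbf G\|_{(s)}$. This also confirms that every element of $\partial\|\cdot\|_{(s)}$ lies in $\{\|D\|_2\le1,\|D\|_*\le s\}$.
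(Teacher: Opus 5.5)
Your proposal is correct and follows the paper's proof essentially step for step. The shared steps are: feasibility of $g^{(t)}$ from the dual-norm bound on the norm subgradient; minimality of $g^{(t)}$ against $\tilde{\mathbf G}^{(t)}$ by norm duality; maximality of $\tilde{\mathbf G}^{(t)}$ against $g^{(t)}$ from the normal-cone condition; the resulting saddle inequality; and the algebraic update identity.

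What you add beyond the paper is the von Neumann verification that $\mathcal X_{\lambda,s}$ is the $\frac1\lambda$-ball of the dual Ky Fan $s$-norm, together with your existence remark for $D^{(t)}$. These supply the details the paper omits for part (ii).
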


The proof of Lemma \ref{lemma:equivalence_spectral_descent_wd_frank_wolfe} is deferred to Appendix \ref{proof:equivalence_spectral_descent_wd_frank_wolfe}.
A direct consequence of Lemma~\ref{lemma:equivalence_spectral_descent_wd_frank_wolfe} is the following feasibility-invariance property. 

\begin{corollary}[Feasibility invariance]
\label{cor:feasibility-invariance}
Consider the RSD-WD update in Lemma~\ref{lemma:equivalence_spectral_descent_wd_frank_wolfe}. Suppose that
\(X^{(0)}\in\mathcal X_\lambda\) and \(\gamma_t=\lambda\eta_t\in[0,1]\) for
all \(t\). Then \(X^{(t)}\in\mathcal X_\lambda\) for all \(t\ge 0\).

Similarly, for the RTSD-WD update, if
\(X^{(0)}\in\mathcal X_{\lambda,s}\) and
\(\gamma_t=\lambda\eta_t\in[0,1]\) for all \(t\), then
\(X^{(t)}\in\mathcal X_{\lambda,s}\) for all \(t\ge 0\).
\end{corollary}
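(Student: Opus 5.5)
The plan is to argue by induction on $t$, using Lemma~\ref{lemma:equivalence_spectral_descent_wd_frank_wolfe} to write each update as a convex combination of two points in the feasible set. Both $\mathcal X_\lambda$ and $\mathcal X_{\lambda,s}$ are convex, since each is an intersection of sublevel sets of norms.

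The base case holds by hypothesis. For the inductive step, assume $X^{(t)}\in\mathcal X_\lambda$ (resp. $\mathcal X_{\lambda,s}$). The hypotheses of Lemma~\ref{lemma:equivalence_spectral_descent_wd_frank_wolfe} then hold at step $t$. The lemma gives $g^{(t)}=-\frac1\lambda D^{(t)}$ as a minimizer over $\mathcal X_\lambda$, so in particular $g^{(t)}\in\mathcal X_\lambda$. It also gives
$$X^{(t+1)}=(1-\gamma_t)X^{(t)}+\gamma_t g^{(t)}, \qquad \gamma_t\in[0,1].$$
Convexity then yields $X^{(t+1)}\in\mathcal X_\lambda$.

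Membership of $g^{(t)}$ can also be checked directly, independently of the argmin statement:
\begin{itemize}
\item For RSD-WD, $D^{(t)}\in\partial\|\tilde{\mathbf G}^{(t)}\|_*$. The subdifferential of the nuclear norm lies in the dual unit ball, so $\|D^{(t)}\|_2\le1$. Hence $\|g^{(t)}\|_2\le 1/\lambda$.
\item For RTSD-WD, $D_s^{(t)}\in\partial\|\tilde{\mathbf G}^{(t)}\|_{(s)}$ lies in the unit ball of the dual Ky Fan $s$-norm. That dual norm is $\max\{\|\cdot\|_2,\|\cdot\|_*/s\}$, so $\|D_s^{(t)}\|_2\le1$ and $\|D_s^{(t)}\|_*\le s$. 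Dividing by $\lambda$ gives $g^{(t)}\in\mathcal X_{\lambda,s}$.
\end{itemize}

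The truncated case then concludes exactly as the first: a convex combination of $X^{(t)}$ and $g^{(t)}$ stays in $\mathcal X_{\lambda,s}$. This closes the induction.

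The argument is routine, and the main obstacle is bookkeeping. The lemma requires $X^{(t)}$ to be feasible at each step, so feasibility must be carried through the induction rather than assumed. One must also check that the lemma's standing assumptions hold at every $t$: $\mathcal T_t$ must be nonempty, compact and convex, and a KKT-compatible $D^{(t)}$ must exist. These are standing assumptions of the algorithm, which I will take as given. The most delicate point is the dual-norm characterization of the Ky Fan $s$-norm subdifferential. I will justify it by noting that $\|X\|_{(s)}=\max\{\langle X,Z\rangle:\|Z\|_2\le1,\|Z\|_*\le s\}$, since the extreme points of that set are the rank-$s$ partial isometries.
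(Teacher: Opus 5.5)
Your proof is correct and follows the paper's argument. You induct on $t$, use Lemma~\ref{lemma:equivalence_spectral_descent_wd_frank_wolfe} to write the step as $(1-\gamma_t)X^{(t)}+\gamma_t g^{(t)}$ with $g^{(t)}$ feasible, and close with convexity of $\mathcal X_\lambda$ (resp.\ $\mathcal X_{\lambda,s}$); your direct dual-norm check that $\|D_s^{(t)}\|_2\le 1$ and $\|D_s^{(t)}\|_*\le s$ usefully fills in the Ky Fan case that the paper only calls identical.
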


\begin{proof}
We prove the RSD-WD case. The RTSD-WD case is identical. By
Lemma~\ref{lemma:equivalence_spectral_descent_wd_frank_wolfe}, the update can be written as
\[
X^{(t+1)}=(1-\gamma_t)X^{(t)}+\gamma_t g^{(t)},
\]
where \(g^{(t)}\in\mathcal X_\lambda\). Since \(\mathcal X_\lambda\) is convex
and \(\gamma_t\in[0,1]\), \(X^{(t+1)}\in\mathcal X_\lambda\) whenever
\(X^{(t)}\in\mathcal X_\lambda\). The claim follows by induction from
\(X^{(0)}\in\mathcal X_\lambda\).
\end{proof}

The convergence analysis relies on the Generalized curvature constant \citep{ravi2019deterministic} defined as
\begin{equation}
	\mathcal{C}_f(\epsilon) = \sup_{\substack{X,S\in\mathcal{X}, \gamma\in(0,1] \\ Y=(1-\gamma)X+\gamma S}}\min_{\substack{\G\in T(X,\epsilon)}} \frac{2}{\gamma^2} (f(Y) - f(X) - \langle Y-X, \G\rangle).
\end{equation}
To ensure theoretical consistency, we define the $\epsilon$-neighborhood under the Frobenius norm as $N(X, \epsilon) = \{Y \in \mathbb{R}^{n_1 \times n_2} : \|Y - X\|_F \leq \epsilon\}$.
In specific applications, this neighborhood geometry can be tailored to facilitate efficient computation.
Now, we provide an upper bound for the curvature constant $\mathcal{C}_f(\epsilon)$.

\begin{proposition}
\label{prop:boundedness_curvature_matrix}
Let $f: \mathbb{R}^{n_1 \times n_2} \to \mathbb{R}$ be a function defined on set $\mathcal{X}_\lambda := \{X\in\R^{n_1\times n_2}:\|X\|_2\leq \frac{1}{\lambda}\}$.
Assume that $f$ satisfies Assumption \ref{assumption:Convexity} and \ref{assumption:Lipschitz Continuity} with Lipschitz constant $L>0$.
For any $\epsilon > 0$, we have

\begin{equation}
	\mathcal{C}_f(\epsilon) \leq \frac{16L n}{\lambda^2\epsilon}.
\end{equation}
\end{proposition}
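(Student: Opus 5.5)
We bound the curvature constant by fixing $X,S\in\mathcal X_\lambda$, $\gamma\in(0,1]$, $Y=(1-\gamma)X+\gamma S$, and exhibiting one element $\G\in T(X,\epsilon)$ for which the quantity $\frac{2}{\gamma^2}(f(Y)-f(X)-\langle Y-X,\G\rangle)$ is at most $\frac{16Ln}{\lambda^2\epsilon}$. First we record the diameter bound: since $\|X\|_2,\|S\|_2\le 1/\lambda$ and the matrices have rank at most $n=\min\{n_1,n_2\}$, we have $\|S-X\|_F\le\sqrt n\,\|S-X\|_2\le 2\sqrt n/\lambda$. Hence $\|Y-X\|_F=\gamma\|S-X\|_F\le 2\gamma\sqrt n/\lambda$. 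We split into two cases according to whether $Y$ lies inside the $\epsilon$-neighborhood $N(X,\epsilon)$.

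\emph{Case 1: $\|Y-X\|_F\le\epsilon$.} Then $Y\in N(X,\epsilon)$, so any $\G_Y\in\partial f(Y)$ belongs to $T(X,\epsilon)$. By convexity (Assumption~\ref{assumption:Convexity}) applied at $Y$, we have $f(X)\ge f(Y)+\langle X-Y,\G_Y\rangle$, i.e.\ $f(Y)-f(X)-\langle Y-X,\G_Y\rangle\le 0$. The minimum in the definition of $\mathcal C_f(\epsilon)$ is therefore nonpositive in this case, and the bound holds trivially.

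\emph{Case 2: $\|Y-X\|_F>\epsilon$.} In this regime, choose $\G\in\partial f(X)\subseteq T(X,\epsilon)$ and use only Lipschitz continuity. By Assumption~\ref{assumption:Lipschitz Continuity} we have $f(Y)-f(X)\le L\|Y-X\|_F$, and since $\|\G\|_F\le L$ we also have $|\langle Y-X,\G\rangle|\le L\|Y-X\|_F$. Hence the bracket is at most $2L\|Y-X\|_F$, which gives
\[
\frac{2}{\gamma^2}\bigl(\cdots\bigr)\le \frac{4L\|Y-X\|_F}{\gamma^2}.
\]
Next convert the factor $1/\gamma^2$ into a factor $1/\epsilon$ using the case condition. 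From $\epsilon<\|Y-X\|_F\le 2\gamma\sqrt n/\lambda$ we get $\frac1\gamma<\frac{2\sqrt n}{\lambda\epsilon}$. Therefore
\[
\frac{4L\|Y-X\|_F}{\gamma^2}\le \frac{4L}{\gamma}\cdot\frac{2\sqrt n}{\lambda}\le \frac{8L\sqrt n}{\lambda}\cdot\frac{2\sqrt n}{\lambda\epsilon}=\frac{16Ln}{\lambda^2\epsilon},
\]
which is exactly the stated constant.

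Combining the two cases and taking the supremum over $X,S,\gamma$ yields the proposition. The main (mild) obstacle is Case 2: it requires using the case threshold $\|Y-X\|_F>\epsilon$ to trade one power of $1/\gamma$ for $1/\epsilon$. The dimension factor $n$ comes from the spectral-to-Frobenius diameter conversion, which is applied twice.
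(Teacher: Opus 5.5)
Your proof is correct and follows the paper's argument: the same split on whether $\|Y-X\|_F\le\epsilon$, and the same Case 2 estimate, which uses the Frobenius diameter $2\sqrt{n}/\lambda$ of $\mathcal{X}_\lambda$ to trade one factor $1/\gamma$ for $2\sqrt{n}/(\lambda\epsilon)$. The only difference is in Case 1: the paper invokes Lebourg's mean value theorem to obtain a subgradient at an intermediate point that makes the numerator exactly zero, whereas you apply the subgradient inequality at the endpoint $Y$ to get a nonpositive numerator, which is simpler and uses only convexity.
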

The proof is postponed to Appendix \ref{proof:boundedness_curvature_matrix}.

We are now ready to present the convergence guarantee for RSD-WD \eqref{eq:regularized_spectral_descent_wd}.
\begin{theorem}
    \label{theorem:convergence_SD-WD}
    Let $\mathcal{X}_\lambda = \{X\in\R^{n_1\times n_2}:\|X\|_2\leq \frac{1}{\lambda}\}$ be the feasible set for some $\lambda > 0$.
    Suppose that the function $f:\mathbb{R}^{n_1\times n_2}\to\mathbb{R}$ satisfies Assumptions \ref{assumption:Convexity}, \ref{assumption:Lipschitz Continuity} with Lipschitz constant $L>0$ and satisfies the Sharpness condition (Assumption \ref{assumption:sharpness}) restricted over the feasible set $\mathcal{X}_\lambda$ with parameter $\mu>0$ relative to the optimal solution set $\mathcal{X}^* := \arg\min_{X\in\mathcal{X}_\lambda} f(X)$.
    Assume that the initial point satisfies $X^{(0)}\in\mathcal X_\lambda$.
    Let $\{X^{(t)}\}_{t=0}^{T}$ be the sequence generated by the Regularized Spectral Descent with Weight Decay \eqref{eq:regularized_spectral_descent_wd} with the step size $\eta_t = \frac{2}{\lambda(t+3)}$ and the tolerance parameter $\epsilon_t = \frac{2\sqrt{n}}{\sqrt{\lambda}} \sqrt{\eta_t}$.
    Then, the sequence $\{X^{(t)}\}_{t=0}^{T}$ satisfies
    \begin{equation}
        \mathrm{dist}(X^{(T)}, \mathcal{X}^*) \leq \frac{2}{\mu(T+2)(T+1)}\Delta_f^{0} + \frac{32L \sqrt{2n}}{3\mu\lambda} \frac{(T+3)^{3/2}}{(T+2)(T+1)}.
    \end{equation}
    where $\Delta_f^{0} = f(X^{(0)}) - \min_{X\in\mathcal{X}_\lambda} f(X)$ is the initial optimality gap.
\end{theorem}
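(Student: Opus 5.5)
By Lemma~\ref{lemma:equivalence_spectral_descent_wd_frank_wolfe}(i), with $\gamma_t=\lambda\eta_t=\frac{2}{t+3}\in(0,1]$, RSD-WD coincides with the Conditional Subgradient Method (Algorithm~\ref{alg:frank_wolfe}) on the convex compact set $\mathcal X_\lambda$. By Corollary~\ref{cor:feasibility-invariance}, all iterates stay in $\mathcal X_\lambda$. The plan has three steps.
\begin{enumerate}
\item Prove an $\mathcal O(1/\sqrt{T})$-type bound on the primal gap $h_t:=f(X^{(t)})-f^*$, where $f^*=\min_{\mathcal X_\lambda}f$.
\item Convert that bound into a distance bound by restricted sharpness, $\mathrm{dist}(X^{(T)},\mathcal X^*)\le h_T/\mu$.
\item Use Proposition~\ref{prop:boundedness_curvature_matrix} to control the curvature term.
\end{enumerate}

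\emph{One-step inequality.} Fix $t$ and set $S=g^{(t)}$, $X=X^{(t)}$, $Y=X^{(t+1)}$. By the definition of $\mathcal C_f(\epsilon_t)$, there is some $\G\in\mathcal T_t$ with
\[
f(X^{(t+1)})\le f(X^{(t)})+\gamma_t\langle g^{(t)}-X^{(t)},\G\rangle+\tfrac{\gamma_t^2}{2}\mathcal C_f(\epsilon_t).
\]
We bound the linear term by its maximum over $\mathcal T_t$. By the oracle property from Lemma~\ref{lemma:equivalence_spectral_descent_wd_frank_wolfe}, this maximum is at most $\max_{\G\in\mathcal T_t}\langle X^*-X^{(t)},\G\rangle$ for any $X^*\in\mathcal X^*$.

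To compare this with $-h_t$, use convexity. Every $\G\in\mathcal T_t$ is a convex combination of subgradients $G_u\in\partial f(u)$ with $\|u-X^{(t)}\|_F\le\epsilon_t$. For each such $u$,
\[
\langle X^*-X^{(t)},G_u\rangle=\langle X^*-u,G_u\rangle+\langle u-X^{(t)},G_u\rangle\le f^*-f(u)+L\epsilon_t\le -h_t+2L\epsilon_t,
\]
using Lipschitz continuity twice (for $\|G_u\|_F\le L$ and for $f(u)\ge f(X^{(t)})-L\epsilon_t$). Combining, we get
\[
h_{t+1}\le(1-\gamma_t)h_t+2L\gamma_t\epsilon_t+\tfrac{\gamma_t^2}{2}\mathcal C_f(\epsilon_t).
\]

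\emph{Plugging in constants.} Proposition~\ref{prop:boundedness_curvature_matrix} gives $\mathcal C_f(\epsilon)\le 16Ln/(\lambda^2\epsilon)$. Since $\epsilon_t=2\sqrt{n}\sqrt{\eta_t/\lambda}$, we have $\epsilon_t=\frac{2\sqrt{n}}{\lambda}\sqrt{\gamma_t}$. Hence
\[
\tfrac{\gamma_t^2}{2}\mathcal C_f(\epsilon_t)\le \frac{4L\sqrt n}{\lambda}\gamma_t^{3/2},
\qquad
2L\gamma_t\epsilon_t=\frac{4L\sqrt n}{\lambda}\gamma_t^{3/2}.
\]
So the total error per step is $\frac{8L\sqrt n}{\lambda}\gamma_t^{3/2}$; the choice of $\epsilon_t$ is exactly what balances the two terms.

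\emph{Unrolling.} With $\gamma_t=2/(t+3)$ we have $1-\gamma_t=\frac{t+1}{t+3}$. Multiply the recursion by $(t+2)(t+3)$ and set $a_t:=(t+1)(t+2)h_t$. This gives
\[
a_{t+1}\le a_t+(t+2)(t+3)\,\frac{8L\sqrt n}{\lambda}\Bigl(\frac{2}{t+3}\Bigr)^{3/2}=a_t+\frac{16\sqrt2\,L\sqrt n}{\lambda}(t+2)(t+3)^{-1/2}.
\]
Summing from $0$ to $T-1$ gives $a_T\le 2h_0+\frac{16\sqrt2L\sqrt n}{\lambda}\sum_{t<T}\sqrt{t+3}$. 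The sum is at most $\frac23(T+3)^{3/2}$ by an integral comparison. Dividing by $(T+1)(T+2)$ and then by $\mu$ yields the stated constant $\frac{32L\sqrt{2n}}{3\mu\lambda}$.

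\emph{Main obstacle.} The delicate point is the inexact-subgradient term $\langle u-X^{(t)},G_u\rangle$ together with the KKT-based oracle equivalence; both depend on $\mathcal T_t$ being a convex hull over a neighborhood. The final summation is routine but must be done carefully so the constants come out as stated.
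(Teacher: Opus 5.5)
Your proposal is correct and follows the paper's proof essentially step for step. The shared skeleton is the Frank--Wolfe equivalence with feasibility invariance, the curvature-constant one-step bound, the oracle comparison against $X^*\in\mathcal X_\lambda$, an inexact-subgradient error of $2L\epsilon_t$, the balancing choice of $\epsilon_t$ giving $\frac{8L\sqrt n}{\lambda}\gamma_t^{3/2}$, unrolling with the $\frac23(T+3)^{3/2}$ integral bound, and finally restricted sharpness. The only differences are cosmetic: you derive the $2L\epsilon_t$ error inline via Lipschitz continuity of $f$ instead of citing Lemma~\ref{lemma:epsilon-subgradient-inequality}, and you telescope with $a_t=(t+1)(t+2)h_t$ rather than writing out the products $\prod_k(1-\gamma_k)$.
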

\begin{remark}[Impact of decoupled weight decay on convergence]
    Under identical Lipschitz and sharpness conditions, the regularized variant with decoupled weight decay (RSD-WD) achieves only a sublinear $\mathcal{O}(1/\sqrt{T})$ rate in terms of parameter distance, whereas standard Spectral Descent enjoys a fast global linear convergence rate (Theorem \ref{theorem:spectral_descent_convergence}).
    This deceleration is a direct consequence of the decoupled weight decay.
    As shown in Lemma \ref{lemma:equivalence_spectral_descent_wd_frank_wolfe}, weight decay implicitly transforms the unconstrained descent step into a Frank-Wolfe convex combination update, i.e., $X^{(t+1)} = (1-\gamma_t)X^{(t)} + \gamma_t g^{(t)}$.
    Rather than taking a free gradient step, this update strictly interpolates between the current iterate and an extreme point of the feasible set.
    Because the step size $\gamma_t$ must decay over iterations, the algorithm is restricted to progressively smaller updates.
    This structural restriction inherently bottlenecks the optimization process, canceling out the acceleration benefits of the sharpness condition.

    However, this structural restriction is precisely what endows the algorithm with critical theoretical and practical advantages.
By enforcing this strict convex combination, the regularized update intrinsically bounds the trajectory within the feasible set.
This effectively acts as a spatial stabilization mechanism that entirely circumvents the stringent condition parameter assumption required for standard Spectral Descent.
Consequently, decoupled weight decay fundamentally trades raw convergence speed for unconditional algorithmic stability.
\end{remark}
The proof of Theorem \ref{theorem:convergence_SD-WD} adapts the analysis strategy from \citet{jaggi2013revisiting,ravi2019deterministic} to our matrix setting, leveraging the bounded curvature constant established in Proposition \ref{prop:boundedness_curvature_matrix}.
\begin{proof}[Proof of Theorem \ref{theorem:convergence_SD-WD}]
    Let $X^*$ be the optimal solution to problem \eqref{eq:general_problem} and define $\Delta_f^0 = f(X^{(0)}) - f(X^*)$.
	Since $X^{(0)}\in\mathcal{X}_\lambda$, by Lemma~\ref{lemma:equivalence_spectral_descent_wd_frank_wolfe} and Corollary~\ref{cor:feasibility-invariance}, the
RSD-WD update can be written as
$$X^{(t+1)} =(1-\gamma_t)X^{(t)}+\gamma_t g^{(t)},\qquad \gamma_t=\lambda\eta_t,$$
where $g^{(t)}\in\arg\min_{S\in\mathcal X_\lambda}\max_{\mathbf G\in T(X^{(t)},\epsilon_t)}\langle S-X^{(t)},\mathbf G\rangle$ and $X^{(t)}\in\mathcal X_\lambda$ for all $t$.

    From the definition of the curvature constant, we have
	\begin{equation}
		\begin{aligned}
		f(X^{(t+1)}) = &f((1-\gamma_t)X^{(t)} + \gamma_t g^{(t)})\\
		\leq &f(X^{(t)}) + \gamma_t \max_{G\in T(X^{(t)},\epsilon_t)}\langle g^{(t)} - X^{(t)}, \G \rangle + \frac{\gamma_t^2}{2} \mathcal{C}_f(\epsilon_t).
		\end{aligned}
	\end{equation}
	Since $g^{(t)}$ minimizes the linearized subproblem, we obtain
	\begin{equation}
		f(X^{(t+1)}) \leq f(X^{(t)}) + \gamma_t \min_{S\in\mathcal{X}_\lambda}\max_{G\in T(X^{(t)},\epsilon_t)}\langle S - X^{(t)}, \G \rangle + \frac{\gamma_t^2}{2} \mathcal{C}_f(\epsilon_t).
	\end{equation}
	By Lemma~\ref{lemma:epsilon-subgradient-inequality} and the definition of $X^*$, we have
	\begin{equation}
		f(X^{t+1}) \leq f(X^{t}) + \gamma_t (f(X^*) - f(X^{t}) + 2L\epsilon_t) + \frac{\gamma_t^2}{2} \mathcal{C}_f(\epsilon_t).
	\end{equation}
	Rearranging terms, we obtain
	\begin{equation}
		\begin{aligned}
		\Delta_f^{t+1} \leq &(1-\gamma_t)\Delta_f^{t} + 2L\gamma_t\epsilon_t + \frac{\gamma_t^2}{2} \mathcal{C}_f(\epsilon_t)\\
		\leq &(1-\gamma_t)\Delta_f^{t} + 2L\gamma_t\epsilon_t + \frac{8L n}{\lambda^2\epsilon_t}\gamma_t^2\\
		= &(1-\gamma_t)\Delta_f^{t} + 8L\frac{\sqrt{n}}{\lambda}\gamma_t^{3/2},
		\end{aligned}
	\end{equation}
	where the second inequality follows from Proposition \ref{prop:boundedness_curvature_matrix} and the last equality holds by substituting $\epsilon_t = \frac{2\sqrt{n}}{\sqrt{\lambda}} \sqrt{\eta_t}= \frac{2\sqrt{n}}{\lambda} \sqrt{\gamma_t}$.
	Unrolling the recursion, we obtain
	\begin{equation}
		\Delta_f^{T} \leq \prod_{t=0}^{T-1}(1-\gamma_t)\Delta_f^{0} + 8L\frac{\sqrt{n}}{\lambda}\sum_{t=0}^{T-1}\left(\gamma_t^{3/2}\prod_{k=t+1}^{T-1}(1-\gamma_k)\right).
	\end{equation}
    It is straightforward to verify that $\prod_{t=0}^{T-1}(1-\gamma_t) = \frac{2}{(T+2)(T+1)}$.
    Using the step size $\gamma_t = \frac{2}{t+3}$ and the equality $\prod_{k=t+1}^{T-1}(1-\gamma_k) = \frac{(t+2)(t+3)}{(T+2)(T+1)}$, the second term can be bounded via an integral approximation:
\begin{equation}
    \sum_{t=0}^{T-1}\left(\gamma_t^{3/2}\prod_{k=t+1}^{T-1}(1-\gamma_k)\right) \leq \frac{2\sqrt{2}}{(T+2)(T+1)} \int_{0}^{T} \sqrt{x+3} \, dx \leq \frac{4\sqrt{2}}{3} \frac{(T+3)^{3/2}}{(T+2)(T+1)}.
\end{equation}
Substituting this back into the unrolled recursion, the optimality gap is bounded by
\begin{equation}
    \Delta_f^{T} \leq \frac{2\Delta_f^{0}}{(T+2)(T+1)} + \frac{32L \sqrt{2n}}{3\lambda} \frac{(T+3)^{3/2}}{(T+2)(T+1)}.
\end{equation}
Finally, invoking the sharpness condition (Assumption \ref{assumption:sharpness}), we have $\mathrm{dist}(X^{(T)}, \mathcal{X}^*) \leq \frac{1}{\mu} \Delta_f^{T}$, which completes the proof.
\end{proof}
Similarly, we can establish the convergence guarantee for RTSD-WD \eqref{eq:regularized_truncated_spectral_descent_wd}.

\begin{theorem}
    \label{theorem:convergence_RTSD-WD}
     Let $\mathcal{X}_{\lambda,s}= \{X\in\R^{n_1\times n_2}:\|X\|_2\leq \frac{1}{\lambda}, \|X\|_* \leq \frac{s}{\lambda}\}$ be the feasible set for some $\lambda > 0$.
    Suppose that the function $f:\mathbb{R}^{n_1\times n_2}\to\mathbb{R}$ satisfies Assumptions \ref{assumption:Convexity}, \ref{assumption:Lipschitz Continuity} with Lipschitz constant $L>0$ and satisfies the Sharpness condition (Assumption \ref{assumption:sharpness}) over $\mathcal X_{\lambda,s}$ with parameter $\mu>0$ relative to the optimal solution set $\mathcal{X}^* := \arg\min_{X\in\mathcal{X}_{\lambda,s}} f(X)$.
    Assume that the initial point satisfies $X^{(0)}\in\mathcal X_{\lambda,s}$.
    Let $\{X^{(t)}\}_{t=0}^{T}$ be the sequence generated by the Regularized Truncated Spectral Descent with Weight Decay \eqref{eq:regularized_truncated_spectral_descent_wd} with the step size $\eta_t = \frac{2}{\lambda(t+3)}$ and the tolerance parameter $\epsilon_t = \frac{2\sqrt{s}}{\sqrt{\lambda}} \sqrt{\eta_t}$.
    Then, we have
    \begin{equation}
        \mathrm{dist}(X^{(T)}, \mathcal{X}^*) \leq \frac{2}{\mu(T+2)(T+1)}\Delta_f^{0} + \frac{32L \sqrt{2s}}{3\mu\lambda} \frac{(T+3)^{3/2}}{(T+2)(T+1)},
    \end{equation}
    where $\Delta_f^{0} = f(X^{(0)}) - f(X^*)$ is the initial optimality gap.
    \begin{proof}

    It is straightforward to verify that the corresponding diameter $D_{\mathcal{X}_{\lambda,s}}$ of $\mathcal{X}_{\lambda,s}$ is $\frac{2\sqrt{s}}{\lambda}$.
    By following the same proof strategy as Proposition \ref{prop:boundedness_curvature_matrix}, we can obtain the following bound for the curvature constant:
    \begin{equation}
        \mathcal{C}_f(\epsilon) \leq \frac{16L s}{\lambda^2\epsilon}.
    \end{equation}
    Substituting this bound into the proof of Theorem \ref{theorem:convergence_SD-WD}, we can derive the desired convergence guarantee for the Regularized Truncated Spectral Descent with Weight Decay.

    \end{proof}
\end{theorem}

\section{Robust Low-Rank Matrix Recovery}
\label{sec:application}
In this section, we apply the proposed RTSD-WD \eqref{eq:regularized_truncated_spectral_descent_wd} to the problem of robust low-rank matrix recovery under mixed noise.

Low-rank matrix recovery is a fundamental problem in modern signal processing \citep{davenport2016overview} and machine learning \citep{srebro2004maximum, li2018algorithmic}.
As a standard problem in this domain, matrix sensing aims to reconstruct an unknown low-rank matrix $X^* \in \mathbb{R}^{n_1\times n_2}$ from a limited number of linear measurements.
In this subsection, we consider the mixed-noise setting, where the observation is given by $b = \mathcal{A}(X^*) + e_1 + e_2$.
Here, $\mathcal{A}: \mathbb{R}^{n_1 \times n_2} \to \mathbb{R}^m$ denotes a linear measurement operator, $X^*$ is the ground-truth matrix with $\text{rank}(X^*) \leq r^*$, $e_1$ is a sparse noise vector with a small number of non-zero entries, and $e_2$ is a dense noise vector with small magnitude.
Throughout this section, we assume that the sparse support of $e_1$ is independent of the measurement operator $\mathcal A$.
Assuming the nuclear norm of the target matrix $X^*$ is known a priori, i.e., $\|X^*\|_* = R$, the matrix recovery task can be formulated as the following feasibility problem:
\begin{equation}
    \begin{aligned}
    \text{find} \quad & X \\
    \text{subject to} \quad & \mathcal{A}(X) +e_1 + e_2 = b, \\
    & \text{rank}(X) \leq r^*, \\
    & \|X\|_* = R.
    \end{aligned}
\end{equation}
However, directly solving the above rank-constrained problem is generally NP-hard.
Therefore, we consider the least absolute deviation (LAD) formulation for low-rank matrix recovery:
		\begin{equation}
			\label{eq:low_rank_matrix_recovery}
			\min_{X\in\R^{n_1\times n_2}:\text{rank}(X) \leq r^*, \|X\|_* = R} f(X):=\frac{1}{m}\|\mathcal{A}(X) - b\|_1.
		\end{equation}
		The LAD model is robust to outliers in the measurements, which has been shown to improve recovery performance in the presence of sparse noise \citep{dielman2005least,li2020nonconvex, xu2024convergence, huang2025adversarial}.
        To establish the recovery guarantees, we introduce the $\ell_1/\ell_2$-RIP condition.

        \begin{definition}[$\ell_1/\ell_2$-RIP]
	    \label{definition:ell_1/ell_2-RIP}
	    We say an operator $\A:\R^{n_1\times n_2}\to\R^m$ satisfies the $\ell_1/\ell_2$-RIP with constants $\delta_{r}\in(0,\sqrt{\frac{2}{\pi}})$,
	    if for any matrix $X\in\R^{n_1\times n_2}$ with rank at most $r$, we have
	    $$(\sqrt{\frac{2}{\pi}}-\delta_{r})\|X\|_F \leq \frac{1}{m}\|\A(X)\|_1 \leq (\sqrt{\frac{2}{\pi}}+\delta_{r})\|X\|_F.$$
        \end{definition}

        This is a standard assumption in the robust low-rank matrix recovery literature \citep{li2020nonconvex,xu2024convergence}
        If the linear operator $\mathcal{A}$ consists of random matrices $\{A_i\}_{i=1}^m$ with i.i.d. $\mathcal{N}(0,1)$ entries, it satisfies the $\ell_1/\ell_2$-RIP condition with high probability, provided that the number of measurements scales as $m = \mathcal{O}(rn)$ \citep{li2020nonconvex}.

         We will show that the objective function $f(X) = \frac{1}{m}\|\mathcal{A}(X) - b\|_1$ is Lipschitz continuous under the $\ell_1/\ell_2$-RIP condition.

\begin{proposition}[Lipschitz Constant via $\ell_1/\ell_2$-RIP]
	\label{proposition:Lipschitz_rip}
	If the linear operator $\mathcal{A}$ satisfies the $\ell_1/\ell_2$-RIP with constant $\delta_{1}$, then $f(X)$ satisfies Lipschitz continuity relative to the nuclear norm with a Lipschitz constant $L_{\mathcal{A}} := (\sqrt{\frac{2}{\pi}}+\delta_{1})$.
\end{proposition}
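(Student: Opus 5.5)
We aim to show $|f(X)-f(Y)|\le L_{\mathcal A}\|X-Y\|_*$ for all $X,Y\in\mathbb R^{n_1\times n_2}$. The approach has three steps: use the reverse triangle inequality for $\|\cdot\|_1$ to reduce to bounding $\frac1m\|\mathcal A(X-Y)\|_1$, then decompose $X-Y$ into rank-one pieces via its SVD, and finally apply the upper half of the $\ell_1/\ell_2$-RIP at rank one to each piece.

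\textbf{Step 1 (reduce to a linear estimate).} By the reverse triangle inequality for $\|\cdot\|_1$ and linearity of $\mathcal A$,
\[
|f(X)-f(Y)| = \frac1m\Bigl|\,\|\mathcal A(X)-b\|_1-\|\mathcal A(Y)-b\|_1\Bigr| \le \frac1m\|\mathcal A(X-Y)\|_1 .
\]
So it suffices to show $\frac1m\|\mathcal A(Z)\|_1\le L_{\mathcal A}\|Z\|_*$ for every matrix $Z$. Notably, the noise $b$ plays no role in this step.

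\textbf{Step 2 (decompose into rank-one pieces).} Write the compact SVD $Z=\sum_{i=1}^{r}\sigma_i u_i v_i^\top$. By linearity of $\mathcal A$ and the triangle inequality,
\[
\frac1m\|\mathcal A(Z)\|_1 \le \sum_{i=1}^r \sigma_i\,\frac1m\|\mathcal A(u_iv_i^\top)\|_1 .
\]

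\textbf{Step 3 (apply the RIP at rank one).} Each $u_iv_i^\top$ has rank one and $\|u_iv_i^\top\|_F=1$. The upper bound in Definition~\ref{definition:ell_1/ell_2-RIP} with $r=1$ then gives $\frac1m\|\mathcal A(u_iv_i^\top)\|_1\le \sqrt{2/\pi}+\delta_1$. Summing over $i$ yields
\[
\frac1m\|\mathcal A(Z)\|_1\le \Bigl(\sqrt{\tfrac{2}{\pi}}+\delta_1\Bigr)\sum_{i=1}^r\sigma_i = L_{\mathcal A}\|Z\|_* .
\]
Setting $Z=X-Y$ completes the argument.

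There is no real obstacle here. The only point needing care is that the RIP controls only low-rank matrices, while $X-Y$ may have full rank. The rank-one SVD decomposition resolves this, and it is also why the resulting Lipschitz constant is relative to the nuclear norm rather than the Frobenius norm.
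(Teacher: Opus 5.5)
Your proposal is correct and follows essentially the same route as the paper. The paper's decomposition of $H$ into successive best rank-one approximations is exactly your SVD splitting $Z=\sum_i \sigma_i u_i v_i^\top$; both arguments then apply the triangle inequality, the rank-one RIP upper bound, and $\sum_i\|H_i\|_F=\|H\|_*$, and your Step 1 (the reverse triangle inequality that removes $b$) makes explicit the reduction from $f$ to $\frac{1}{m}\|\mathcal{A}(X-Y)\|_1$, which the paper leaves implicit.
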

\begin{proof}
	For any matrix $H \in \mathbb{R}^{n_1 \times n_2}$, it can be partitioned and decomposed into a sum of mutually orthogonal matrices
	\begin{equation}
		H = \sum_{i=1}^n H_i,
	\end{equation}
	where each sub-matrix $H_i$ is the best rank-$1$ approximation of $H - \sum_{j=1}^{i-1} H_j$.
	Then we obtain
\begin{equation}
    \label{eq:rip_triangle_bound}
     \frac{1}{m}\|\mathcal{A}(H)\|_1 =  \frac{1}{m}\left\|\mathcal{A}\left(\sum_{i=1}^n H_i\right)\right\|_1 \leq \sum_{i=1}^n  \frac{1}{m}\|\mathcal{A}(H_i)\|_1 \leq (\sqrt{\frac{2}{\pi}}+\delta_{1})\sum_{i=1}^n \|H_i\|_F,
\end{equation}
where the second inequality follows from the $\ell_1/\ell_2$-RIP condition.

By the definition of the nuclear norm, we have
\begin{equation}
    \label{eq:sum_F_bound}
    \|H\|_* = \sum_{i=1}^n \|H_i\|_F.
\end{equation}

Substituting equation \eqref{eq:sum_F_bound} into \eqref{eq:rip_triangle_bound} yields the desired global upper bound:
\begin{equation}
     \frac{1}{m}\|\mathcal{A}(H)\|_1 \leq (\sqrt{\frac{2}{\pi}}+\delta_{1})\|H\|_*.
\end{equation}
This completes the proof.
\end{proof}

To prove $f(X)$ satisfies the restricted sharpness property, we first show that the error matrix $H^{(t)} = X^{(t)} - X^*$ belongs to a restricted error set $\mathcal{C}$.

    \begin{lemma}[Restricted Error Set]
	\label{lemma:restricted_error_set_low_rank_matrix_recovery}
	Let $X\in\mathcal{G}:= \{X\in\R^{n_1\times n_2}:\|X\|_* \leq R\}$.
	Define the error matrix $H = X - X^*$.
	Then $H$ belongs to the restricted error set
    \begin{equation}
        \mathcal{C}=\{H\in\R^{n_1\times n_2}:\|H_{T_{X^*}^\perp}\|_* \leq \|H_{T_{X^*}}\|_*\}.
    \end{equation}
	\end{lemma}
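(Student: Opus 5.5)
This is the standard nuclear-norm cone argument: since $\|X\|_*\le R=\|X^*\|_*$, we want to show $\|X^*+H\|_*\ge \|X^*\|_*+\|H_{T_{X^*}^\perp}\|_*-\|H_{T_{X^*}}\|_*$. Combined with $\|X^*+H\|_*=\|X\|_*\le \|X^*\|_*$, this gives $\|H_{T^\perp}\|_*\le\|H_T\|_*$, which is exactly $H\in\mathcal C$.

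Let $X^*=U\Sigma V^\top$ be the compact SVD, with $U\in\R^{n_1\times r}$, $V\in\R^{n_2\times r}$, $r=\mathrm{rank}(X^*)$, and write $T=T_{X^*}$. By the triangle inequality,
\begin{equation*}
\|X^*+H\|_*\ge \|X^*+H_{T^\perp}\|_*-\|H_T\|_*.
\end{equation*}
Next we evaluate $\|X^*+H_{T^\perp}\|_*$. Since $H_{T^\perp}=(I-P_U)H(I-P_V)$, its column space is orthogonal to that of $X^*$, and likewise for the row spaces. So take an SVD $H_{T^\perp}=U'\Sigma'V'^\top$ with $U'\perp U$ and $V'\perp V$. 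Then $[U\ U']$ and $[V\ V']$ have orthonormal columns, and $X^*+H_{T^\perp}=[U\ U']\,\mathrm{diag}(\Sigma,\Sigma')\,[V\ V']^\top$ is a valid SVD. Hence
\begin{equation*}
\|X^*+H_{T^\perp}\|_*=\|X^*\|_*+\|H_{T^\perp}\|_*.
\end{equation*}
An equivalent route is duality: pick $W=UV^\top+U'V'^\top$, which has $\|W\|_2\le1$, and compute $\langle W,X^*+H_{T^\perp}\rangle$.

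Combining the two displays gives $\|X^*\|_*\ge\|X\|_*\ge\|X^*\|_*+\|H_{T^\perp}\|_*-\|H_T\|_*$, and hence $\|H_{T^\perp}\|_*\le\|H_T\|_*$.

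The only delicate step is the orthogonal decomposition of $X^*+H_{T^\perp}$. One must check that $P_U H_{T^\perp}=0$ and $H_{T^\perp}P_V=0$, so that the two blocks have mutually orthogonal column and row spaces. With the projection formulas already given in the notation section, this is immediate. Beyond that, the only things used are $\|X\|_*\le R$ and the assumption $\|X^*\|_*=R$; the rank of $X^*$ plays no role in the inequality itself.
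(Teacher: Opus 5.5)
Your proof is correct and follows the same structure as the paper's. You establish $\|X^*+H\|_*\ge\|X^*\|_*+\|H_{T_{X^*}^\perp}\|_*-\|H_{T_{X^*}}\|_*$ (the paper's Lemma~\ref{lemma:nuclear_norm_property}) and combine it with $\|X\|_*\le R=\|X^*\|_*$. The only difference is how that inequality is obtained. The paper applies the subgradient inequality with the certificate $UV^\top+\tilde U\tilde V^\top\in\partial\|X^*\|_*$, built from the compact SVD of $H_{T_{X^*}^\perp}$, together with H\"older's inequality on the $T_{X^*}$ component; this is exactly the duality route you mention. Your main argument instead peels off $H_{T_{X^*}}$ with the triangle inequality and uses the exact additivity $\|X^*+H_{T_{X^*}^\perp}\|_*=\|X^*\|_*+\|H_{T_{X^*}^\perp}\|_*$, which is self-contained and avoids the cited characterization of $\partial\|X^*\|_*$.
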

	\begin{proof}
	Let $X^* = U\Sigma V^\top$ denote the SVD of $X^*$.
	Since $\|X\|_* \leq R$, it implies $\|X^*+H\|_* \leq \|X^*\|_*$.
	Combining this with Lemma \ref{lemma:nuclear_norm_property}, we have
	\begin{equation}
		\label{eq:restricted_error_set}
		\|H_{T_{X^*}^\perp}\|_* \leq \|H_{T_{X^*}}\|_*.
	\end{equation}
	\end{proof}

    Let $\mathcal{S}$ be the support set of the sparse noise $e_1$, i.e., $\mathcal{S} = \{i\in[m] : e_{1,i} \neq 0\}$.
    Furthermore, let $p$ denote the fraction of the sparse noise in the measurements $b$, i.e., $p = \frac{|\mathcal{S}|}{m}$.
    Given prior knowledge of $\mathcal{S}$, we introduce the restricted outlier bound condition \citep{charisopoulos2021low, xu2024convergence, huang2025adversarial} to deal with the influence of the sparse noise on the recovery performance.

    \begin{definition}[Restricted outlier bound condition]
        \label{definition:restricted_outlier_bound}
         We say an operator $\A:\R^{n_1\times n_2}\to\R^m$ satisfies the restricted $(\mathcal{S}, \tau_r)$-outlier bound condition over $\mathcal{K}$, if for any matrix $X\in\mathcal{K}$, we have
        $$\tau\|X\|_F \leq \frac{1}{m}\|\A_{\mathcal{S}^c}(X)\|_1-\frac{1}{m}\|\A_{\mathcal{S}}(X)\|_1$$
        where $\mathcal{A}_{\mathcal{S}}(X) \in \mathbb{R}^{|\mathcal{S}|}$ and $\mathcal{A}_{\mathcal{S}^c}(X) \in \mathbb{R}^{|\mathcal{S}^c|}$ represent the sub-vectors formed by extracting the entries of $\mathcal{A}(X)$ indexed by $\mathcal{S}$ and its complement $\mathcal{S}^c$, respectively.
    \end{definition}

    If $\mathcal{K}$ is the manifold of rank-r matrices $\mathcal{M}_r = \{ X \in \mathbb{R}^{n_1 \times n_2} : \text{rank}(X) = r \}$, $\mathcal{A}_{\mathcal{S}}$ and $\mathcal{A}_{\mathcal{S}^c}$ satisfy the $\ell_1/\ell_2$-RIP condition with constants $\delta_{r}$ and $\delta_{r}'$, respectively, it is straightforward to verify that the restricted outlier bound condition holds with $\tau_r = (1-p)(\sqrt{2/\pi}-\delta_{r}')-(\sqrt{2/\pi}+\delta_{r})p$.
    We will show that if the $\ell_1/\ell_2$-RIP constant and the fraction of sparse noise $p$ are sufficiently small, then the restricted outlier bound condition over the restricted error set $\mathcal{C}$ holds with a positive constant $\tau > 0$.

    \begin{lemma}
    \label{lemma:restricted_outlier_bound}
    Suppose that $\mathcal{A}_{\mathcal{S}}$ and $\mathcal{A}_{\mathcal{S}^c}$ satisfy the $\ell_1/\ell_2$-RIP condition.
    If the fraction of sparse noise $p$ satisfies $p < \frac{1}{2} \left[ 1 - \sqrt{\frac{2}{3}} - \frac{\delta_{5r^*} + \sqrt{\frac{2}{3}}\delta_{5r^*}}{\sqrt{\frac{2}{\pi}}} \right]$ and $\delta_{5r^*} < 0.08$, then the restricted outlier bound condition over $\mathcal{C}=\{H\in\R^{n_1\times n_2}:\|H_{T_{X^*}^\perp}\|_* \leq \|H_{T_{X^*}}\|_*\}$ holds with $\tau= \left[ (1-2p)\sqrt{\frac{2}{\pi}} - \delta_{5r^*} - \left( \sqrt{\frac{2}{\pi}} + \delta_{3r^*} \right)\sqrt{\frac{2}{3}} \right]\sqrt{\frac{3}{5}} > 0$.
    \end{lemma}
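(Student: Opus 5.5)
The plan is the standard block-decomposition (``shelling'') argument for restricted cones. I would split $H\in\mathcal C$ into a rank-$5r^*$ head, to which the $\ell_1/\ell_2$-RIP applies directly, and a tail of rank-$3r^*$ blocks whose total Frobenius mass is a fraction $\sqrt{2/3}$ of the head. I read the $\ell_1/\ell_2$-RIP of $\mathcal A_{\mathcal S}$ and $\mathcal A_{\mathcal S^c}$ under the normalization used just before the lemma:
\[
(1-p)\bigl(\sqrt{2/\pi}-\delta_r\bigr)\|X\|_F\le \tfrac1m\|\mathcal A_{\mathcal S^c}(X)\|_1\le (1-p)\bigl(\sqrt{2/\pi}+\delta_r\bigr)\|X\|_F,
\]
and analogously for $\mathcal A_{\mathcal S}$ with $p$ in place of $1-p$, for $\operatorname{rank}X\le r$. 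I also use that $\delta_r$ is nondecreasing in $r$.

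\textbf{Step 1 (decomposition).} Set $H_0:=H_{T_{X^*}}$. Since $\operatorname{rank}(X^*)\le r^*$, we have $\operatorname{rank}H_0\le 2r^*$ and hence $\|H_0\|_*\le\sqrt{2r^*}\|H_0\|_F$. Take an SVD of $H_{T_{X^*}^\perp}$ and group its singular values in decreasing order into consecutive blocks of size $3r^*$, giving $H_1,H_2,\dots$. These blocks are mutually orthogonal, and each is orthogonal to $H_0$ in the trace inner product. The usual averaging argument gives $\|H_j\|_F\le \|H_{j-1}\|_*/\sqrt{3r^*}$ for $j\ge2$. Summing and using the cone condition from Lemma~\ref{lemma:restricted_error_set_low_rank_matrix_recovery}, I get
\[
\sum_{j\ge2}\|H_j\|_F\le \frac{\|H_{T^\perp_{X^*}}\|_*}{\sqrt{3r^*}}\le\frac{\|H_0\|_*}{\sqrt{3r^*}}\le\sqrt{\tfrac23}\,\|H_0\|_F\le \sqrt{\tfrac23}\,\|H_0+H_1\|_F .
\]

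\textbf{Step 2 (lower bound).} Write $H=(H_0+H_1)+\sum_{j\ge2}H_j$, where $H_0+H_1$ has rank at most $5r^*$ and each $H_j$ with $j\ge2$ has rank at most $3r^*$. The triangle inequality gives
\[
\tfrac1m\|\mathcal A_{\mathcal S^c}(H)\|_1-\tfrac1m\|\mathcal A_{\mathcal S}(H)\|_1\ge \tfrac1m\|\mathcal A_{\mathcal S^c}(H_{01})\|_1-\tfrac1m\|\mathcal A_{\mathcal S}(H_{01})\|_1-\sum_{j\ge2}\tfrac1m\|\mathcal A(H_j)\|_1 ,
\]
where $H_{01}:=H_0+H_1$. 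Applying the RIP to the head gives at least $\bigl[(1-2p)\sqrt{2/\pi}-\delta_{5r^*}\bigr]\|H_{01}\|_F$. Applying it to the tail gives at most $(\sqrt{2/\pi}+\delta_{3r^*})\sqrt{2/3}\,\|H_{01}\|_F$, using Step 1. Together these produce exactly the bracketed quantity in the statement times $\|H_{01}\|_F$.

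\textbf{Step 3 (back to $\|H\|_F$).} By orthogonality,
\[
\|H\|_F^2=\|H_{01}\|_F^2+\Bigl\|\sum_{j\ge2}H_j\Bigr\|_F^2\le \|H_{01}\|_F^2+\Bigl(\sum_{j\ge2}\|H_j\|_F\Bigr)^2 .
\]
By Step 1 this is at most $\|H_{01}\|_F^2+\tfrac23\|H_0\|_F^2\le\tfrac53\|H_{01}\|_F^2$, so $\|H_{01}\|_F\ge\sqrt{3/5}\,\|H\|_F$. This yields the stated $\tau$, provided the bracket is nonnegative.

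\textbf{Positivity.} Finally I verify $\tau>0$. Bound $\delta_{3r^*}\le\delta_{5r^*}$, divide the bracket by $\sqrt{2/\pi}$, and solve for $p$; this gives exactly the hypothesis on $p$. The requirement $\delta_{5r^*}<0.08$ makes that admissible range of $p$ nonempty.

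\textbf{Main obstacle.} The delicate point is the tail estimate in Step 1. It must be chained correctly through the cone inequality and the rank bound $\operatorname{rank}H_0\le 2r^*$ to land on the constant $\sqrt{2/3}$. It must also be reused in Step 3 without loss: one should bound by $\|H_0\|_F$ rather than crudely by $(1+\sqrt{2/3})\|H_{01}\|_F$, which would not give the factor $\sqrt{3/5}$. A secondary point is being consistent about the $p$- and $(1-p)$-scaled RIP normalization for the split operators.
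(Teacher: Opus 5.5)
Your proposal is correct and follows essentially the same route as the paper. Both use the same head/tail split into a rank-$5r^*$ piece $H_{T_{X^*}}+B_1$ and rank-$3r^*$ blocks, the same tail estimate $\sum_{i\ge2}\|B_i\|_F\le\sqrt{2/3}\,\|H_{T_{X^*}}\|_F\le\sqrt{2/3}\,\|H_{T_{X^*}}+B_1\|_F$, and the same $\sqrt{3/5}$ conversion back to $\|H\|_F$.

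The differences are cosmetic. You merge the $\mathcal A_{\mathcal S}$ and $\mathcal A_{\mathcal S^c}$ tail terms into $\frac1m\|\mathcal A(H_j)\|_1$ before applying RIP, whereas the paper bounds the two operators separately and subtracts; both give the same constant. You also spell out $\|H\|_F\le\sqrt{5/3}\,\|H_{01}\|_F$, the positivity of $\tau$, and the use of $\delta_{3r^*}\le\delta_{5r^*}$, which the paper only asserts.
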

    \begin{proof}
        Let $H\in\mathcal C$ be arbitrary.
        We partition the error matrix $H \in \mathcal{C}$ into orthogonal components $H = B_0 + B_1 + \sum^J_{i \geq 2} B_i$.
        Here $B_0 = H_{T_{X^*}}$ and $\{B_i\}_{i=1}^J$ is the best rank-$3r^*$ approximation of $H_{T_{X^*}^\perp} - \sum_{j=1}^{i-1} B_j$.

Applying the $\ell_1/\ell_2$-RIP condition and the triangle inequality, we deduce
\begin{equation}
	\label{eq:lower_bound_AScH}
    \begin{aligned}
        \frac{1}{m}\|\mathcal{A}_{S^c}(H)\|_1 &\geq \frac{1}{m}\|\mathcal{A}_{S^c}(B_0+B_1)\|_1 - \sum_{i=2}^{J}\frac{1}{m}\|\mathcal{A}_{S^c}(B_i)\|_1 \\
        &\geq (1-p)\left[(\sqrt{\frac{2}{\pi}}-\delta_{5r^*})\|B_0+B_1\|_F - (\sqrt{\frac{2}{\pi}}+\delta_{3r^*})\sum_{i=2}^{J}\|B_i\|_F \right] \\
        &\geq (1-p)\left[(\sqrt{\frac{2}{\pi}}-\delta_{5r^*})\|B_0+B_1\|_F - (\sqrt{\frac{2}{\pi}}+\delta_{3r^*})\sqrt{\frac{2}{3}}\|B_0\|_F \right] \\
        &\geq (1-p)\left(\sqrt{\frac{2}{\pi}}-\delta_{5r^*} - (\sqrt{\frac{2}{\pi}}+\delta_{3r^*})\sqrt{\frac{2}{3}}\right)\|B_0+B_1\|_F,
    \end{aligned}
\end{equation}
where the third inequality follows from $\sum_{i=2}^{J}\|B_i\|_F \leq \frac{1}{\sqrt{3r^*}}\|B_0\|_* \leq \sqrt{\frac{2}{3}}\|B_0\|_F$ and the last inequality holds by substituting $\|B_0\|_F \leq \|B_0 + B_1\|_F$.
Similarly, we can derive an upper bound for $\frac{1}{m}\|\mathcal{A}_{S}(H)\|_1$ as follows:
\begin{equation}
	\label{eq:upper_bound_ASH}
        \frac{1}{m}\|\mathcal{A}_{S}(H)\|_1 \leq p\left(\sqrt{\frac{2}{\pi}}+\delta_{5r^*} + (\sqrt{\frac{2}{\pi}}+\delta_{3r^*})\sqrt{\frac{2}{3}}\right)\|B_0+B_1\|_F.
\end{equation}
Therefore, we have
\begin{equation}
    \begin{aligned}
        &\quad\frac{1}{m}\|\mathcal{A}_{S^c}(H)\|_1 - \frac{1}{m}\|\mathcal{A}_{S}(H)\|_1 \\
        &\geq \left[(1-2p)\sqrt{\frac{2}{\pi}}-\delta_{5r^*} - (\sqrt{\frac{2}{\pi}}+\delta_{3r^*})\sqrt{\frac{2}{3}}\right]\|B_0+B_1\|_F\\
        &\geq \left[(1-2p)\sqrt{\frac{2}{\pi}}-\delta_{5r^*} - (\sqrt{\frac{2}{\pi}}+\delta_{3r^*})\sqrt{\frac{2}{3}}\right]\sqrt{\frac{3}{5}}\|H\|_F,
\end{aligned}
\end{equation}
where the last inequality follows from $\|H\|_F \leq \sqrt{\frac{5}{3}}\|B_0+ B_1\|_F$.

Since $H\in\mathcal C$ was arbitrary, this proves that $\mathcal A$ satisfies
the restricted $(S,\tau)$-outlier bound condition over $\mathcal C$.
The assumed upper bound on $p$ guarantees $\tau>0$, and the proof is complete.
\end{proof}
    It is worth noting that if the outlier fraction $p$ is an absolute constant, the linear operators $\mathcal{A}_{\mathcal{S}^c}$ and $\mathcal{A}_{\mathcal{S}}$ generated from the standard Gaussian ensemble satisfy the $\ell_1/\ell_2$-RIP condition with constant $\delta_r$ with high probability, provided that the number of measurements scales as $m = \mathcal{O}(rn)$.
    Now, we can establish the restricted sharpness property of $f(X)$ under the $\ell_1/\ell_2$-RIP condition and the restricted outlier bound condition.
	\begin{proposition}[Restricted Sharpness via $\ell_1/\ell_2$-RIP]
    \label{proposition:restricted_sharpness_rip}
	Assume the restricted outlier bound condition in Lemma \ref{lemma:restricted_outlier_bound} holds.
    Then, $f(X)$ satisfies the $\mu$-restricted sharpness property over $\mathcal{G}= \{X\in\R^{n_1\times n_2}:\|X\|_* \leq R\}$ with respect to $X^*$, i.e.,
    \begin{equation}
        f(X) - f(X^*) \geq \mu\|X-X^*\|_F - \frac{2}{m}\|e_2\|_1, \quad \forall X \in \mathcal{G},
    \end{equation}
    where $\mu= \left[ (1-2p)\sqrt{\frac{2}{\pi}} - \delta_{5r^*} - \left( \sqrt{\frac{2}{\pi}} + \delta_{3r^*} \right)\sqrt{\frac{2}{3}} \right]\sqrt{\frac{3}{5}} > 0$.

	\end{proposition}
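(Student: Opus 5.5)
The plan is to write the objective explicitly in terms of the error $H := X - X^*$ and split the measurements by the support of the sparse noise. Since $b = \mathcal{A}(X^*) + e_1 + e_2$, we have
\[
f(X) = \frac{1}{m}\|\mathcal{A}(H) - e_1 - e_2\|_1 ,\qquad f(X^*) = \frac{1}{m}\|e_1 + e_2\|_1 .
\]
Both $\ell_1$ norms split into the $\mathcal{S}$ part and the $\mathcal{S}^c$ part. On $\mathcal{S}^c$ we have $e_1 = 0$, so only $e_2$ appears there.

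\textbf{The $\mathcal{S}^c$ part.} By the triangle inequality,
\[
\frac{1}{m}\|\mathcal{A}_{\mathcal{S}^c}(H) - e_{2,\mathcal{S}^c}\|_1 - \frac{1}{m}\|e_{2,\mathcal{S}^c}\|_1 \ \ge\ \frac{1}{m}\|\mathcal{A}_{\mathcal{S}^c}(H)\|_1 - \frac{2}{m}\|e_{2,\mathcal{S}^c}\|_1 .
\]

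\textbf{The $\mathcal{S}$ part.} Here the outliers may be arbitrarily large, so we use the reverse triangle inequality in the other direction:
\[
\frac{1}{m}\|\mathcal{A}_{\mathcal{S}}(H) - (e_1 + e_2)_{\mathcal{S}}\|_1 - \frac{1}{m}\|(e_1 + e_2)_{\mathcal{S}}\|_1 \ \ge\ -\frac{1}{m}\|\mathcal{A}_{\mathcal{S}}(H)\|_1 .
\]
This removes $e_1$ entirely and adds no $e_2$ penalty on $\mathcal{S}$.

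\textbf{Combining the parts.} Adding the two bounds gives
\[
f(X) - f(X^*) \ \ge\ \frac{1}{m}\|\mathcal{A}_{\mathcal{S}^c}(H)\|_1 - \frac{1}{m}\|\mathcal{A}_{\mathcal{S}}(H)\|_1 - \frac{2}{m}\|e_{2,\mathcal{S}^c}\|_1 .
\]
The last term is at most $\frac{2}{m}\|e_2\|_1$ in magnitude, so it matches the additive term in the statement.

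\textbf{Applying the outlier bound.} Since $X \in \mathcal{G}$, Lemma~\ref{lemma:restricted_error_set_low_rank_matrix_recovery} places $H$ in the restricted error set $\mathcal{C}$. Lemma~\ref{lemma:restricted_outlier_bound} then gives
\[
\frac{1}{m}\|\mathcal{A}_{\mathcal{S}^c}(H)\|_1 - \frac{1}{m}\|\mathcal{A}_{\mathcal{S}}(H)\|_1 \ \ge\ \tau\|H\|_F ,
\]
with $\tau$ equal to the stated $\mu$. Substituting this yields the claimed inequality.

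\textbf{Main obstacle.} The key step is the asymmetric treatment of the two index sets. On $\mathcal{S}$ we must discard the outliers through the reverse triangle inequality rather than try to control $e_1$. The sign conventions have to be checked so that the difference $\|\mathcal{A}_{\mathcal{S}^c}(H)\|_1 - \|\mathcal{A}_{\mathcal{S}}(H)\|_1$ appears exactly in the form required by Definition~\ref{definition:restricted_outlier_bound}.
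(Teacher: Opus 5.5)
Your proposal is correct. You split the residual over $\mathcal{S}$ and $\mathcal{S}^c$, apply the triangle inequality on each block so that $e_1$ drops out and only $\frac{2}{m}\|e_2\|_1$ remains, then place $H$ in $\mathcal{C}$ via Lemma~\ref{lemma:restricted_error_set_low_rank_matrix_recovery} (using $\|X\|_*\le R=\|X^*\|_*$) and invoke Lemma~\ref{lemma:restricted_outlier_bound}. The paper omits this proof and defers to \citet[Lemma~4]{xu2024convergence}; your argument is essentially the standard one, combined with the paper's own two lemmas.
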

    We omit the proof here and refer readers to \citet[Lemma~4]{xu2024convergence} for a detailed derivation.

For computational efficiency, we choose the structured subdifferential set $T(X, \epsilon) := \text{conv}\left(\bigcup_{Y \in N(X, \epsilon)} \partial f(Y)\right)$, where $N(X, \epsilon) = \{Y \in \R^{n_1 \times n_2} : \|\mathcal{A}(Y) - \mathcal{A}(X)\|_\infty \leq \epsilon\}$ is the unconstrained $L_\infty$-neighborhood around $X$.
By employing Regularized Truncated Spectral Descent with Weight Decay \eqref{eq:regularized_truncated_spectral_descent_wd}, we theoretically guarantee convergence to the ground-truth low-rank matrix $X^*$ at a sublinear rate of $\mathcal{O}\left(1/\sqrt{T}\right)$, up to the bounded noise level.
Note that the RTSD-WD \eqref{eq:regularized_truncated_spectral_descent_wd} update is analyzed over the convex nuclear-norm ball
$$
\mathcal{G}:=\{X\in\mathbb R^{n_1\times n_2}:\|X\|_*\le R\}.
$$
Since the ground-truth matrix satisfies $\|X^*\|_*=R$, we have
$X^*\in\mathcal{G}$. 
Therefore, an optimization guarantee over $\mathcal{G}$, combined with the restricted sharpness property, directly yields
a recovery guarantee for $X^*$.
\begin{theorem}[Convergence of RTSD-WD for Robust Low-Rank Matrix Recovery]
	\label{theorem:convergence_low_rank_rtsd}
    Consider the nuclear-norm-ball relaxation of the LAD formulation \eqref{eq:low_rank_matrix_recovery}:
\begin{equation}
    \min_{X\in\mathcal G} f(X):=\frac1m\|\mathcal A(X)-b\|_1.
\end{equation}
    Suppose the linear operator $\mathcal{A}: \mathbb{R}^{n_1 \times n_2} \to \mathbb{R}^m$ consists of random matrices $\{A_i\}_{i=1}^m$ with i.i.d. $\mathcal{N}(0,1)$ entries.
    Define the neighborhood of $X$ as $N(X, \epsilon) = \{Y \in \R^{n_1 \times n_2} : \|\mathcal{A}(Y) - \mathcal{A}(X)\|_\infty \leq \epsilon\}$.
    Assume that the initial point satisfies $X^{(0)}\in\mathcal G$.
    Let $\{X^{t}\}_{t=0}^{T}$ be the sequence generated by RTSD-WD \eqref{eq:regularized_truncated_spectral_descent_wd} with $\lambda=\frac{1}{R}$, $\eta_t = \frac{2}{\lambda(t+3)}$, $\epsilon_t = \frac{2L_{\mathcal{A}}}{\lambda}\sqrt{\frac{2m}{t+3}}$ and $s=1$.

    Suppose $m \gtrsim nr^*$.
    If $\delta_{5r^*} < 0.08$ and the fraction of outliers $p$ satisfies $p < \frac{1}{2} [ 1 - \sqrt{2/3} - (\delta_{5r^*} + \sqrt{2/3}\delta_{3r^*})/\sqrt{2/\pi} ]$, then there exist absolute constants $C_1, C_2, C_3 > 0$ such that, with probability at least $1 - C_1\exp(-C_2\delta^2_{5r^*} m)$, $X^{t}$ converges to the target matrix $X^*$ at a sublinear rate:
	\begin{equation}
		\|X^{T} - X^*\|_F \leq C_3 \frac{L_{\mathcal{A}}\sqrt{m}R}{\mu \sqrt{T}} + \frac{\xi}{\mu}.
	\end{equation}
	where $\mu = \left[ (1-2p)\sqrt{\frac{2}{\pi}} - \delta_{5r^*} - \left( \sqrt{\frac{2}{\pi}} + \delta_{3r^*} \right)\sqrt{\frac{2}{3}} \right]\sqrt{\frac{3}{5}}>0$, $L_{\mathcal{A}} = (\sqrt{\frac{2}{\pi}}+\delta_{1})$ and $\xi = \frac{2}{m}\|e_2\|_1$ is the dense noise level.
\end{theorem}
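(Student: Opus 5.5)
With $s=1$ and $\lambda=1/R$, the set $\mathcal X_{\lambda,1}=\{\|X\|_2\le R,\ \|X\|_*\le R\}$ coincides with $\mathcal G$, because $\|X\|_2\le\|X\|_*$. The plan is therefore to run the argument of Theorem~\ref{theorem:convergence_RTSD-WD} on $\mathcal G$, with two changes. First, Frobenius-norm Lipschitz continuity is replaced by the nuclear-norm Lipschitz bound of Proposition~\ref{proposition:Lipschitz_rip}. Second, the Frobenius $\epsilon$-neighborhood is replaced by the $\ell_\infty$-measurement neighborhood $N(X,\epsilon)$.

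\textbf{Step 1: reduction to a Frank--Wolfe recursion.} Lemma~\ref{lemma:equivalence_spectral_descent_wd_frank_wolfe}(ii) and Corollary~\ref{cor:feasibility-invariance} show that the iterates stay in $\mathcal G$. They also show that each RTSD-WD step is a conditional subgradient step $X^{(t+1)}=(1-\gamma_t)X^{(t)}+\gamma_t g^{(t)}$ with $\gamma_t=2/(t+3)$. Lemma~\ref{lemma:equivalence_spectral_descent_wd_frank_wolfe} only uses convexity of $f$, together with $\mathcal T_t$ being compact and convex. Compactness holds here because every subgradient of $f$ has the form $\frac1m\mathcal A^*(v)$ with $\|v\|_\infty\le1$.

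\textbf{Step 2: the $\epsilon$-subgradient inequality and the curvature bound.} For $Y\in N(X,\epsilon)$ and $G\in\partial f(Y)$,
\[
f(Z)\ge f(Y)+\langle G,Z-Y\rangle \ge f(X)+\langle G,Z-X\rangle - 2\epsilon,
\]
because $|f(Y)-f(X)|\le\|\mathcal A(Y-X)\|_\infty\le\epsilon$ and $|\langle G,Y-X\rangle|=\tfrac1m|\langle v,\mathcal A(Y-X)\rangle|\le\epsilon$. This is the analogue of Lemma~\ref{lemma:epsilon-subgradient-inequality} with $L\epsilon$ replaced by $\epsilon$.

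For the curvature constant I would mimic Proposition~\ref{prop:boundedness_curvature_matrix}. The convex combination $Y=(1-\gamma)X+\gamma S$ satisfies $\|\mathcal A(Y-X)\|_\infty\le\gamma\|\mathcal A(S-X)\|_\infty$. Two cases then arise:
\begin{itemize}
\item If $\gamma\|\mathcal A(S-X)\|_\infty\le\epsilon$, then $Y\in N(X,\epsilon)$. Choosing $G\in\partial f(Y)\subset T(X,\epsilon)$ gives a nonpositive curvature term, by convexity.
\item Otherwise $\gamma\ge\epsilon/\|\mathcal A(S-X)\|_\infty$. In this case the numerator is bounded crudely by $2\gamma L_{\mathcal A}\|S-X\|_*\le 4\gamma L_{\mathcal A}R$, using Proposition~\ref{proposition:Lipschitz_rip} and the dual-norm bound $\|G\|_2\le L_{\mathcal A}$.
\end{itemize}
In the second case, $\|\mathcal A(S-X)\|_\infty\le\|\mathcal A(S-X)\|_1\le mL_{\mathcal A}\cdot 2R$. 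This yields $\mathcal C_f(\epsilon)\lesssim L_{\mathcal A}^2 mR^2/\epsilon$. Plugging in $\epsilon_t\asymp L_{\mathcal A}R\sqrt{m\gamma_t}$ balances $2\gamma_t\epsilon_t$ against $\gamma_t^2\mathcal C_f(\epsilon_t)$, and both become $\asymp L_{\mathcal A}R\sqrt m\,\gamma_t^{3/2}$.

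\textbf{Step 3: unrolling and recovery.} Unrolling exactly as in Theorem~\ref{theorem:convergence_SD-WD} gives
\[
f(X^{(T)})-\min_{\mathcal G}f\lesssim \frac{\Delta_f^0}{T^2}+\frac{L_{\mathcal A}R\sqrt m}{\sqrt T}.
\]
The initial gap satisfies $\Delta_f^0\le 2L_{\mathcal A}R$, so the first term is absorbed into the second. Since $X^*\in\mathcal G$, we have $\min_{\mathcal G}f\le f(X^*)$.

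The high-probability Gaussian $\ell_1/\ell_2$-RIP for $\mathcal A$, $\mathcal A_{\mathcal S}$ and $\mathcal A_{\mathcal S^c}$ (valid when $m\gtrsim nr^*$, using independence of $\mathcal S$ from $\mathcal A$) makes Lemma~\ref{lemma:restricted_outlier_bound} and Proposition~\ref{proposition:restricted_sharpness_rip} available. Applying Proposition~\ref{proposition:restricted_sharpness_rip} at $X^{(T)}\in\mathcal G$ then gives
\[
\mu\|X^{(T)}-X^*\|_F\le f(X^{(T)})-f(X^*)+\xi,
\]
and dividing by $\mu$ yields the claim.

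\textbf{Main obstacle.} I expect the hardest step to be the curvature bound with the $\ell_\infty$ neighborhood. The difficulty is converting a nuclear-norm diameter into an $\ell_\infty$-measurement diameter with the correct $\sqrt m$ scaling. If the crude $\ell_1$ bound used in Step 2 produces the wrong power of $m$, I would instead bound $\|\mathcal A(S-X)\|_\infty$ by $\max_i\|A_i\|_2\cdot 2R$, using the Gaussian spectral-norm bound. I would then absorb the resulting factors into $C_3$, consistent with the prescribed choice of $\epsilon_t$.
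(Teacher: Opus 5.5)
Your proposal is correct and follows the paper's proof: the Frank--Wolfe reduction on $\mathcal X_{\lambda,1}=\mathcal G$, a $2\epsilon$-subgradient inequality, the two-case curvature bound $\mathcal C_f(\epsilon)\le 16mL_{\mathcal A}^2R^2/\epsilon$ of Proposition~\ref{prop:boundedness_curvature_matrix_LAD}, unrolling with $\gamma_t=2/(t+3)$, and restricted sharpness at $X^{(T)}\in\mathcal G$; you prove the $2\epsilon$-inequality directly rather than through the surrogate set $H(X,\epsilon)$, which is fine. The crude estimate $\|\mathcal A(S-X)\|_\infty\le\|\mathcal A(S-X)\|_1\le 2mL_{\mathcal A}R$ is exactly what the paper uses and already gives the stated $\sqrt m$ factor, so your fallback is unnecessary; also, boundedness alone does not give compactness of $\mathcal T_t$, and closedness follows because $\partial f(Y)$ ranges over at most $3^m$ polytopes, one for each sign pattern of $\mathcal A(Y)-b$.
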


\begin{figure}[H]
    \centering
    \springersetfigwidth
    \includegraphics[width=\springerfigwidth,height=\springerfigmaxheight,keepaspectratio]{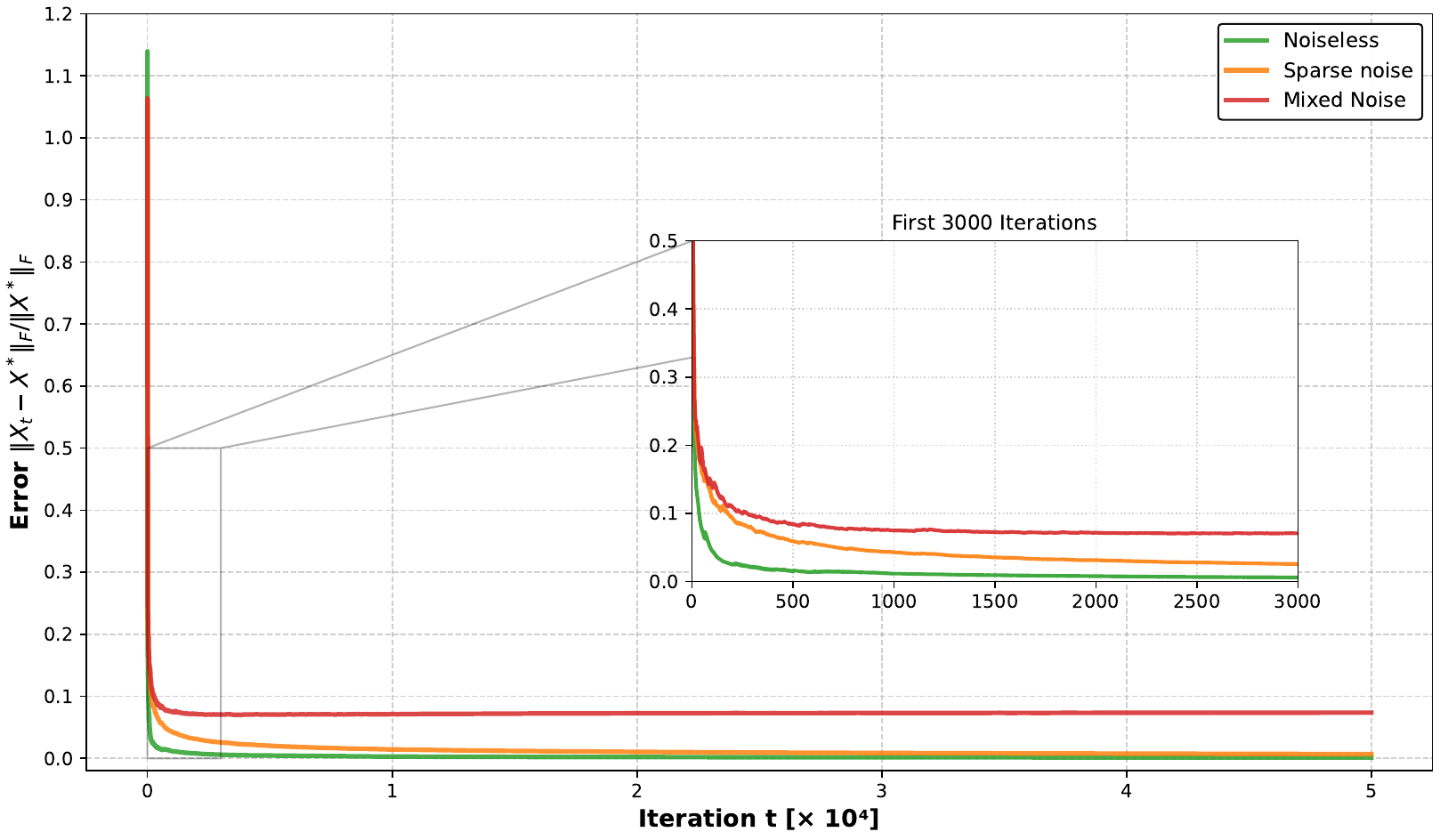}
    \caption{Convergence of RTSD-WD for low-rank matrix recovery under different noise conditions.
    The inset highlights the rapid initial descent during the first 3000 iterations.}
    \label{fig:Convergence of RTSD-WD for LRMR under different noise conditions}
\end{figure}

\begin{proof}

    First, by \citet[Proposition~1]{li2020nonconvex}, the linear operators $\mathcal{A}$, $\mathcal{A}_{\mathcal{S}}$ and $\mathcal{A}_{\mathcal{S}^c}$ generated from the standard Gaussian ensemble satisfy the $\ell_1/\ell_2$-RIP condition with constant $\delta_{5r^*}$, with high probability, provided that the number of measurements scales as $m = \mathcal{O}(r^*n)$.
    Second, by Proposition \ref{proposition:Lipschitz_rip}, the objective $f(X)$ is Lipschitz continuous with constant $L_{\mathcal{A}} = \sqrt{\frac{2}{\pi}} + \delta_1$.
    Furthermore, by Proposition \ref{prop:boundedness_curvature_matrix_LAD}, the generalized curvature constant associated with $T(X,\epsilon)$ is bounded by
    $$ \mathcal{C}^T_f(\epsilon_t) \le \frac{16 m L_{\mathcal{A}}^2 R^2}{\epsilon_t} $$

    Substituting this bound into the general recurrence relation established in the proof of Theorem \ref{theorem:convergence_RTSD-WD}, and applying Lemma \ref{lemma:surrogate-subgradient-inequality} and Proposition \ref{proposition:restricted_sharpness_rip}, we obtain
    $$ ||X^T - X^*||_F \le \frac{1}{\mu}(f(X^T) - f(X^*))+  \frac{\xi}{\mu} \le \mathcal{O}\left(\frac{L_{\mathcal{A}} \sqrt{m} R}{\mu \sqrt{T}}\right) + \frac{\xi}{\mu} $$
    This completes the proof.
\end{proof}

\begin{figure}[H]
    \centering
    \springersetfigwidth
    \includegraphics[width=\springerfigwidth,height=\springerfigmaxheight,keepaspectratio]{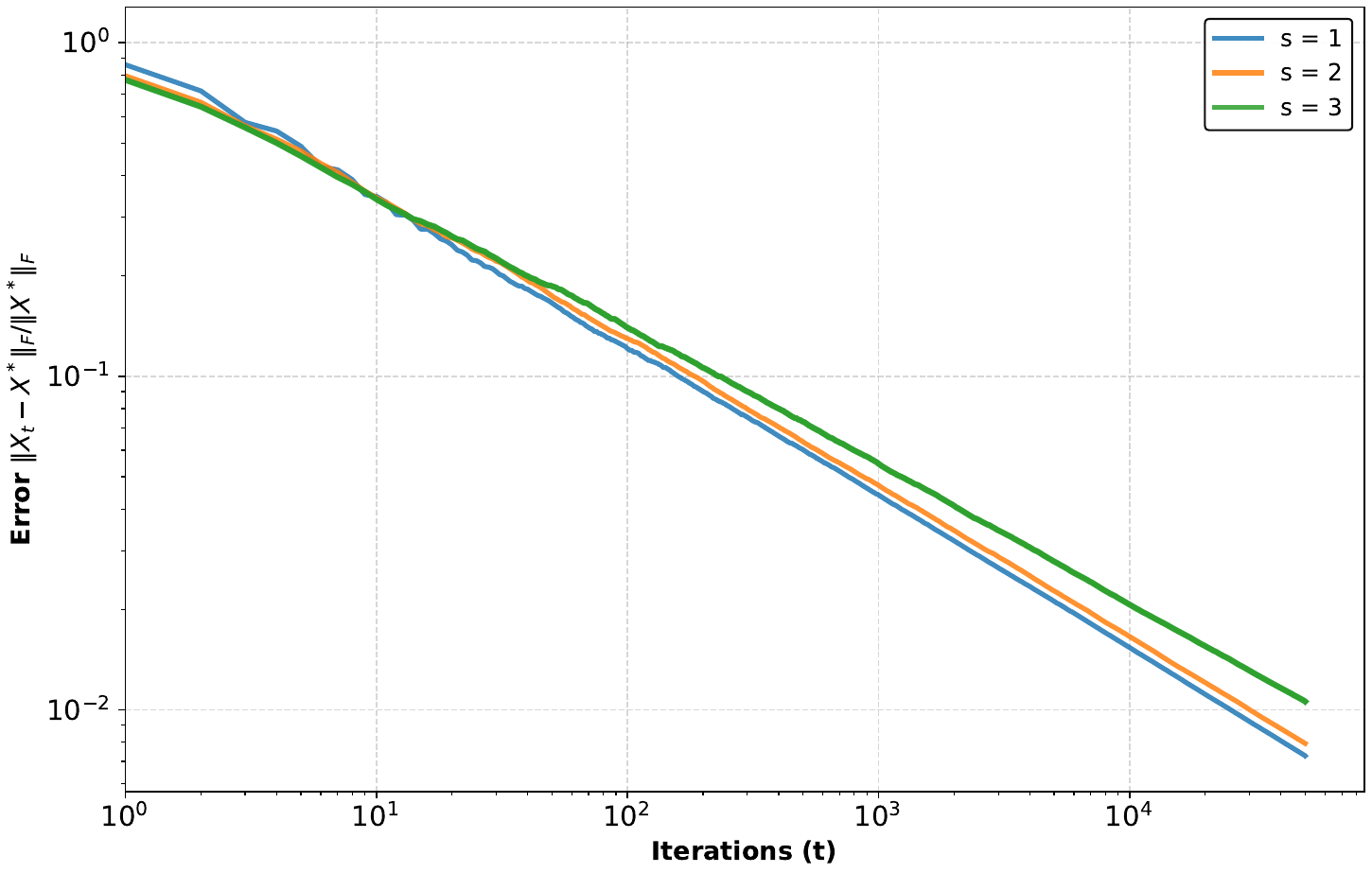}
    \caption{Convergence of RTSD-WD for low-rank matrix recovery under different truncation parameters $s$.}
    \label{fig:Convergence of RTSD-WD for LRMR under different truncation parameters s}
\end{figure}

\begin{remark}[Avoiding dimensional penalty via nuclear norm]
    We emphasize that the Lipschitz constant $L_{\mathcal{A}}$ in Theorem \ref{theorem:convergence_low_rank_rtsd} is established with respect to the \textit{nuclear norm}.
    This is dictated by the $\ell_1/\ell_2$-RIP condition to avoid the dimension-dependent penalty $\sqrt{n}$ that inevitably arises under the Frobenius norm for full-rank matrices.
    By contrast, \citet{xu2024convergence} establishes a dimension-free convergence guarantee with a Lipschitz constant of $L_{\mathcal{A}}=\sqrt{\frac{2}{\pi}}+\delta_{3r^*}$ relative to the Frobenius norm.
    This is possible because their algorithm employs iterative hard thresholding (IHT), which explicitly constrains the error residuals to be low-rank matrices (rank at most $3r^*$).
    The cost, however, is the requirement of a computationally expensive SVD at every iteration.
\end{remark}

\begin{remark}[Computational efficiency of $s=1$]
In Theorem \ref{theorem:convergence_low_rank_rtsd}, we choose the truncation parameter to $s=1$.
This is not merely a theoretical simplification but a crucial computational advantage.
For $s=1$, the update only requires the top singular vectors.
This can be quickly computed via the power method or other numerical methods, avoiding the $\mathcal{O}(n^3)$ of a full SVD.
\end{remark}

\begin{remark}[Accelerating iteration complexity via restarts]
    Although RTSD-WD converges at a sublinear rate of $\mathcal{O}(1/\sqrt{T})$, its overall iteration complexity can be significantly improved by exploiting a restart scheme \citep{roulet2017sharpness}.
    By executing the algorithm for a fixed number of inner iterations and utilizing the final iterate to initialize the subsequent stage over a strictly smaller, localized search space, the restarted scheme drastically reduces the total iteration complexity to $\mathcal{O}(\log(1/\epsilon))$ to achieve an $\epsilon$-accuracy.
    However, explicitly operating over these rigidly constrained search spaces at each stage is computationally intractable.
    Consequently, designing a practically efficient restart scheme that circumvents this computational bottleneck remains an interesting open challenge.
\end{remark}

\section{Numerical Experiments}
\label{section:experiments}
In this section, we will provide some numerical experiments to demonstrate the empirical performance of Muon-type optimizers and validate our theoretical findings.
All experiments were conducted on a single NVIDIA A100 GPU (40GB).

\paragraph{ReLU Neural Networks.}
We conduct experiments to demonstrate the empirical superiority of Muon and Spectral Descent in training non-smooth ReLU neural networks, with the results illustrated in Figure~\ref{fig:nn_experiments}.
Throughout the experiments, we initialize the weights of the layers with a uniform distribution $\mathcal{U}(-\frac{1}{\sqrt{k}}, \frac{1}{\sqrt{k}})$, where $k$ is the number of input features.

For the evaluations in Figure~\ref{fig:mnist_constant} and \ref{fig:mnist_decay}, we randomly select 150 training samples from the MNIST dataset and train a two-layer fully connected ReLU network parameterized by weight matrices $W_1\in\mathbb{R}^{512\times 784}$ and $W_2\in\mathbb{R}^{10\times 512}$ with a cross-entropy loss.
In the constant step size setting (Figure~\ref{fig:mnist_constant}), the learning rates for Muon, Spectral Descent, and Gradient Descent are tuned from the set $\{0.4, 0.5, 0.6, 0.7, 0.8, 0.9, 1.0\}$, while Adam's step size is tuned from $\{5, 6, 7, 8, 9\} \times 10^{-3}$.
In the geometric decay setting (Figure~\ref{fig:mnist_decay}), the initial step sizes for Muon, Spectral Descent, and Gradient Descent are tuned from $\{0.5, 0.6, 0.7, 0.8, 0.9, 1.0\}$, and for Adam from $\{3, 5, 7, 9, 10\} \times 10^{-3}$.
The decay factor for all optimizers is selected from $\{0.9, 0.95, 0.97, 0.99\}$.
To ensure reliability, all reported results are averaged over five independent runs with different random seeds.

For the evaluations in Figure~\ref{fig:cifar_constant} and \ref{fig:cifar_decay}, we randomly select 200 training samples from the CIFAR-10 dataset and adjust the first weight matrix to $W_1\in\mathbb{R}^{512\times 3072}$ to accommodate the input dimensionality.
In the constant step size setting (Figure~\ref{fig:cifar_constant}), we conduct a grid search for the learning rates of Muon, Spectral Descent, and Gradient Descent over $\{10^{-1}, 5 \times 10^{-2}, 10^{-2}, 5 \times 10^{-3}, 10^{-3}, 5 \times 10^{-4}, 10^{-4}\}$, whereas Adam's step size is tuned across $\{10^{-1}, 5 \times 10^{-2}, 10^{-2}, 5 \times 10^{-3}, 5 \times 10^{-4}, 10^{-3}, 10^{-4}\}$.
For the geometric decay schedule (Figure~\ref{fig:cifar_decay}), the initial step sizes for Muon, Spectral Descent, and Gradient Descent are selected from $\{0.15, 0.2, 0.3, 0.4, 0.5\}$, while the initial step size for Adam is tuned from $\{2, 4, 5, 6, 8\} \times 10^{-4}$.
The decay factor remains identical to those used in the MNIST tasks.
To ensure reliability, all reported results are averaged over five independent runs with different random seeds.

Our experimental results consistently demonstrate that Muon and Spectral Descent achieve significantly accelerated convergence compared to Adam and Gradient Descent, while maintaining remarkable robustness to learning rate schedules across both the MNIST and CIFAR-10 datasets.
Establishing a rigorous theoretical framework for these spectral descent methods within the non-convex and non-smooth landscapes of deep neural networks represents a pivotal direction for future research.

\paragraph{Low-rank Matrix Recovery.}
To empirically validate the theoretical guarantees of the Regularized Truncated Spectral Descent with Weight Decay (RTSD-WD) algorithm \eqref{eq:regularized_truncated_spectral_descent_wd}, we evaluate its performance on the robust low-rank matrix recovery task under various noise regimes and truncation settings.

We consider a matrix sensing problem with dimensions $n_1 = n_2 = 50$ and a target rank $r^* = 3$.
The number of linear measurements is set to $m = 10 n r^*$, where the sensing operator $\mathcal{A}$ is constructed from a standard Gaussian ensemble.
The ground-truth low-rank matrix $X^* \in \mathbb{R}^{50 \times 50}$ is generated as $X^* = UV^\top$, with the factor matrices $U, V \in \mathbb{R}^{50 \times 3}$ sampled with i.i.d. standard Gaussian entries.
The observations are formulated as $b = \mathcal{A}(X^*) + e_1 + e_2$, where $e_1$ and $e_2$ represent sparse outliers and dense magnitude noise, respectively.
Following the framework of RTSD-WD \eqref{eq:regularized_truncated_spectral_descent_wd}, we optimize the objective function $f(X):=\frac{1}{m}\|\mathcal{A}(X) - b\|_1$ using the truncation parameter $s=1$, the regularization parameter $\lambda = \frac{1}{\|X^*\|_*}$, the step size $\eta_t = \frac{2}{\lambda(t+3)}$ and the tolerance $\epsilon_t = \frac{0.08}{\lambda}\sqrt{\frac{m}{\pi(t+3)}}$, respectively.
To facilitate efficient computation, we replace the subdifferential set $T$ with a surrogate set $H$.
Both sets have the same generalized curvature constant $\mathcal{C}_f(\epsilon)$ and satisfy the same approximate subgradient inequality required by the convergence analysis of RTSD-WD, see Appendix \ref{app:auxiliary_lemmas_robust_low_rank_matrix_recovery}.
Numerically, for a residual vector $z = \mathcal{A}(X) - b$, the dual variable $v$ is determined component-wise:
$v_i = \text{sgn}(z_i)/m$ for $|z_i| > \epsilon_t$, and $v_i = z_i / (\epsilon_t m)$ otherwise.
This ensures $v_i \in [-1/m, 1/m]$ as required by the theory.
The gradient direction is then efficiently computed via the adjoint operator $G = \mathcal{A}^*(v)$.
The following results should be interpreted as empirical evidence for a practical surrogate implementation inspired by the theory.
For all trials, the algorithm is initialized with a standard Gaussian matrix scaled by $10^{-4}$ to ensure the initial iterate $X^{(0)}$ resides within the feasible nuclear norm ball $\mathcal{X}$.

Figure~\ref{fig:Convergence of RTSD-WD for LRMR under different noise conditions} illustrates the convergence trajectories of RTSD-WD across three distinct noise scenarios: (i) a noiseless baseline ($e_1 = \mathbf{0}, e_2 = \mathbf{0}$); (ii) a sparse noise setting where fraction $p=6\%$ of the entries in $e_1$ are corrupted by noise drawn from $\mathcal{N}(0, 100)$; and (iii) a mixed noise regime combining the aforementioned sparse outliers with dense noise $e_2 \sim \mathcal{N}(0, 1)$.
As depicted in the results, RTSD-WD demonstrates a remarkably rapid initial descent during the first 3,000 iterations across all conditions.
In the noiseless and sparse noise cases, the algorithm achieves steady convergence toward the optimal solution.
In the mixed noise setting, the iterates successfully converge to a stable neighborhood of $X^*$, with the residual error floor governed by the dense noise level $\xi$.
These findings align with Theorem \ref{theorem:convergence_low_rank_rtsd}, confirming the algorithm's robustness to outlier measurements.

We further investigate the sensitivity of the algorithm to the truncation parameter $s \in \{1, 2, 3\}$ under the sparse noise setting (where $p=6\%$).
As illustrated in Figure~\ref{fig:Convergence of RTSD-WD for LRMR under different truncation parameters s}, the relative error trajectories for all tested values of $s$ exhibit a clear sublinear convergence rate, which is consistent with the $O(1/\sqrt{T})$ guarantee in Theorem~\ref{theorem:convergence_low_rank_rtsd}.
This confirms that the convergence behavior remains remarkably robust to the choice of the truncation parameter, even in the presence of outliers.
Notably, the configuration with $s=1$ not only achieves a marginally lower relative error but also offers a significant computational advantage, which only requires the computation of the top-1 singular vector at each iteration.
To ensure the statistical reliability of these observations, all reported results are averaged over five independent runs with different random seeds.

\paragraph{Linear Programming.}
To empirically validate the optimization dynamics of Spectral Descent (SD) and Truncated Spectral Descent (TSD), we evaluate them on a synthetic linear programming (LP) problem.
Consider a general LP formulated as:
\begin{equation}
    \min_{X\in \mathcal{X} \subseteq \mathbb{R}^{n_1 \times n_2}} \quad f(X)
\end{equation}
where $f(X) = \langle C, X \rangle$ is a linear function and $\mathcal{X}$ is a polyhedral set. It is well-established that linear programming satisfies the sharpness property with respect to its optimal solution set $\mathcal{X}^*$, as shown by \citet[Theorem~11]{burke1993weak}.
In our experiments, we consider the following specific LP instance:
\begin{equation}
    \label{eq:linear_programming}
\begin{aligned}
&\min_{W, \xi} \quad  \sum_{i=1}^{N} \xi_i \\
\text{s.t.} \quad & \xi_i \ge y_i - \langle X_i, W \rangle, \quad \forall i\in [N], \\
& \xi_i \ge -(y_i - \langle X_i, W \rangle), \quad \forall i\in [N].
\end{aligned}
\end{equation}

By eliminating the slack variables $\xi_i$, the constrained LP problem \eqref{eq:linear_programming} seamlessly reduces to the following unconstrained empirical risk minimization (ERM) problem with a Least Absolute Deviation (LAD) objective:
\begin{equation}
    \min_{W\in\mathbb{R}^{n_1\times n_2} } \quad f(W) := \sum_{i=1}^{N} | y_i - \langle X_i, W \rangle |.
\end{equation}
It is straightforward to verify that this objective function is a proper polyhedral convex function. If the optimal solution set $\mathcal{W}^*$ is non-empty, there exists a constant $\mu > 0$ such that $f$ is $\mu$-sharp with respect to $\mathcal{W}^*$, i.e.,
\begin{equation}
    \nonumber
    f(W) - f(W^*) \geq \mu \cdot \text{dist}(W, \mathcal{W}^*), \quad \forall W \in \mathbb{R}^{n_1 \times n_2}.
\end{equation}
Consequently, optimizing $f(W)$ serves as an ideal testbed to investigate the convergence behaviors of SD and TSD under the combined theoretical conditions of convexity, Lipschitz continuity, and sharpness.

For the empirical setup, we set the sample size to $N=2000$ and the matrix dimensions to $n_1 = n_2 = 10$.
Both the input data matrices $X_i$ and the ground-truth weight matrix $W^*$ are drawn with i.i.d. standard Gaussian entries $\mathcal{N}(0, 1)$.
The target labels are generated via a noiseless linear mapping $y_i = \langle X_i, W^* \rangle$.
We initialize the weight matrix $W^{(0)}$ using a standard Gaussian distribution.

We now analyze the specific convergence behaviors depicted in Figure~\ref{fig:12_grid_comparison}. In this evaluation, we consider three distinct algorithmic configurations: standard Spectral Descent (SD) alongside Truncated Spectral Descent (TSD) with truncation parameters $s=3$ and $s=1$, which correspond to the top, middle, and bottom rows, respectively.
Regarding the hyperparameter settings, all algorithms employ a geometric decay step size schedule. The initial step sizes $\eta_0$ for SD and TSD ($s=1$) are tuned from the set $\{0.8, 0.85, 0.9, 1.0, 1.1\}$, while $\eta_0$ of TSD ($s=3$) is selected from $\{1, 1.2, 1.4, 1.5, 2\}$
Their decay factors are selected from the range $\{0.89, 0.91, 0.93, 0.96, 0.97, 0.99\}$.
The first column of Figure~\ref{fig:12_grid_comparison} illustrates the convergence trajectories under various step size configurations.
For the analysis in the subsequent three columns, we isolate and report the results using the optimal parameter settings that yielded the fastest convergence in the first column.
\begin{itemize}
    \item Observing the first column, it is evident that the objective loss decreases rapidly across all three configurations.
This rapid descent empirically validates that satisfying the conditions of convexity, Lipschitz continuity, and sharpness provides a highly favorable landscape for spectral updates.
    \item In the second column, we monitor the evolution of the subgradient norm $\|G_t\|_F$.
Because the objective function $f$ is convex and satisfies the sharpness condition, we can leverage this subgradient norm to empirically estimate the condition parameter $\kappa=\frac{\mu}{L}$.
The $\kappa$ calculated through this empirical observation effectively serves as an upper bound for the true condition parameter of the underlying problem.
    \item It is crucial to highlight that this specific linear programming instance does not strictly satisfy the stringent theoretical assumptions regarding the condition parameter posited in our foundational framework.
Nevertheless, the alignment metric in the third column reveals a crucial insight.
Throughout the entire optimization trajectory, the alignment remains strictly positive and is significantly larger than the empirically estimated $\kappa$. This robust behavior not only demonstrates that the algorithms can succeed beyond their strict theoretical regimes, but also indicates that our current theoretical bounds are relatively loose.
It strongly suggests that the theoretical guarantees can be further tightened by exploiting specific problem structures and leveraging advanced probabilistic tools in future work.
    \item Finally, as shown in the fourth column, this persistent high alignment naturally translates into a steady decline in the relative parameter error $\|W_t - W^*\|_F / \|W^*\|_F$, confirming the reliable recovery of the optimal solution for both standard SD and its truncated variants.
\end{itemize}

\begin{figure}[H]
    \centering
    \springersetfigwidth
    \begin{minipage}{\springerfigwidth}
    \centering

    \begin{subfigure}[b]{0.23\linewidth}
        \centering
        \includegraphics[width=\linewidth,height=\springerfigmaxheight,keepaspectratio]{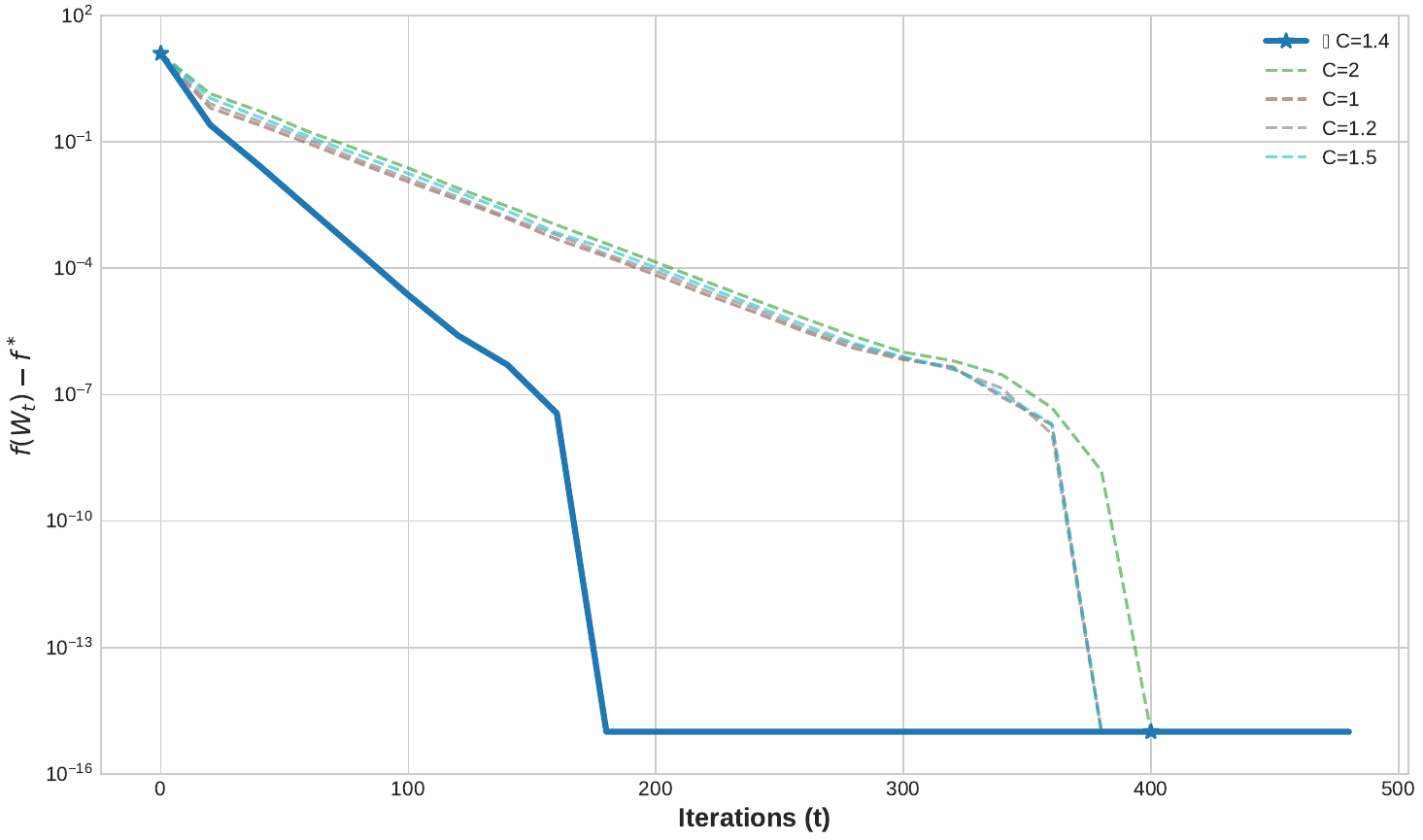}
        \caption{Loss}
        \label{fig:a1}
    \end{subfigure}
    \hfill
    \begin{subfigure}[b]{0.23\linewidth}
        \centering
        \includegraphics[width=\linewidth,height=\springerfigmaxheight,keepaspectratio]{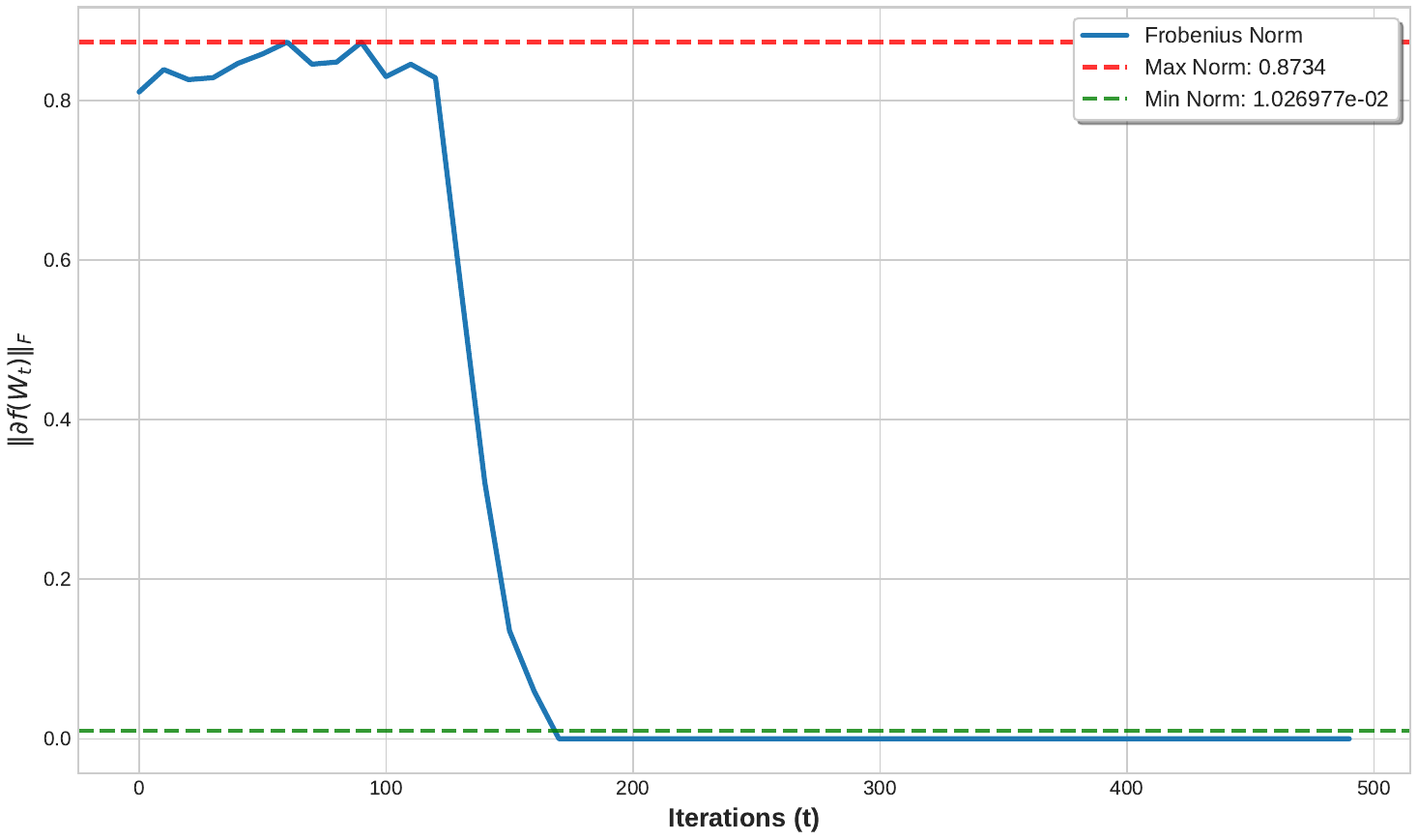}
        \caption{Gradient Norm}
        \label{fig:a2}
    \end{subfigure}
    \hfill
    \begin{subfigure}[b]{0.23\linewidth}
        \centering
        \includegraphics[width=\linewidth,height=\springerfigmaxheight,keepaspectratio]{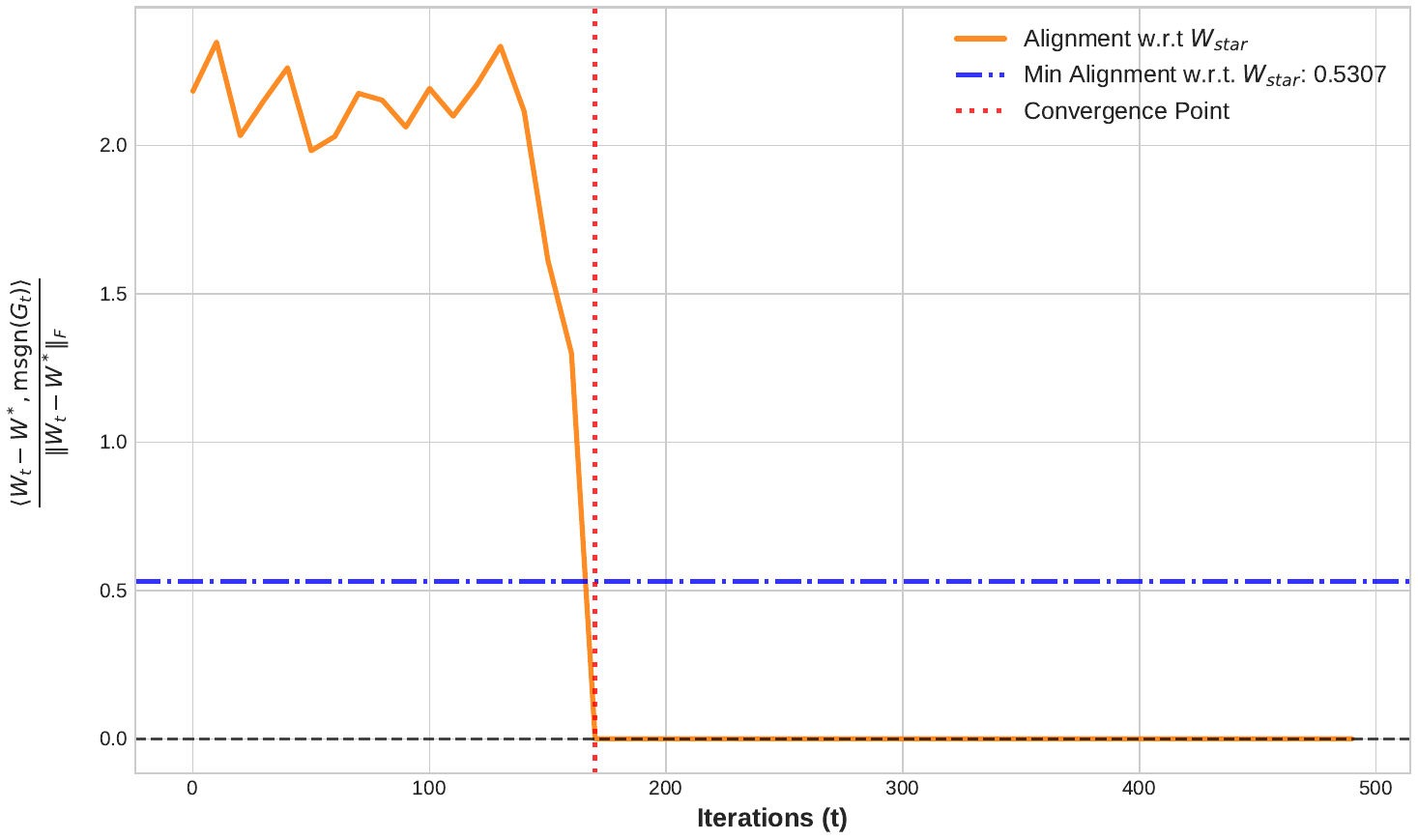}
        \caption{Alignment}
        \label{fig:a3}
    \end{subfigure}
    \hfill
    \begin{subfigure}[b]{0.23\linewidth}
        \centering
        \includegraphics[width=\linewidth,height=\springerfigmaxheight,keepaspectratio]{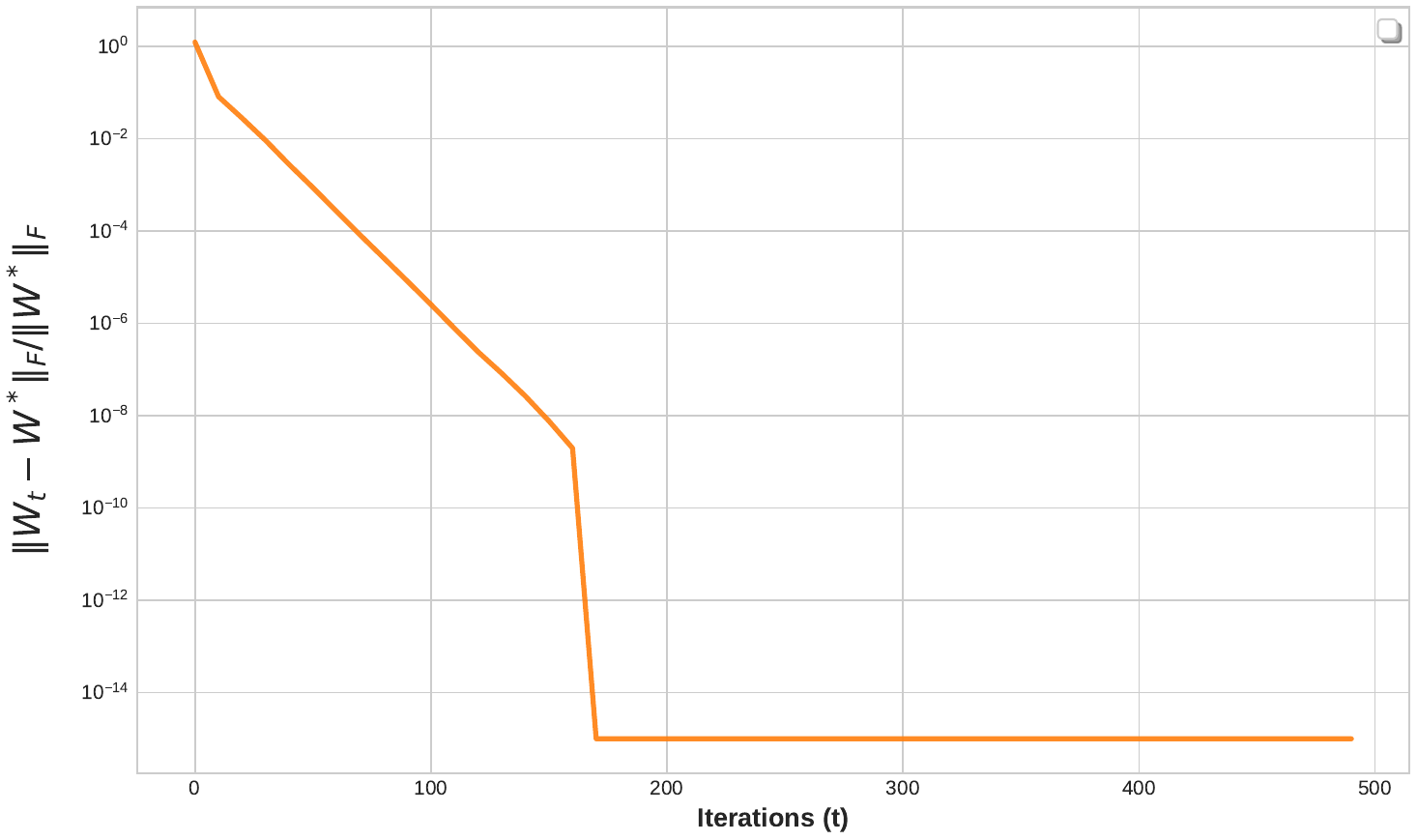}
        \caption{Relative error}
        \label{fig:a4}
    \end{subfigure}

    \vspace{0.05cm}

    \begin{subfigure}[b]{0.23\linewidth}
        \centering
        \includegraphics[width=\linewidth,height=\springerfigmaxheight,keepaspectratio]{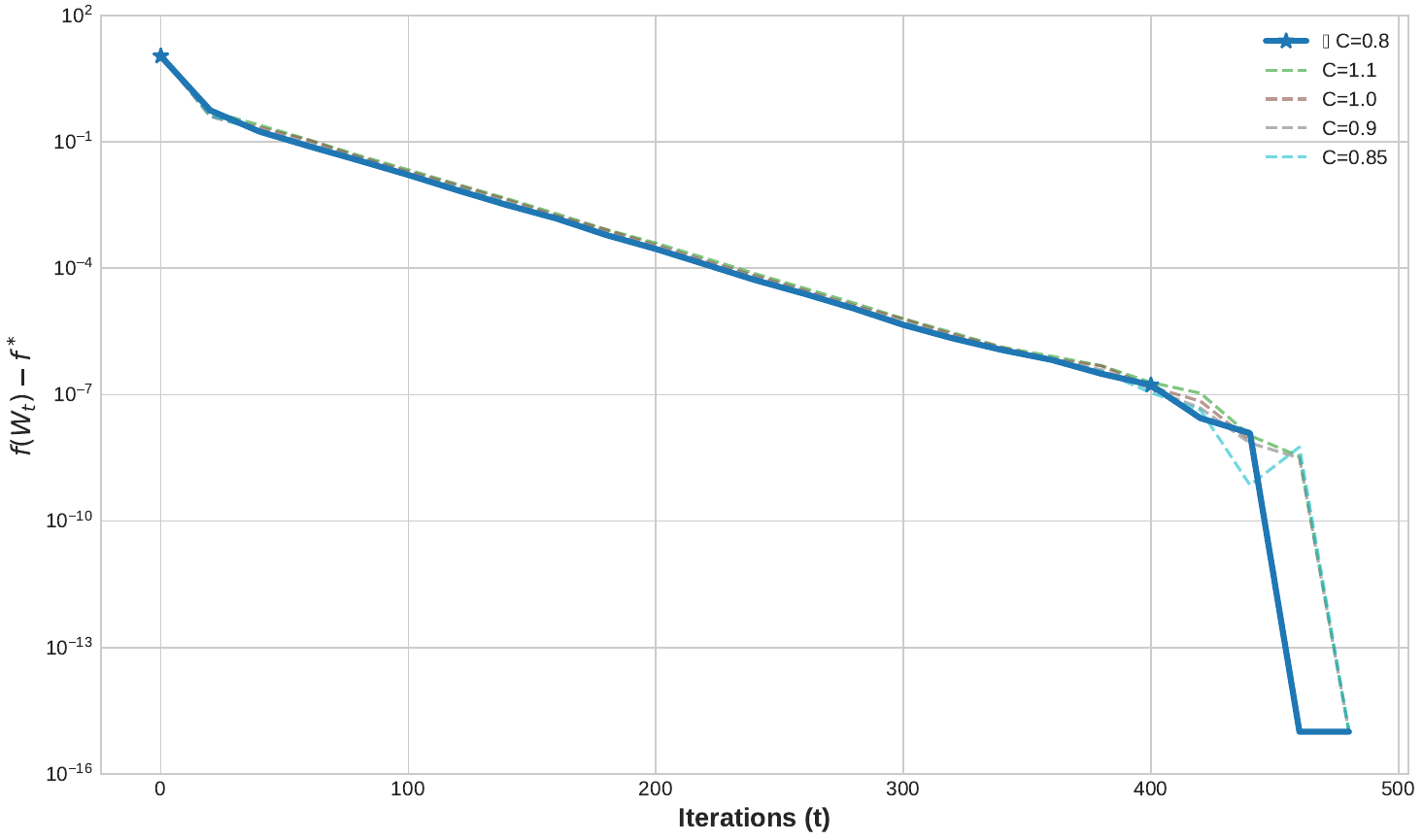}
        \caption{Loss}
        \label{fig:b1}
    \end{subfigure}
    \hfill
    \begin{subfigure}[b]{0.23\linewidth}
        \centering
        \includegraphics[width=\linewidth,height=\springerfigmaxheight,keepaspectratio]{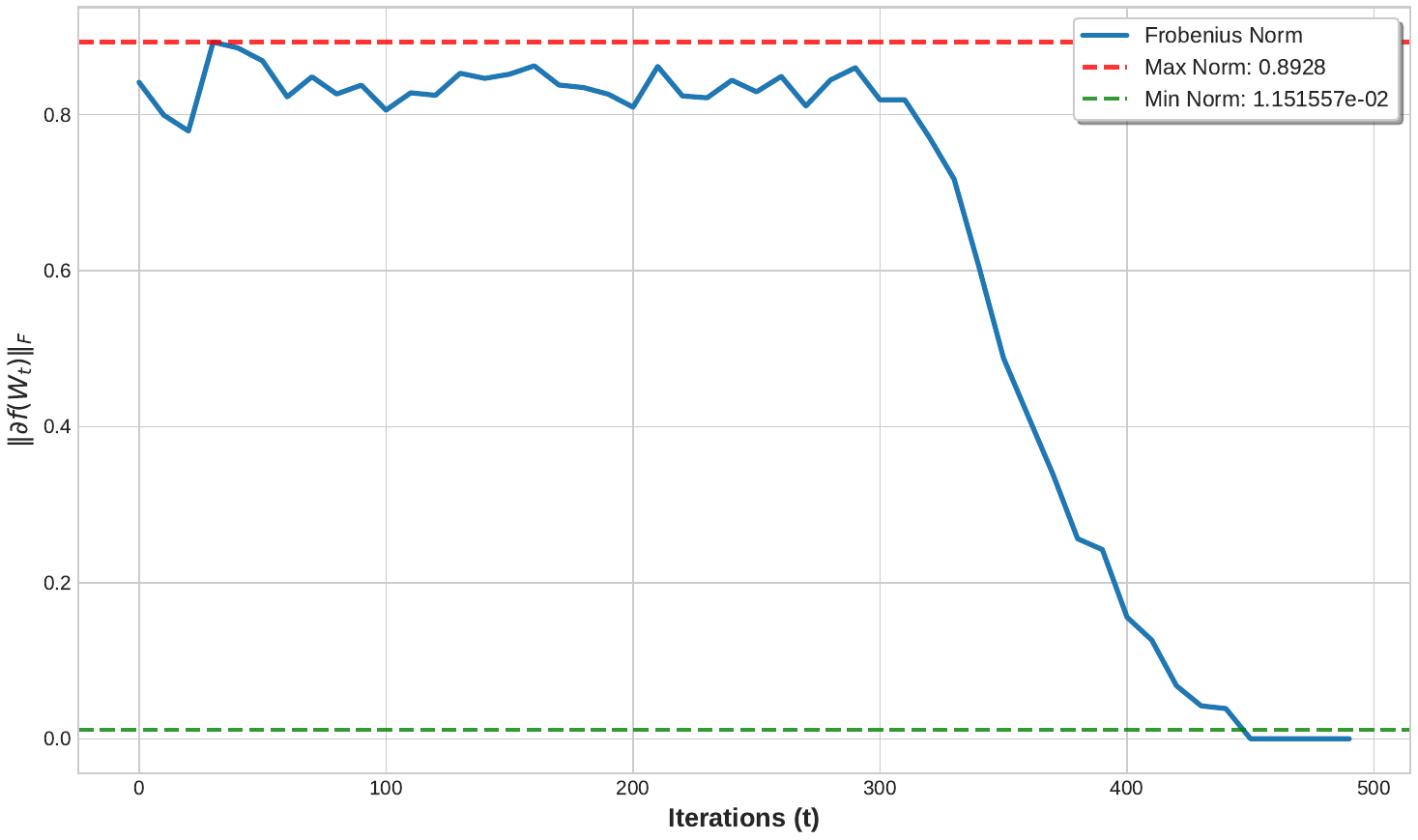}
        \caption{Gradient Norm}
        \label{fig:b2}
    \end{subfigure}
    \hfill
    \begin{subfigure}[b]{0.23\linewidth}
        \centering
        \includegraphics[width=\linewidth,height=\springerfigmaxheight,keepaspectratio]{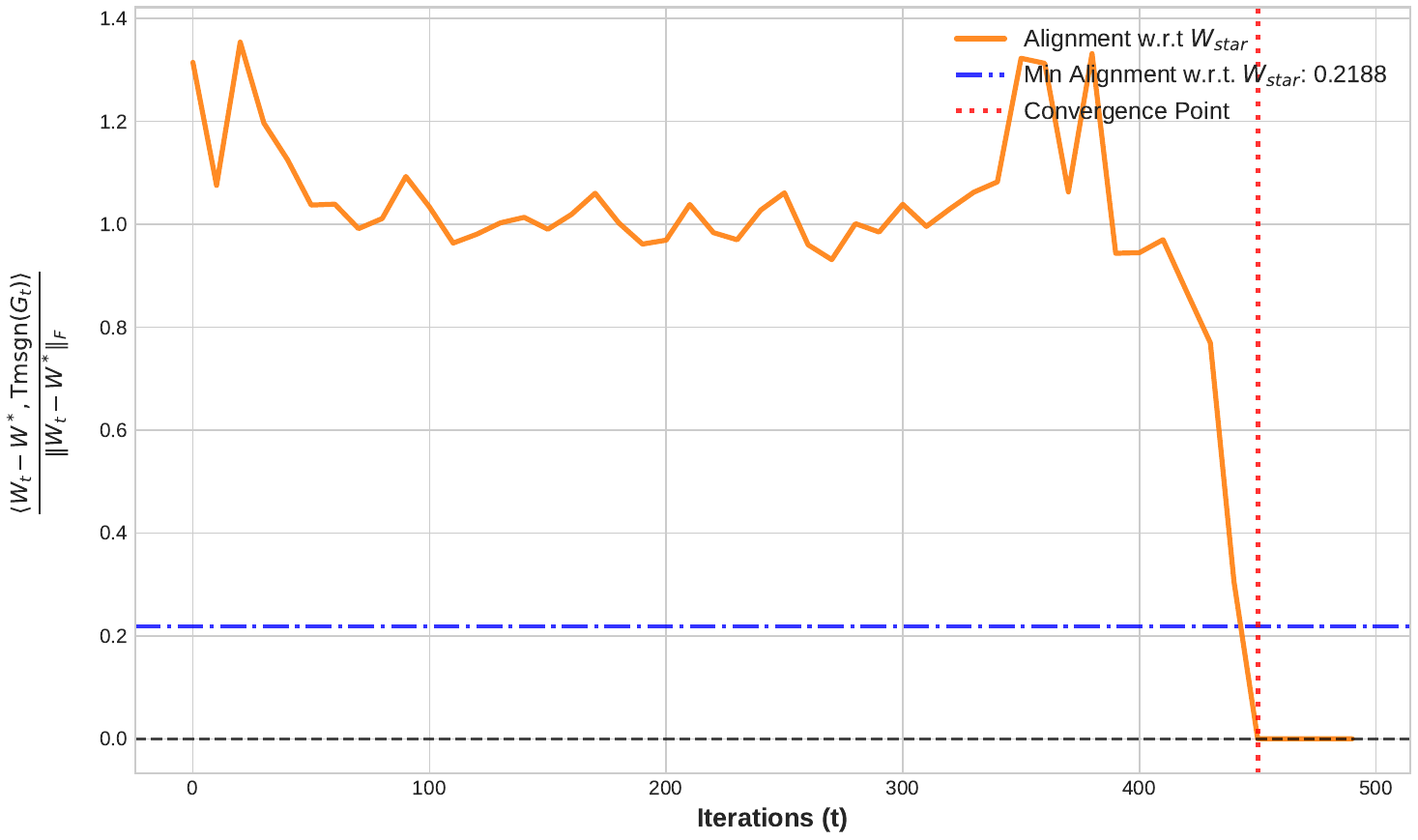}
        \caption{Alignment}
        \label{fig:b3}
    \end{subfigure}
    \hfill
    \begin{subfigure}[b]{0.23\linewidth}
        \centering
        \includegraphics[width=\linewidth,height=\springerfigmaxheight,keepaspectratio]{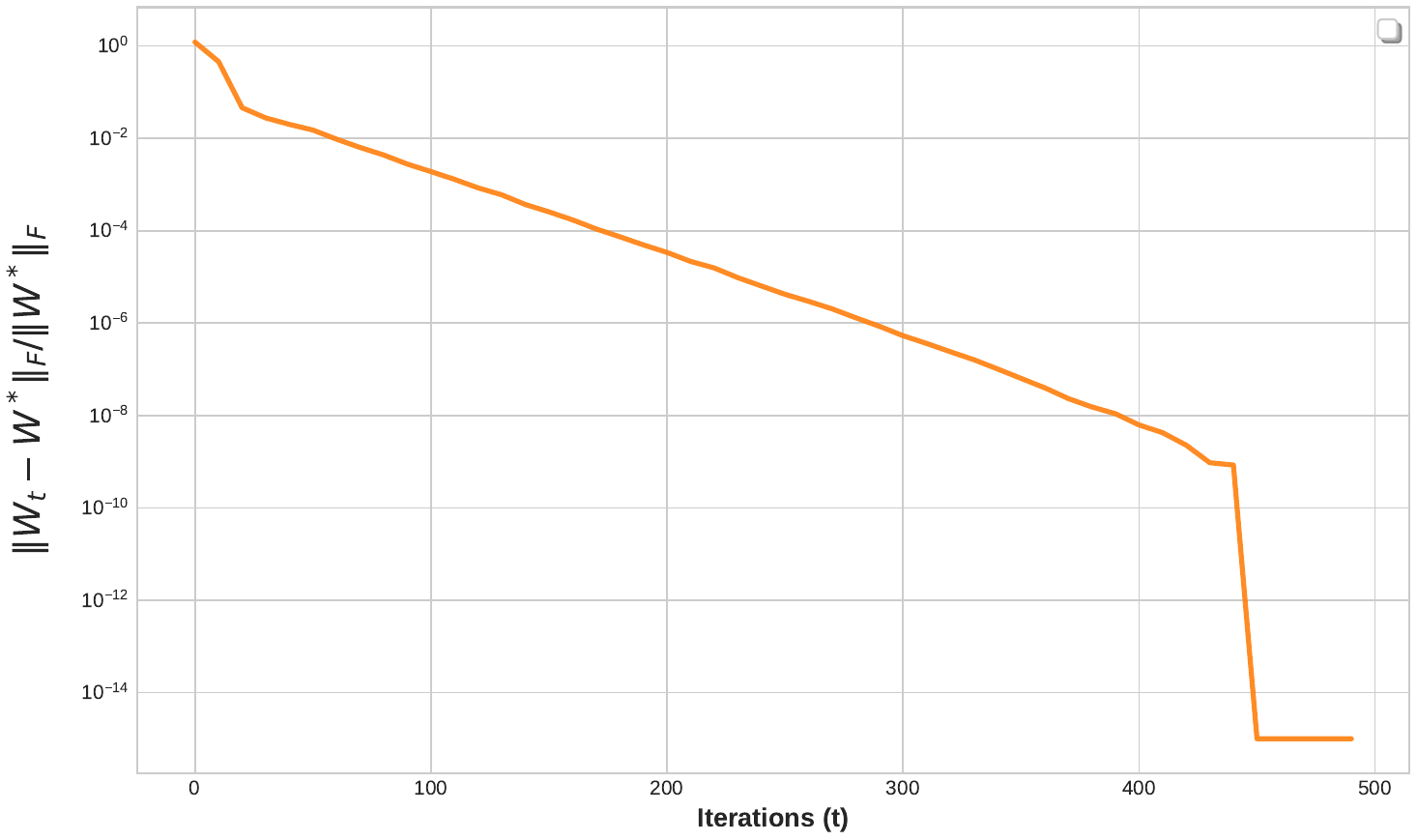}
        \caption{Relative error}
        \label{fig:b4}
    \end{subfigure}

    \vspace{0.05cm}

    \begin{subfigure}[b]{0.23\linewidth}
        \centering
        \includegraphics[width=\linewidth,height=\springerfigmaxheight,keepaspectratio]{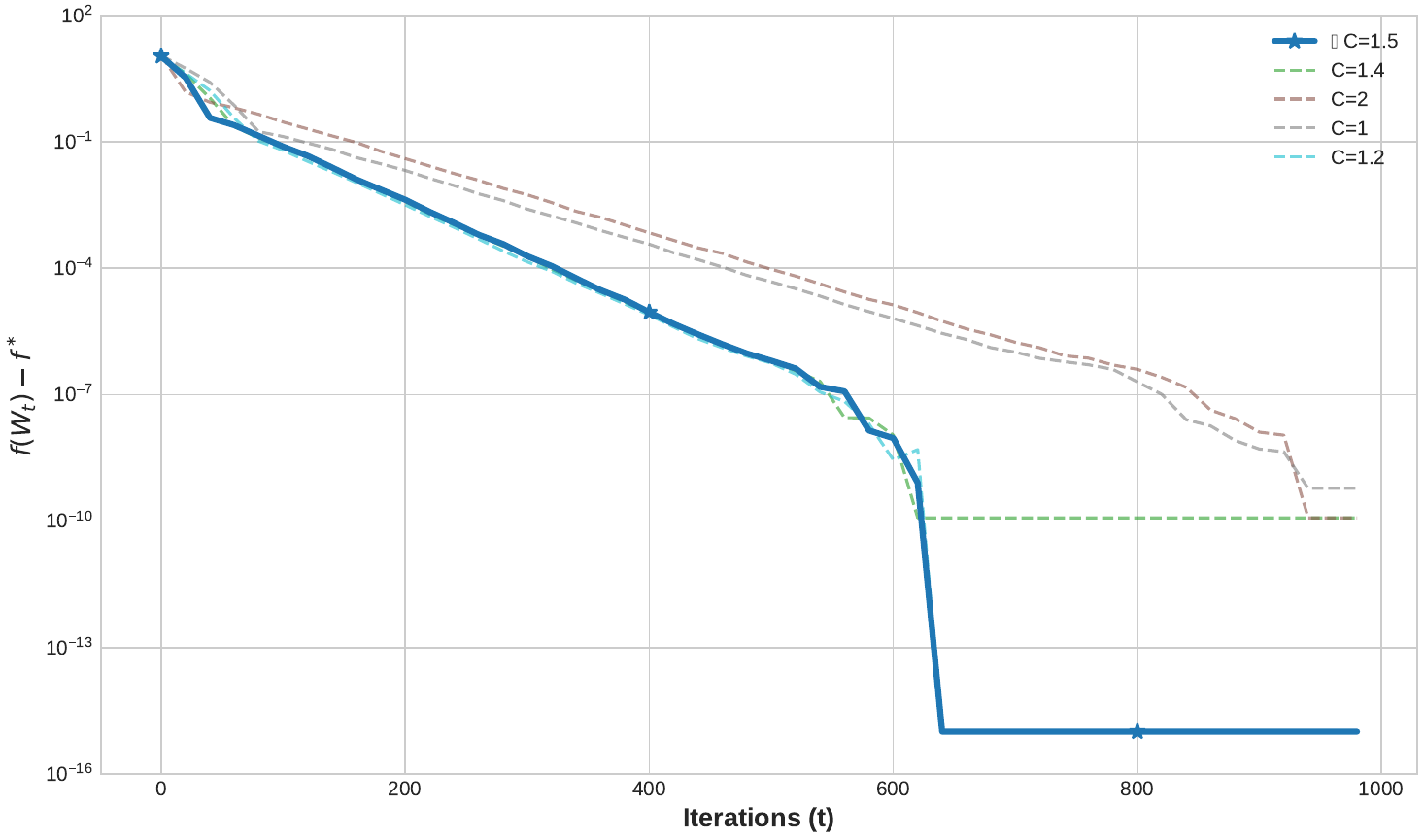}
        \caption{Loss}
        \label{fig:c1}
    \end{subfigure}
    \hfill
    \begin{subfigure}[b]{0.23\linewidth}
        \centering
        \includegraphics[width=\linewidth,height=\springerfigmaxheight,keepaspectratio]{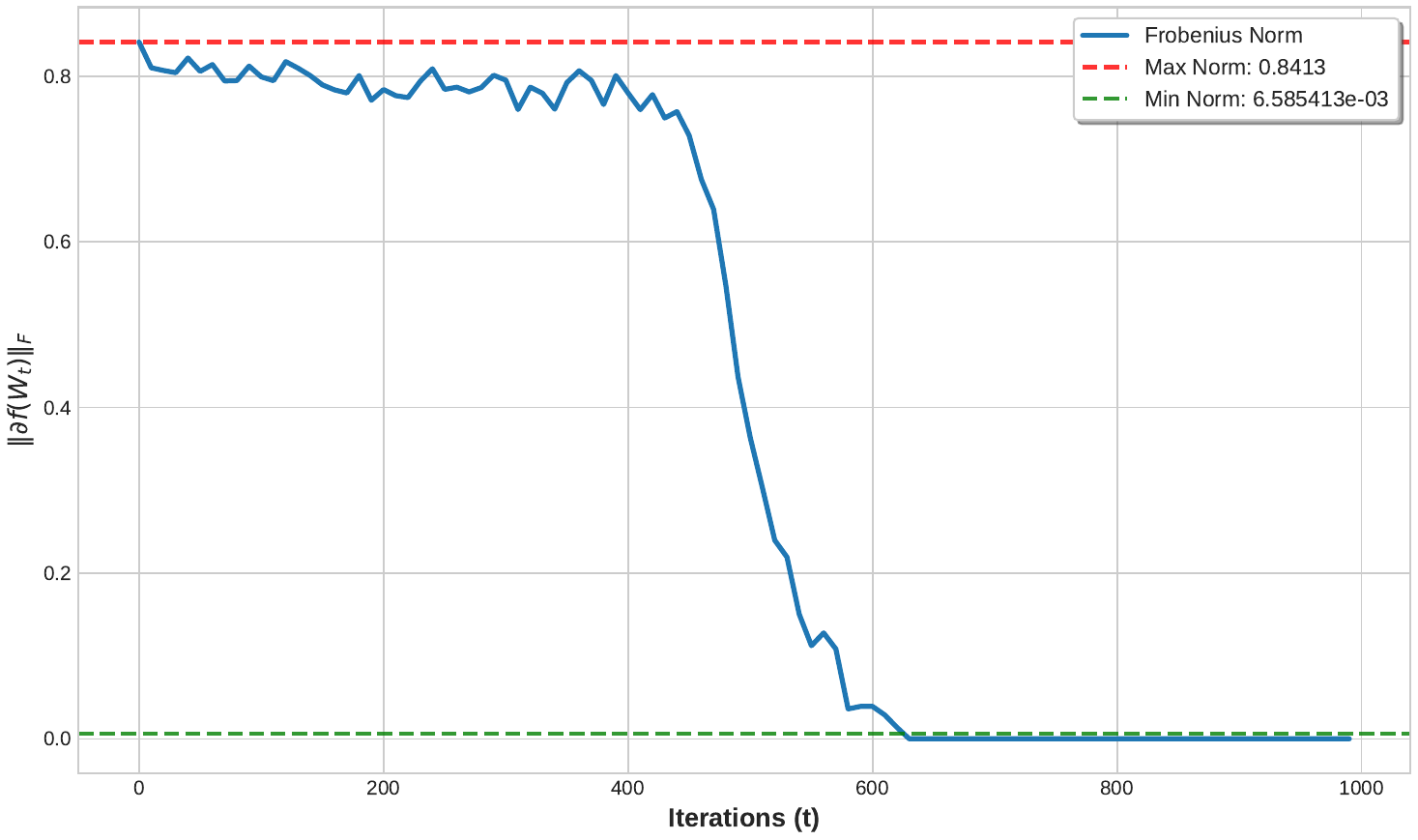}
        \caption{Gradient Norm}
        \label{fig:c2}
    \end{subfigure}
    \hfill
    \begin{subfigure}[b]{0.23\linewidth}
        \centering
        \includegraphics[width=\linewidth,height=\springerfigmaxheight,keepaspectratio]{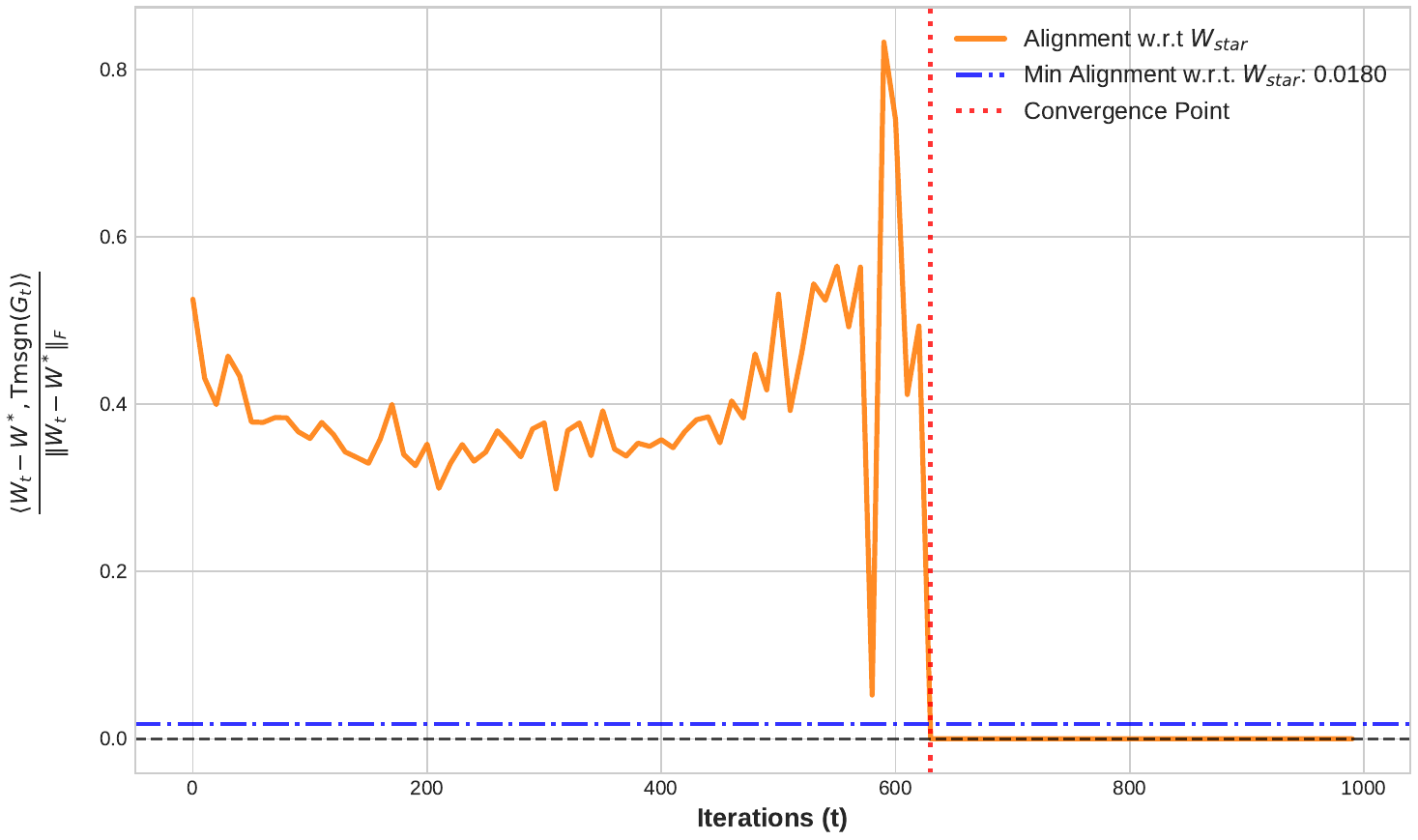}
        \caption{Alignment}
        \label{fig:c3}
    \end{subfigure}
    \hfill
    \begin{subfigure}[b]{0.23\linewidth}
        \centering
        \includegraphics[width=\linewidth,height=\springerfigmaxheight,keepaspectratio]{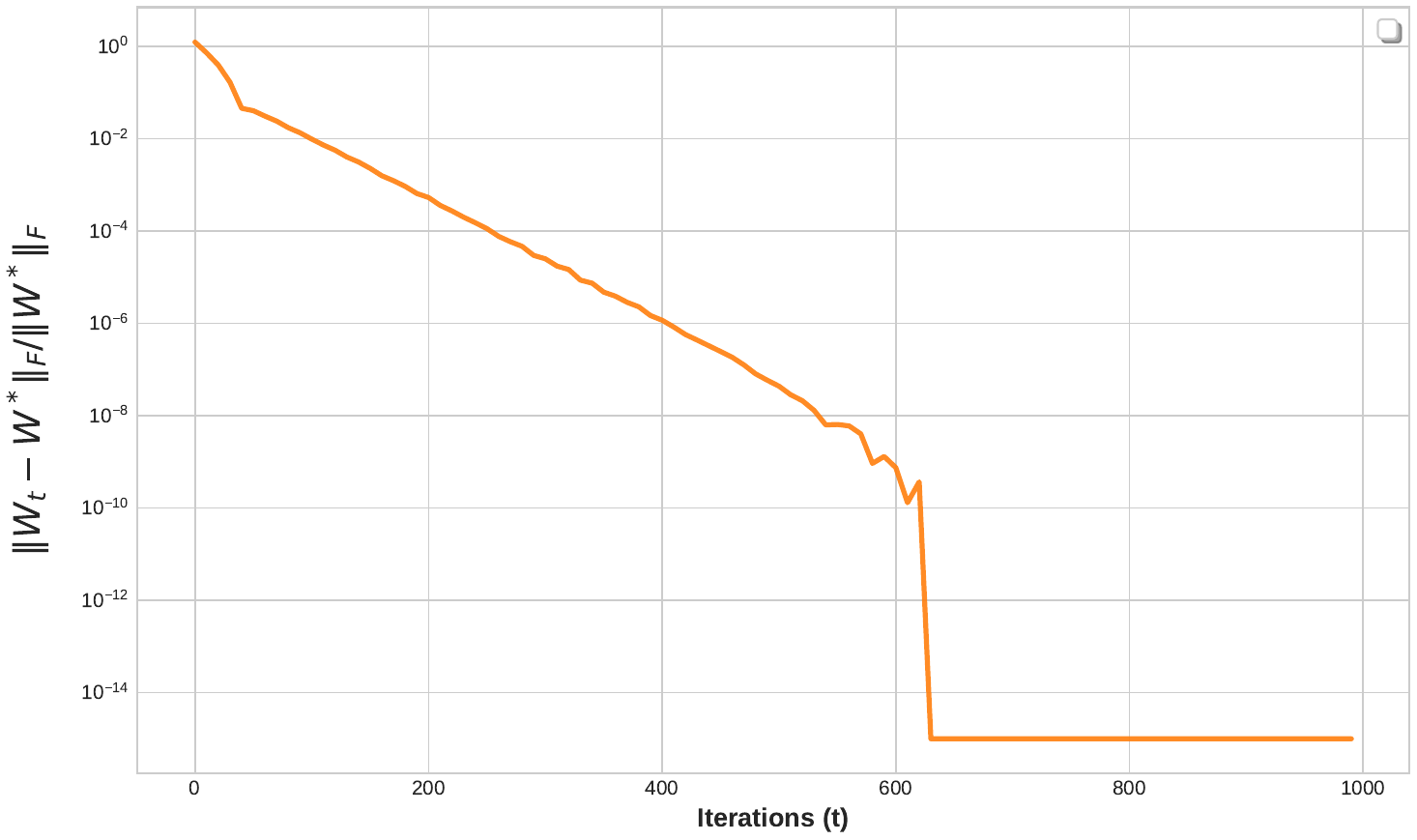}
        \caption{Relative error}
        \label{fig:c4}
    \end{subfigure}
    \end{minipage}

    \caption{Optimization dynamics of Spectral Descent and Truncated Spectral Descent on a linear programming problem \eqref{eq:linear_programming}.
    \textbf{Rows (from top to bottom):} Trajectories generated by standard SD, TSD with $s=3$, and TSD with $s=1$, respectively.
    \textbf{Columns (from left to right):} Evolution of the loss gap $f(W_t) - f^*$, subgradient norm $\|G_t\|_F$, alignment $ \left\langle W_t-W^*, (T)msgn(G_t)\right\rangle/\|W_t-W^*\|_F$, and relative parameter error $\|W_t - W^*\|_F / \|W^*\|_F$.}
    \label{fig:12_grid_comparison}
\end{figure}

\paragraph{Matrix Classification.}
Linear classification is a fundamental problem in machine learning \citep{soudry2018implicit}, where the goal is to learn a weight $W$ that effectively separates data points of different classes. When the data points are represented as matrices (e.g., images), the problem can be naturally formulated as matrix linear classification \citep{luo2015support}. Using the hinge loss, the objective function for this task is expressed as:
\begin{equation*}
    f(W) := \frac{1}{m}\sum_{i=1}^m \textbf{ReLU}(1 - y_i \langle W, X_i \rangle) = \frac{1}{m}\sum_{i=1}^m \max(0, 1 - y_i \langle W, X_i \rangle).
\end{equation*}
It is straightforward to verify that this objective function is a proper polyhedral convex function. Consequently, it inherently satisfies the sharpness property with respect to its optimal solution set $\mathcal{W}^*$.

To empirically evaluate the optimization dynamics of Spectral Descent (SD) for matrix classification, we construct a synthetic dataset comprising $m=2000$ samples.
Each data matrix $X_i \in \mathbb{R}^{10 \times 10}$ is generated with i.i.d. standard Gaussian entries $\mathcal{N}(0, 1)$.
The corresponding ground-truth labels are generated via $\hat{y}_i = \text{sgn}(\langle W^*, X_i \rangle)$, where the optimal weight matrix $W^* \in \mathbb{R}^{10 \times 10}$ is also drawn from $\mathcal{N}(0, 1)$.
To simulate a realistic, non-separable scenario and assess the algorithm's robustness, we randomly flip $10\%$ of the labels to introduce label noise.
It is important to note that for the non-smooth hinge loss objective, the optimal solution $W^*$ cannot be solved explicitly.
Therefore, we employ the CVXPY solver to obtain a high-precision numerical solution $W_{star}$, which serves as the reference $W^*$ for evaluating the parameter convergence of our algorithm.
We initialize $W^{(0)}$ from a standard Gaussian distribution and apply SD with a geometric decay step size schedule.
As depicted in Figure~\ref{fig:image1}, the initial step size (denoted as $C$) is tuned from the set $\{0.6, 0.7, 0.8, 0.9, 1.0\}$, while the decay factor is selected from $\{0.89, 0.91, 0.93, 0.96, 0.97, 0.99\}$.
For the relative error evaluation presented in Figure~\ref{fig:image2}, we specifically report the trajectory corresponding to the optimal parameter configuration that achieved the fastest convergence in Figure~\ref{fig:image1}.

Figure~\ref{fig:MC_convergence} illustrates the convergence behaviors of the objective loss gap $f(W_t) - f^*$ and the relative parameter error $\|W_t - W^*\|_F / \|W^*\|_F$ across 10,000 iterations.
As depicted in Figure~\ref{fig:image1}, the optimality gap exhibits a rapid and remarkably consistent descent across all tested initial step sizes $C$.
The loss drops precipitously in the early iterations and smoothly converges to a high-precision plateau of approximately $10^{-6}$. This empirical trajectory strongly supports our theoretical assertion: the combination of polyhedral convexity and the sharpness property provides a highly favorable landscape that allows spectral updates to efficiently minimize the non-smooth hinge loss.
Furthermore, Figure~\ref{fig:image2} reveals the algorithm's powerful recovery capabilities.
Despite the dataset being linearly inseparable, the relative parameter error strictly and rapidly decays.

  \begin{figure}[H]
    \centering
    \springersetfigwidth
    \begin{minipage}{\springerfigwidth}
    \centering

    \begin{subfigure}[b]{0.48\linewidth}
        \centering
        \includegraphics[width=\linewidth,height=\springerfigmaxheight,keepaspectratio]{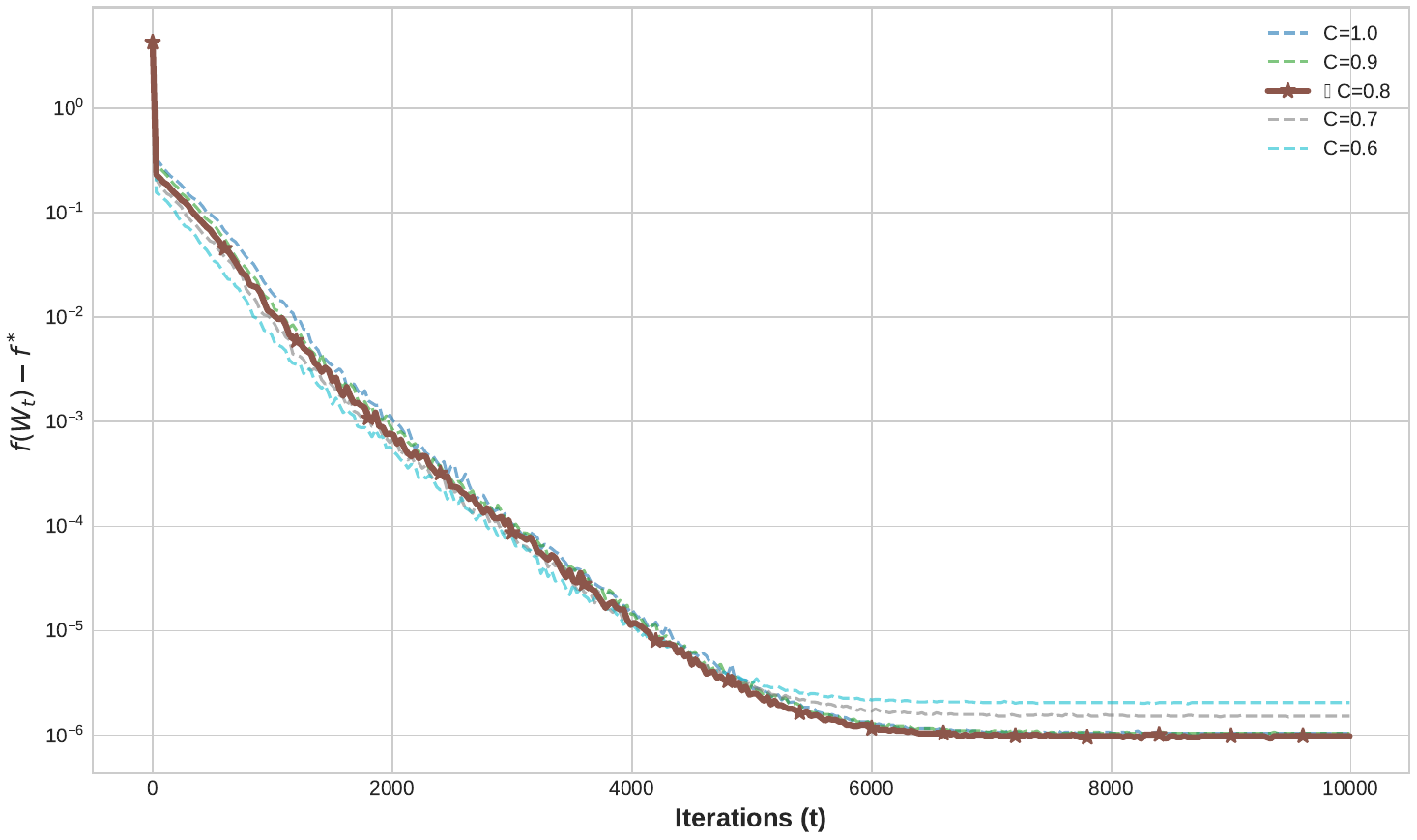}
        \caption{Loss}
        \label{fig:image1}
    \end{subfigure}
    \hfill
    \begin{subfigure}[b]{0.48\linewidth}
        \centering
        \includegraphics[width=\linewidth,height=\springerfigmaxheight,keepaspectratio]{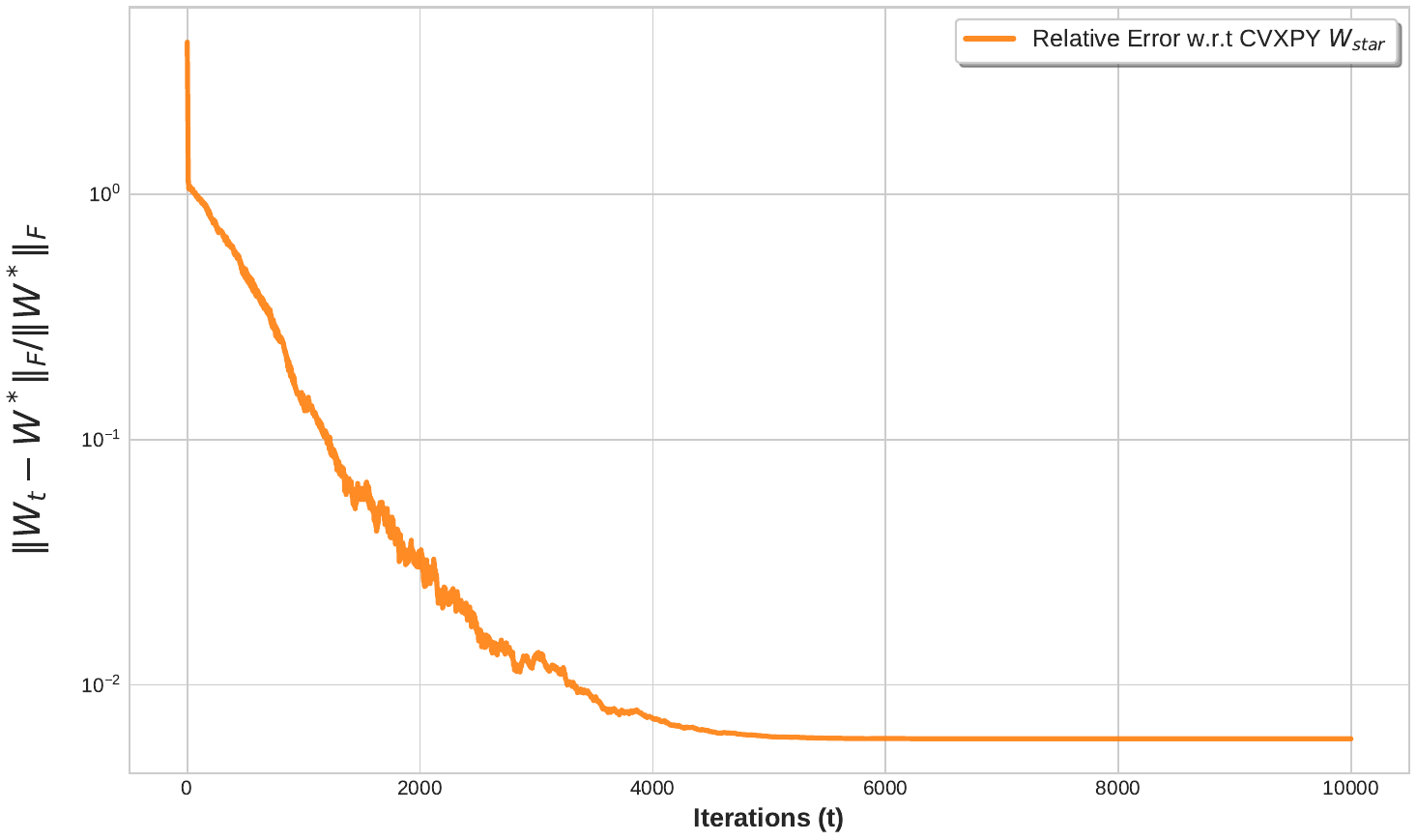}
        \caption{Relative Parameter Error}
        \label{fig:image2}
    \end{subfigure}
    \end{minipage}

    \caption{Convergence of Spectral Descent for matrix classification.}
    \label{fig:MC_convergence}
\end{figure}

\section{Discussion}
\label{sec:discussion}
In this paper, we presented a rigorous theoretical framework for understanding the convergence of Spectral Descent (SD) and its truncated variant (TSD) in non-smooth optimization.
By leveraging the geometric properties of convexity, Lipschitz continuity, and sharpness, we established the global linear convergence of momentum-free Muon-type optimizers.
Furthermore, we introduced decoupled weight decay as a spatial stabilization mechanism, explicitly connecting the regularized updates to the Conditional Subgradient (Frank-Wolfe) method.
This regularization theoretically circumvents stringent condition parameter requirements, guaranteeing a sublinear $\mathcal{O}(1/\sqrt{T})$ convergence rate.
We instantiated our framework on the robust low-rank matrix recovery problem via a least absolute deviation (LAD) formulation, demonstrating that our algorithms can efficiently recover the ground-truth matrix even in the presence of outliers and dense noise.

As the reader may note, our numerical experiments in Section~\ref{section:experiments} suggest that Spectral Descent and its variants exhibit remarkable robustness and efficiency that often exceed our theoretical worst-case bounds.
For instance, in the linear programming experiments, the algorithms successfully and rapidly converge even when the theoretical assumption regarding the condition parameter is not strictly satisfied.
This raises the interesting question: whether tighter, problem-specific alignment conditions can be derived to narrow the gap between deterministic worst-case bounds and empirical performance.

A critical direction for our future research is extending this theoretical framework to highly non-convex and non-smooth landscapes.
Modern machine learning architectures, including transformers and fully connected neural networks with ReLU activations, inherently possess these complex geometries.
While our current experiments empirically validate the superiority of Spectral Descent in training ReLU networks, establishing rigorous convergence guarantees for such settings remains an open challenge.
\backmatter

\section*{Acknowledgements}
This work was supported by NSFC under grant number U21A20426.

\bibliographystyle{plainnat}
\bibliography{refer}

@misc{jordan6muon,
  author={Jordan, Keller and Jin, Yuchen and Boza, Vlado and You, Jiacheng and Cecista, Franz and Newhouse, Laker and Bernstein, Jeremy},
  title={{Muon}: An optimizer for hidden layers in neural networks},
  year={2024},
  howpublished={Keller Jordan's blog. \url{https://kellerjordan.github.io/posts/muon/}},
  note={Accessed 11 May 2026}
}

@article{bernstein2024old,
  title={Old optimizer, new norm: An anthology},
  author={Bernstein, Jeremy and Newhouse, Laker},
  journal={arXiv:2409.20325},
  year={2024}
}

@article{shen2025convergence,
  title={On the convergence analysis of {Muon}},
  author={Shen, Wei and Huang, Ruichuan and Huang, Minhui and Shen, Cong and Zhang, Jiawei},
  journal={arXiv:2505.23737},
  year={2025}
}

@article{yang2026manifold,
  title={Manifold constrained steepest descent},
  author={Yang, Kaiwei and Lai, Lexiao},
  journal={arXiv:2601.21487},
  year={2026}
}

@article{carlson2015stochasticdgm,
  title={Stochastic spectral descent for discrete graphical models},
  author={Carlson, David and Hsieh, Ya-Ping and Collins, Edo and Carin, Lawrence and Cevher, Volkan},
  journal={IEEE J. Sel. Top. Signal Process.},
  volume={10},
  number={2},
  pages={296--311},
  year={2016},
  publisher={IEEE}
}

@article{carlson2015preconditioned,
  title={Preconditioned spectral descent for deep learning},
  author={Carlson, David E and Collins, Edo and Hsieh, Ya-Ping and Carin, Lawrence and Cevher, Volkan},
  journal={Advances in Neural Information Processing Systems},
  volume={28},
  year={2015b}
}

@inproceedings{carlson2015stochasticrbm,
  title={Stochastic spectral descent for restricted {Boltzmann} machines},
  author={Carlson, David and Cevher, Volkan and Carin, Lawrence},
  booktitle={Artificial Intelligence and Statistics},
  pages={111--119},
  year={2015a},
}

@article{li2025note,
  title={A note on the convergence of {Muon} and further},
  author={Li, Jiaxiang and Hong, Mingyi},
  journal={arXiv:2502.02900},
  year={2025},
  url={https://arxiv.org/abs/2502.02900}
}

@article{ma2026preconditioning,
  title={Preconditioning benefits of spectral orthogonalization in {Muon}},
  author={Ma, Jianhao and Huang, Yu and Chi, Yuejie and Chen, Yuxin},
  journal={arXiv:2601.13474},
  year={2026}
}

@article{sfyraki2025lions,
  title={{Lions} and {Muons}: Optimization via stochastic {Frank-Wolfe}},
  author={Sfyraki, Maria-Eleni and Wang, Jun-Kun},
  journal={arXiv:2506.04192},
  year={2025}
}

@article{liu2025muon,
  title={{Muon} is scalable for {LLM} training},
  author={Liu, Jingyuan and Su, Jianlin and Yao, Xingcheng and Jiang, Zhejun and Lai, Guokun and Du, Yulun and Qin, Yidao and Xu, Weixin and Lu, Enzhe and Yan, Junjie and others},
  journal={arXiv:2502.16982},
  year={2025}
}

@article{zhang2025adagrad,
  title={{AdaGrad} meets {Muon}: Adaptive stepsizes for orthogonal updates},
  author={Zhang, Minxin and Liu, Yuxuan and Schaeffer, Hayden},
  journal={arXiv:2509.02981},
  year={2025}
}

@article{ahn2025dion,
  title={{Dion}: Distributed orthonormalized updates},
  author={Ahn, Kwangjun and Xu, Byron and Abreu, Natalie and Fan, Ying and Magakyan, Gagik and Sharma, Pratyusha and Zhan, Zheng and Langford, John},
  journal={arXiv:2504.05295},
  year={2025}
}

@article{wang2025conda,
  title={{Conda}: Column-normalized {Adam} for training large language models faster},
  author={Wang, Junjie and Zhou, Pan and Dong, Yiming and Li, Huan and Li, Jia and Zhou, Xun and Lao, Qicheng and Fang, Cong and Lin, Zhouchen},
  journal={arXiv:2509.24218},
  year={2025}
}

@article{pang2026htmuon,
  title={{HTMuon}: Improving {Muon} via heavy-tailed spectral correction},
  author={Pang, Tianyu and Fang, Yujie and Liu, Zihang and Deng, Shenyang and Hsiung, Lei and Yu, Shuhua and Yang, Yaoqing},
  journal={arXiv:2603.10067},
  year={2026}
}

@misc{bernstein2025manifolds,
  author = {Bernstein, Jeremy},
  title = {Modular Manifolds},
  year = {2025},
  howpublished = {Thinking Machines Lab: Connectionism. \url{https://thinkingmachines.ai/blog/modular-manifolds/}},
  note = {Accessed 11 May 2026}
}

@misc{buchanan2025mmuonadmm,
  author = {Buchanan, Sam},
  title = {A faster manifold {Muon} with {ADMM}},
  year = {2025},
  howpublished={Sam D. Buchanan's blog. \url{https://sdbuchanan.com/blog/manifold-muon/}},
  note={Accessed 11 May 2026}
}

@misc{kexuefm-11215,
        title={Steepest descent on manifolds: 2. {Muon} + orthogonality},
        author={Su, Jianlin},
        year={2025},
        month={August},
        howpublished={\url{https://spaces.ac.cn/archives/11215}},
        note={Accessed 11 May 2026},
}

@misc{kexuefm-11221,
        title={Steepest descent on manifolds: 3. {Muon} + {Stiefel}},
        author={Su, Jianlin},
        year={2025},
        month={August},
        howpublished={\url{https://spaces.ac.cn/archives/11221}},
        note={Accessed 11 May 2026},
}

@inproceedings{brown2020language,
  title={Language models are few-shot learners},
  author={Brown, Tom and Mann, Benjamin and Ryder, Nick and Subbiah, Melanie and Kaplan, Jared D and Dhariwal, Prafulla and Neelakantan, Arvind and Shyam, Pranav and Sastry, Girish and Askell, Amanda and others},
  booktitle={Advances in Neural Information Processing Systems},
  volume={33},
  pages={1877--1901},
  year={2020}
}

@article{team2023gemini,
  title={{Gemini}: A family of highly capable multimodal models},
  author={{Gemini Team} and Anil, Rohan and Borgeaud, Sebastian and Alayrac, Jean-Baptiste and Yu, Jiahui and Soricut, Radu and Schalkwyk, Johan and Dai, Andrew M and Hauth, Anja and Millican, Katie and others},
  journal={arXiv:2312.11805},
  year={2023}
}

@article{kingma2014adam,
  title={{Adam}: A method for stochastic optimization},
  author={Kingma, Diederik P and Ba, Jimmy},
  journal={arXiv:1412.6980},
  year={2014}
}

@article{loshchilov2017decoupled,
  title={Decoupled weight decay regularization},
  author={Loshchilov, Ilya and Hutter, Frank},
  journal={arXiv:1711.05101},
  year={2017}
}

@article{shah2025practical,
  title={Practical efficiency of {Muon} for pretraining},
  author={Shah, Ishaan and Polloreno, Anthony M and Stratos, Karl and Monk, Philip and Chaluvaraju, Adarsh and Hojel, Andrew and Ma, Andrew and Thomas, Anil and Tanwer, Ashish and Shah, Darsh J and others},
  journal={arXiv:2505.02222},
  year={2025}
}

@article{tveit2025muon,
  title={{Muon} optimizer accelerates grokking},
  author={Tveit, Amund and Remseth, Bj{\o}rn and Skogvold, Arve},
  journal={arXiv:2504.16041},
  year={2025}
}

@article{zeng2025glm,
  title={{GLM-4.5}: Agentic, reasoning, and coding ({ARC}) foundation models},
  author={Zeng, Aohan and Lv, Xin and Zheng, Qinkai and Hou, Zhenyu and Chen, Bin and Xie, Chengxing and Wang, Cunxiang and Yin, Da and Zeng, Hao and Zhang, Jiajie and others},
  journal={arXiv:2508.06471},
  year={2025}
}

@misc{deepseekai2026deepseekv4,
  author={{DeepSeek-AI}},
  title={{DeepSeek-V4}: Towards Highly Efficient Million-Token Context Intelligence},
  howpublished={Technical report. \url{https://huggingface.co/deepseek-ai/DeepSeek-V4-Pro/resolve/main/DeepSeek_V4.pdf}},
  year={2026},
  note={Accessed 15 May 2026}
}

@article{si2025adamuon,
  title={{AdaMuon}: Adaptive {Muon} optimizer},
  author={Si, Chongjie and Zhang, Debing and Shen, Wei},
  journal={arXiv:2507.11005},
  year={2025}
}

@article{lau2025polargrad,
  title={{PolarGrad}: A class of matrix-gradient optimizers from a unifying preconditioning perspective},
  author={Lau, Tim Tsz-Kit and Long, Qi and Su, Weijie},
  journal={arXiv:2505.21799},
  year={2025}
}

@article{liu2025cosmos,
  title={{Cosmos}: A hybrid adaptive optimizer for memory-efficient training of {LLMs}},
  author={Liu, Liming and Xu, Zhenghao and Zhang, Zixuan and Kang, Hao and Li, Zichong and Liang, Chen and Chen, Weizhu and Zhao, Tuo},
  journal={arXiv:2502.17410},
  year={2025}
}

@article{an2025asgo,
  title={{ASGO}: Adaptive structured gradient optimization},
  author={An, Kang and Liu, Yuxing and Pan, Rui and Ren, Yi and Ma, Shiqian and Goldfarb, Donald and Zhang, Tong},
  journal={arXiv:2503.20762},
  year={2025}
}

@article{amsel2025polar,
  title={The polar express: Optimal matrix sign methods and their application to the {Muon} algorithm},
  author={Amsel, Noah and Persson, David and Musco, Christopher and Gower, Robert M},
  journal={arXiv:2505.16932},
  year={2025}
}

@article{pethick2025training,
  title={Training deep learning models with norm-constrained {LMOs}},
  author={Pethick, Thomas and Xie, Wanyun and Antonakopoulos, Kimon and Zhu, Zhenyu and Silveti-Falls, Antonio and Cevher, Volkan},
  journal={arXiv:2502.07529},
  year={2025}
}

@article{kravatskiy2025ky,
  title={The {Ky Fan} norms and beyond: Dual norms and combinations for matrix optimization},
  author={Kravatskiy, Alexey and Kozyrev, Ivan and Kozlov, Nikolai and Vinogradov, Alexander and Merkulov, Daniil and Oseledets, Ivan},
  journal={arXiv:2512.09678},
  year={2025}
}

@inproceedings{gupta2018shampoo,
  title={{Shampoo}: Preconditioned stochastic tensor optimization},
  author={Gupta, Vineet and Koren, Tomer and Singer, Yoram},
  booktitle={International Conference on Machine Learning},
  pages={1842--1850},
  year={2018},
}

@article{braun2026spectral,
  title={Spectral Gradient Descent Mitigates Anisotropy-Driven Misalignment: A Case Study in Phase Retrieval},
  author={Braun, Guillaume and Bao, Han and Huang, Wei and Imaizumi, Masaaki},
  journal={arXiv:2601.22652},
  year={2026}
}

@article{fan2026implicit,
  title={Implicit bias of spectral descent and {Muon} on multiclass separable data},
  author={Fan, Chen and Schmidt, Mark and Thrampoulidis, Christos},
  journal={Advances in Neural Information Processing Systems},
  volume={38},
  pages={39622--39669},
  year={2026}
}

@article{gronich2026implicit,
  title={The implicit bias of {Adam} and {Muon} on smooth homogeneous neural networks},
  author={Gronich, Eitan and Vardi, Gal},
  journal={arXiv:2602.16340},
  year={2026}
}

@article{li2026implicit,
  title={The implicit bias of steepest descent with mini-batch stochastic gradient},
  author={Li, Jichu and Tang, Xuan and Zou, Difan},
  journal={arXiv:2602.11557},
  year={2026}
}

@inproceedings{chen2024lion,
  title={{Lion} secretly solves a constrained optimization: As {Lyapunov} predicts},
  author={Chen, Lizhang and Liu, Bo and Liang, Kaizhao and Liu, Qiang},
  booktitle={International Conference on Learning Representations},
  year={2024},
}

@article{chen2025muon,
  title={{Muon} optimizes under spectral norm constraints},
  author={Chen, Lizhang and Li, Jonathan and Liu, Qiang},
  journal={arXiv:2506.15054},
  year={2025}
}

@article{burke1993weak,
  title={Weak sharp minima in mathematical programming},
  author={Burke, James V and Ferris, Michael C},
  journal={SIAM J. Control Optim.},
  volume={31},
  number={5},
  pages={1340--1359},
  year={1993},
  publisher={SIAM}
}

@article{luo1993error,
  title={Error bounds and convergence analysis of feasible descent methods: a general approach},
  author={Luo, Zhi-Quan and Tseng, Paul},
  journal={Ann. Oper. Res.},
  volume={46},
  number={1},
  pages={157--178},
  year={1993},
  publisher={Springer}
}

@article{studniarski1999weak,
  title={Weak sharp minima: characterizations and sufficient conditions},
  author={Studniarski, Marcin and Ward, Doug E},
  journal={SIAM J. Control Optim.},
  volume={38},
  number={1},
  pages={219--236},
  year={1999},
  publisher={SIAM}
}

@article{burke2002weak,
  title={Weak sharp minima revisited Part I: basic theory},
  author={Burke, James and Deng, Sien},
  journal={Control Cybern.},
  volume={31},
  number={3},
  pages={439--469},
  year={2002},
  publisher={Polska Akademia Nauk. Instytut Bada{\'n} Systemowych PAN}
}

@article{xu2024convergence,
  title={Convergence of projected subgradient method with sparse or low-rank constraints},
  author={Xu, Hang and Li, Song and Lin, Junhong},
  journal={Adv. Comput. Math.},
  volume={50},
  number={4},
  pages={62},
  year={2024},
  publisher={Springer}
}

@article{davis2018subgradient,
  title={Subgradient methods for sharp weakly convex functions},
  author={Davis, Damek and Drusvyatskiy, Dmitriy and MacPhee, Kellie J and Paquette, Courtney},
  journal={J. Optim. Theory Appl.},
  volume={179},
  number={3},
  pages={962--982},
  year={2018},
  publisher={Springer}
}

@article{davis2020nonsmooth,
  title={The nonsmooth landscape of phase retrieval},
  author={Davis, Damek and Drusvyatskiy, Dmitriy and Paquette, Courtney},
  journal={IMA J. Numer. Anal.},
  volume={40},
  number={4},
  pages={2652--2695},
  year={2020},
  publisher={Oxford University Press}
}

@article{ding2023sharpness,
  title={Sharpness and well-conditioning of nonsmooth convex formulations in statistical signal recovery},
  author={Ding, Lijun and Wang, Alex L},
  journal={arXiv:2307.06873},
  year={2023}
}

@article{charisopoulos2021low,
  title={Low-rank matrix recovery with composite optimization: good conditioning and rapid convergence},
  author={Charisopoulos, Vasileios and Chen, Yudong and Davis, Damek and D{\'\i}az, Mateo and Ding, Lijun and Drusvyatskiy, Dmitriy},
  journal={Found. Comput. Math.},
  volume={21},
  number={6},
  pages={1505--1593},
  year={2021},
  publisher={Springer}
}

@article{duchi2019solving,
  title={Solving (most) of a set of quadratic equalities: Composite optimization for robust phase retrieval},
  author={Duchi, John C and Ruan, Feng},
  journal={Information and Inference: A Journal of the IMA},
  volume={8},
  number={3},
  pages={471--529},
  year={2019},
  publisher={Oxford University Press}
}

@article{zheng2024adaptive,
  title={Adaptive algorithms for robust phase retrieval},
  author={Zheng, Zhong and Aybat, Necdet Serhat and Ma, Shiqian and Xue, Lingzhou},
  journal={arXiv:2409.19162},
  year={2024}
}

@article{ma2025second,
  title={On second-order weak sharp minima of general nonconvex set-constrained optimization problems},
  author={Ma, Xiaoxiao and Ouyang, Wei and Ye, Jane J and Zhang, Binbin},
  journal={J. Optim. Theory Appl.},
  volume={207},
  number={2},
  pages={21},
  year={2025},
  publisher={Springer}
}

@article{bauer1958minimalstellen,
  title={Minimalstellen von funktionen und extremalpunkte},
  author={Bauer, Heinz},
  journal={Arch. Math.},
  volume={9},
  number={4},
  pages={389--393},
  year={1958},
  publisher={Springer}
}

@article{frank1956algorithm,
  title={An algorithm for quadratic programming},
  author={Frank, Marguerite and Wolfe, Philip},
  journal={Nav. Res. Logist. Q.},
  volume={3},
  number={1-2},
  pages={95--110},
  year={1956}
}

@inproceedings{jaggi2013revisiting,
  title={Revisiting {Frank-Wolfe}: Projection-free sparse convex optimization},
  author={Jaggi, Martin},
  booktitle={International Conference on Machine Learning},
  pages={427--435},
  year={2013},
}

@article{ravi2019deterministic,
  title={A deterministic nonsmooth {Frank-Wolfe} algorithm with coreset guarantees},
  author={Ravi, Sathya N and Collins, Maxwell D and Singh, Vikas},
  journal={INFORMS J. Optim.},
  volume={1},
  number={2},
  pages={120--142},
  year={2019},
  publisher={INFORMS}
}

@book{clarke1990optimization,
  title={Optimization and nonsmooth analysis},
  author={Clarke, Frank H},
  year={1990},
  publisher={SIAM},
  address={Philadelphia, PA}
}

@article{recht2010guaranteed,
  title={Guaranteed minimum-rank solutions of linear matrix equations via nuclear norm minimization},
  author={Recht, Benjamin and Fazel, Maryam and Parrilo, Pablo A},
  journal={SIAM Rev.},
  volume={52},
  number={3},
  pages={471--501},
  year={2010},
  publisher={SIAM}
}

@article{dielman2005least,
  title={Least absolute value regression: recent contributions},
  author={Dielman, Terry E},
  journal={J. Stat. Comput. Simul.},
  volume={75},
  number={4},
  pages={263--286},
  year={2005},
  publisher={Taylor \& Francis}
}

@article{li2020nonconvex,
  title={Nonconvex robust low-rank matrix recovery},
  author={Li, Xiao and Zhu, Zhihui and So, Anthony Man-Cho and Vidal, Ren{\'e}},
  journal={SIAM J. Optim.},
  volume={30},
  number={1},
  pages={660--686},
  year={2020},
  publisher={SIAM}
}

@inproceedings{srebro2004maximum,
  title={Maximum-margin matrix factorization},
  author={Srebro, Nathan and Rennie, Jason and Jaakkola, Tommi},
  booktitle={Advances in Neural Information Processing Systems},
  volume={17},
  year={2004}
}

@inproceedings{li2018algorithmic,
  title={Algorithmic regularization in over-parameterized matrix sensing and neural networks with quadratic activations},
  author={Li, Yuanzhi and Ma, Tengyu and Zhang, Hongyang},
  booktitle={Conference on Learning Theory},
  pages={2--47},
  year={2018},
}

@article{davenport2016overview,
  title={An overview of low-rank matrix recovery from incomplete observations},
  author={Davenport, Mark A and Romberg, Justin},
  journal={IEEE J. Sel. Top. Signal Process.},
  volume={10},
  number={4},
  pages={608--622},
  year={2016},
  publisher={IEEE}
}

@article{huang2025adversarial,
  title={Adversarial phase retrieval via nonlinear least absolute deviation},
  author={Huang, Gao and Li, Song and Xu, Hang},
  journal={IEEE Trans. Inf. Theory},
  year={2025},
  publisher={IEEE}
}

@article{watson1992characterization,
  title={Characterization of the subdifferential of some matrix norms},
  author={Watson, G Alistair},
  journal={Linear Algebra Appl.},
  volume={170},
  number={1},
  pages={33--45},
  year={1992},
}

@inproceedings{luo2015support,
  title={Support matrix machines},
  author={Luo, Luo and Xie, Yubo and Zhang, Zhihua and Li, Wu-Jun},
  booktitle={International Conference on Machine Learning},
  pages={938--947},
  year={2015},
}

@article{soudry2018implicit,
  title={The implicit bias of gradient descent on separable data},
  author={Soudry, Daniel and Hoffer, Elad and Nacson, Mor Shpigel and Gunasekar, Suriya and Srebro, Nathan},
  journal={J. Mach. Learn. Res.},
  volume={19},
  number={70},
  pages={1--57},
  year={2018}
}

@inproceedings{roulet2017sharpness,
  title={Sharpness, restart and acceleration},
  author={Roulet, Vincent and d'Aspremont, Alexandre},
  booktitle={Advances in Neural Information Processing Systems},
  volume={30},
  year={2017}
}

\newpage
\appendix
\etocdepthtag.toc{mtappendix}
\etocsettagdepth{mtmain}{none}
\etocsettagdepth{mtappendix}{subsection} 

\tableofcontents 

\section{Technical Lemmas for Subsection \ref{ssec:spectral_descent}}
\label{app:lower_bound_descent_term}
\subsection{Lower Bound for Spectral Descent (Theorem \ref{theorem:spectral_descent_convergence})}
\begin{lemma}[A Uniform Lower Bound on the Descent Term]
\label{lemma:lower_bound_descent_term_SD}
Let $\mathcal{R}>0$ and $L>\mu>0$, and define $\kappa := \mu/L \in (0,1)$.
Consider the feasible set
\begin{equation}
    \mathcal{F} := \left\{ (\mathbf{x}, \boldsymbol{\sigma}) \in \mathbb{R}^n \times \mathbb{R}^n \;\middle|\;
    \begin{aligned}
        & \|\mathbf{x}\|_2 \le \mathcal{R}, \quad \|\boldsymbol{\sigma}\|_2 \le L, \\
        & \boldsymbol{\sigma}^\top \mathbf{x} \ge \mu\mathcal{R}, \quad \sigma_1 \ge \dots \ge \sigma_n \ge 0
    \end{aligned}
    \right\}
\end{equation}
and the objective function $f(\mathbf{x}) := \mathbf{1}^\top \mathbf{x} = \sum_{i=1}^n x_i$.
Then, for any $(\mathbf{x}, \boldsymbol{\sigma}) \in \mathcal{F}$, we have
\begin{equation}
    f(\mathbf{x}) \geq \left( \kappa - \sqrt{n-1}\sqrt{1-\kappa^2} \right)\mathcal{R}.
\end{equation}
Furthermore, this lower bound is tight (i.e., strictly achievable).
\end{lemma}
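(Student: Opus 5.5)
The plan is to reduce the bound to a two-parameter minimization by splitting $\mathbf{x}$ along the direction of $\boldsymbol{\sigma}$. Since $\boldsymbol{\sigma}^\top\mathbf{x}\ge\mu\mathcal{R}>0$, we have $\boldsymbol{\sigma}\neq 0$, so we may set $\mathbf{u}:=\boldsymbol{\sigma}/\|\boldsymbol{\sigma}\|_2$. Write
\[
\mathbf{x}=a\,\mathbf{u}+\mathbf{w}, \qquad \mathbf{w}\perp\mathbf{u}, \qquad a:=\mathbf{u}^\top\mathbf{x}.
\]
From $\|\boldsymbol{\sigma}\|_2\le L$ we get $a\ge \mu\mathcal{R}/\|\boldsymbol{\sigma}\|_2\ge\kappa\mathcal{R}$. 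From $\|\mathbf{x}\|_2\le\mathcal{R}$ we get $a\le\mathcal{R}$ and $\|\mathbf{w}\|_2\le\sqrt{\mathcal{R}^2-a^2}$.

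Next, set $c:=\mathbf{1}^\top\mathbf{u}$. Because $\mathbf{u}$ is a nonnegative unit vector, $1\le c\le\sqrt n$; the lower bound uses $\|\mathbf{u}\|_1\ge\|\mathbf{u}\|_2$. This is the only place the sign constraint on $\boldsymbol{\sigma}$ enters; the ordering $\sigma_1\ge\dots\ge\sigma_n$ is not needed. Since $\mathbf{w}\perp\mathbf{u}$, Cauchy--Schwarz applied to the component of $\mathbf{1}$ orthogonal to $\mathbf{u}$ gives
\[
\mathbf{1}^\top\mathbf{w}\ \ge\ -\|\mathbf{1}-c\,\mathbf{u}\|_2\,\|\mathbf{w}\|_2\ =\ -\sqrt{n-c^2}\,\|\mathbf{w}\|_2 .
\]
Combining these,
\[
f(\mathbf{x})\ \ge\ g(a,c):=a\,c-\sqrt{n-c^2}\,\sqrt{\mathcal{R}^2-a^2}.
\]

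The main step is minimizing $g$ over the box $a\in[\kappa\mathcal{R},\mathcal{R}]$, $c\in[1,\sqrt n]$.
\begin{itemize}
\item In $a$: the term $ac$ increases with $a$ (as $c>0$), and $-\sqrt{\mathcal{R}^2-a^2}$ also increases with $a$. Hence the minimum is at $a=\kappa\mathcal{R}$, giving $\mathcal{R}\bigl(\kappa c-\sqrt{1-\kappa^2}\,\sqrt{n-c^2}\bigr)$.
\item In $c$: the derivative of this expression is $\mathcal{R}\bigl(\kappa+c\sqrt{1-\kappa^2}/\sqrt{n-c^2}\bigr)>0$. So the minimum is at $c=1$, which yields exactly $\bigl(\kappa-\sqrt{n-1}\sqrt{1-\kappa^2}\bigr)\mathcal{R}$.
\end{itemize}
The only care needed is the boundary case $c=\sqrt n$, where the square root vanishes. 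It is harmless, since there the bound is the larger value $\kappa\sqrt n\,\mathcal{R}$.

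For tightness, make every inequality an equality. Take $\boldsymbol{\sigma}=L\mathbf{e}_1$, which is sorted and nonnegative with $\|\boldsymbol{\sigma}\|_2=L$. Take
\[
\mathbf{x}=\kappa\mathcal{R}\,\mathbf{e}_1-\frac{\sqrt{1-\kappa^2}\,\mathcal{R}}{\sqrt{n-1}}\sum_{i=2}^n\mathbf{e}_i .
\]
Then $\|\mathbf{x}\|_2=\mathcal{R}$ and $\boldsymbol{\sigma}^\top\mathbf{x}=\mu\mathcal{R}$, so $(\mathbf{x},\boldsymbol{\sigma})\in\mathcal{F}$. Its coordinate sum is $\kappa\mathcal{R}-\sqrt{n-1}\sqrt{1-\kappa^2}\,\mathcal{R}$, which attains the bound.
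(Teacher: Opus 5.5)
Your proof is correct, but it takes a different route from the paper's.

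The paper splits coordinates by the sign of $\mathbf{x}$. It sets $\mathcal I=\{i:x_i>0\}$, which is nonempty because $\boldsymbol\sigma\ge 0$ and $\boldsymbol\sigma^\top\mathbf x>0$, and writes $f(\mathbf x)\ge\|\mathbf x_{\mathcal I}\|_2-\sqrt{|\mathcal I^c|}\,\|\mathbf x_{\mathcal I^c}\|_2$. Nonnegativity of $\boldsymbol\sigma$ lets it discard the $\mathcal I^c$ terms of $\boldsymbol\sigma^\top\mathbf x$, giving $\|\mathbf x_{\mathcal I}\|_2\ge\kappa\mathcal R$. It then uses $|\mathcal I^c|\le n-1$. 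No optimization over auxiliary parameters is needed, so it is a bit shorter than yours.

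You instead decompose $\mathbf x$ along $\boldsymbol\sigma/\|\boldsymbol\sigma\|_2$. Suppose you minimize your bound $ac-\sqrt{n-c^2}\sqrt{\mathcal R^2-a^2}$ over $a\ge\mu\mathcal R/\|\boldsymbol\sigma\|_2$ rather than $a\ge\kappa\mathcal R$. You then get exactly the minimum of $\mathbf 1^\top\mathbf x$ over $\mathbf x$ for fixed $\boldsymbol\sigma$. This is the inner-minimum formula the paper derives via KKT multipliers in the proof of Lemma~\ref{lemma:lower_bound_descent_term_Truncated}, specialized to $s=n$.

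Your route costs a short monotonicity analysis in $(a,c)$, but it buys two things:
\begin{enumerate}
\item It shows explicitly how the bound depends on the spread $\|\boldsymbol\sigma\|_1/\|\boldsymbol\sigma\|_2$ of the singular values.
\item It carries over to the truncated case essentially verbatim. Replacing $\mathbf 1$ by $\mathbf c=(1,\dots,1,0,\dots,0)^\top$ gives $\|\mathbf c-(\mathbf c^\top\mathbf u)\mathbf u\|_2^2=s-(\mathbf c^\top\mathbf u)^2$. This is an elementary Cauchy--Schwarz substitute for the Lagrangian computation there.
\end{enumerate}
In that truncated setting the ordering constraint becomes essential, since it is what keeps $\mathbf c^\top\mathbf u$ bounded below. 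Here, as you correctly note, only $\boldsymbol\sigma\ge 0$ matters, and the paper's proof also uses only nonnegativity.

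Your tightness example coincides with the paper's construction.
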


\begin{proof}
 Let $(\mathbf{x}, \boldsymbol{\sigma}) \in \mathcal{F}$ be any feasible pair.
 For any feasible $\mathbf{x}$, define the index set of positive components $\mathcal I := \{ i\in[n] : x_i>0\}$ and its complement $\mathcal I^c := [n]\setminus \mathcal I$.
Then the objective function $f(\mathbf{x})$ can be decomposed as:
    \begin{equation}
        \label{eq:obj_decomp}
        f(\mathbf{x})= \sum_{i \in \mathcal{I}} x_i - \sum_{i \in \mathcal{I}^c} |x_i| = \|\mathbf{x}_{\mathcal{I}}\|_1 - \|\mathbf{x}_{\mathcal{I}^c}\|_1,
    \end{equation}
    where $\mathbf{x}_{\mathcal{I}}$ and $\mathbf{x}_{\mathcal{I}^c}$ denote the sub-vectors restricted to the respective index sets.

We first show that $\mathcal I$ is nonempty.
If $\mathcal I=\emptyset$, then $x_i\leq 0$ for all $i$.
Since $\sigma_i\ge 0$ for all $i$, this implies
$\boldsymbol{\sigma}^\top \mathbf{x}=\sum_{i=1}^n \sigma_i x_i\le 0$,
which contradicts the constraint $\boldsymbol{\sigma}^\top \mathbf{x}\ge \mu\mathcal R>0$.
Hence $\mathcal I\neq\emptyset$ and $|\mathcal I^c|\le n-1$.

By Cauchy--Schwarz inequality, we have
$
\|\mathbf{x}_{\mathcal{I}^c}\|_1=\sum_{i\in\mathcal I^c}|x_i|
\leq\sqrt{|\mathcal I^c|}\,\|\mathbf{x}_{\mathcal{I}^c}\|_2,
$
and
$
\|\mathbf{x}_{\mathcal{I}}\|_1\geq\|\mathbf{x}_{\mathcal{I}}\|_2,
$
where the second inequality follows from the fact that $\|\mathbf{v}\|_1\ge \|\mathbf{v}\|_2$, for any vector $\mathbf{v}\in\R^k$.
Combining the two bounds yields
\begin{equation}
\label{eq:f_lower_bound_refined}
f(\mathbf{x})=\|\mathbf{x}_{\mathcal{I}}\|_1-\|\mathbf{x}_{\mathcal{I}^c}\|_1
\geq \|\mathbf{x}_{\mathcal{I}}\|_2-\sqrt{|\mathcal I^c|}\,\|\mathbf{x}_{\mathcal{I}^c}\|_2.
\end{equation}

Since $\sigma_i\ge 0$ and $x_i\leq 0$ for $i\in\mathcal I^c$, we have
$\boldsymbol{\sigma}^\top \mathbf{x}
=\sum_{i\in\mathcal I}\sigma_i x_i+\sum_{i\in\mathcal I^c}\sigma_i x_i
\leq \sum_{i\in\mathcal I}\sigma_i x_i$.
Therefore, using Cauchy--Schwarz inequality and $\|\boldsymbol{\sigma}_{\mathcal I}\|_2\leq \|\boldsymbol{\sigma}\|_2\le L$, we have
$\mu\mathcal R
\leq \sum_{i\in\mathcal I}\sigma_i x_i
\leq \|\boldsymbol{\sigma}_{\mathcal I}\|_2\,\|\mathbf{x}_{\mathcal I}\|_2
\leq L\,\|\mathbf{x}_{\mathcal I}\|_2$,
which implies $\|\mathbf{x}_{\mathcal{I}}\|_2\geq \kappa\mathcal R$ with $\kappa:=\mu/L$.
Moreover, we obtain $\|\mathbf{x}_{\mathcal I^c}\|_2^2
= \|\mathbf{x}\|_2^2-\|\mathbf{x}_{\mathcal I}\|_2^2
\leq \mathcal R^2-\kappa^2\mathcal R^2
=(1-\kappa^2)\mathcal R^2$.
Substituting these bounds into \eqref{eq:f_lower_bound_refined} yields, for any feasible $\mathbf{x}$,
\begin{equation}
f(\mathbf{x})
\geq \kappa\mathcal R-\sqrt{|\mathcal I^c|}\sqrt{1-\kappa^2}\,\mathcal R
\geq \Bigl(\kappa-\sqrt{n-1}\sqrt{1-\kappa^2}\Bigr)\mathcal R.
\end{equation}

Let
\begin{equation}
\label{eq:tightness_construction}
\left\{
\begin{aligned}
    \boldsymbol{\sigma}^* &= (L, 0, \dots, 0)^\top, \\
    x_j^* &= \begin{cases}
        \kappa \mathcal{R}, & j=1, \\
        -\sqrt{\frac{1-\kappa^2}{n-1}} \mathcal{R}, & j=2, \dots, n.
    \end{cases}
\end{aligned}
\right.
\end{equation}
It is straightforward to verify that $(\mathbf{x}^*, \boldsymbol{\sigma}^*) \in \mathcal{F}$ and $f(\mathbf{x}^*) = (\kappa - \sqrt{n-1}\sqrt{1-\kappa^2})\mathcal{R}$.
This confirms the tightness of the lower bound, completing the proof.
\end{proof}

\subsection{Lower Bound for Truncated Spectral Descent (Theorem \ref{theorem:truncated_spectral_descent_convergence})}
\label{app:lower_bound_descent_term_TSD}
The analysis for the truncated case is more involved than standard SD because the objective function depends only on the first $s$ components, while the constraints involve the full vectors.
We establish the tight uniform lower bound as follows.
\begin{lemma}[A Uniform Lower Bound on the Truncated Descent Term]
\label{lemma:lower_bound_descent_term_Truncated}
Let $\mathcal{R}>0$, $L>\mu>0$, and integer $1 \leq s \leq n$. Define $\kappa := \mu/L \in (0,1)$.
Consider the feasible set
\begin{equation}
	\label{constraint_set_Truncated}
    \mathcal{F}_s := \left\{ (\mathbf{x}, \boldsymbol{\sigma}) \in \mathbb{R}^n \times \mathbb{R}^n \;\middle|\;
    \begin{aligned}
        & \|\mathbf{x}\|_2 \le \mathcal{R}, \quad \|\boldsymbol{\sigma}\|_2 \le L, \\
        & \boldsymbol{\sigma}^\top \mathbf{x} \geq \mu\mathcal{R}, \\
        & \sigma_1 \geq \sigma_2 \geq \dots \geq \sigma_n \geq 0
    \end{aligned}
    \right\}
\end{equation}
and the objective function $f_s(\mathbf{x}) := \sum_{i=1}^s x_i$.
Then, for any $(\mathbf{x}, \boldsymbol{\sigma}) \in \mathcal{F}_s$, we have
\begin{equation}
    f_s(\mathbf{x}) \geq \left( \kappa \alpha_s - \sqrt{s - \alpha_s^2} \sqrt{1-\kappa^2} \right)\mathcal{R},
\end{equation}
where $\alpha_s$ is defined as $\alpha_s := \min\left\{ 1, \frac{s}{\sqrt{n}} \right\}$.
Furthermore, this lower bound is tight.
\end{lemma}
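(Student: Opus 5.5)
The plan is to reduce the statement to two elementary optimization problems. First, fix $\boldsymbol{\sigma}$ and minimize the linear function $f_s$ over $\mathbf{x}$ exactly. Second, minimize the resulting expression over admissible $\boldsymbol{\sigma}$. Write $\mathbf{a}:=\sum_{i=1}^s \mathbf{e}_i$, so that $f_s(\mathbf{x})=\langle \mathbf{a},\mathbf{x}\rangle$ and $\|\mathbf{a}\|_2=\sqrt{s}$. Let $\mathbf{b}:=\boldsymbol{\sigma}/\|\boldsymbol{\sigma}\|_2$; note that $\boldsymbol{\sigma}\neq 0$, since otherwise $\boldsymbol{\sigma}^\top\mathbf{x}\ge\mu\mathcal R>0$ fails. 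Set $t:=\mu/\|\boldsymbol{\sigma}\|_2\ge\kappa$ and $\rho:=\langle\mathbf{a},\mathbf{b}\rangle=\sum_{i\le s}\sigma_i/\|\boldsymbol{\sigma}\|_2\in[0,\sqrt{s}]$. The constraints become $\|\mathbf{x}\|_2\le\mathcal R$ and $\langle\mathbf{b},\mathbf{x}\rangle\ge t\mathcal R$. The last constraint forces $t\le 1$.

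\textbf{Step 1 (inner problem in $\mathbf{x}$).} Decompose $\mathbf{x}=\beta\mathbf{b}+\mathbf{w}$ with $\mathbf{w}\perp\mathbf{b}$, $\beta\ge t\mathcal R$ and $\|\mathbf{w}\|_2\le\sqrt{\mathcal R^2-\beta^2}$. Correspondingly,
\[
\langle\mathbf{a},\mathbf{x}\rangle=\beta\rho+\langle\mathbf{a}-\rho\mathbf{b},\mathbf{w}\rangle\ge\beta\rho-\sqrt{s-\rho^2}\sqrt{\mathcal R^2-\beta^2}.
\]
The right-hand side is nondecreasing in $\beta$ on $[t\mathcal R,\mathcal R]$, because $\rho\ge0$. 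This monotonicity is exactly why the halfspace constraint is active. Hence
\[
f_s(\mathbf{x})\ge\bigl(\rho t-\sqrt{s-\rho^2}\sqrt{1-t^2}\bigr)\mathcal R=:\phi(\rho,t)\,\mathcal R.
\]
Next, $\phi$ is nondecreasing in $t\in[0,1]$. It is also increasing in $\rho\in[0,\sqrt{s})$, with $\partial_\rho\phi=t+\rho\sqrt{1-t^2}/\sqrt{s-\rho^2}>0$. Therefore $f_s(\mathbf{x})\ge\phi(\rho_{\min},\kappa)\mathcal R$, where $\rho_{\min}$ is the smallest value of $\rho$ over nonincreasing nonnegative $\boldsymbol{\sigma}\ne0$.

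\textbf{Step 2 (outer problem in $\boldsymbol{\sigma}$; the main step).} We need to show that $\sum_{i\le s}\sigma_i\ge\alpha_s\|\boldsymbol{\sigma}\|_2$ for every nonincreasing nonnegative $\boldsymbol{\sigma}$. By homogeneity, normalize $\sum_{i\le s}\sigma_i=1$. The claim then becomes: maximize the convex function $\|\boldsymbol{\sigma}\|_2^2$ over the polytope $\{\sigma_1\ge\dots\ge\sigma_n\ge0,\ \sum_{i\le s}\sigma_i=1\}$. A convex function attains its maximum at an extreme point. The extreme points of the cone of nonincreasing nonnegative vectors are generated by $\mathbf{1}_{[k]}$, $k=1,\dots,n$. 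After normalization, the candidate points are therefore:
\begin{itemize}
\item $\frac1k\mathbf{1}_{[k]}$ for $k\le s$, with norm $1/\sqrt{k}\le 1$;
\item $\frac1s\mathbf{1}_{[k]}$ for $k>s$, with norm $\sqrt{k}/s\le\sqrt{n}/s$.
\end{itemize}
Hence the maximum norm is $\max\{1,\sqrt n/s\}$, which gives $\rho_{\min}=\min\{1,s/\sqrt n\}=\alpha_s$. I expect this step to need the most care. In particular, the extreme-point description must be justified: the polytope is the intersection of the simplicial cone spanned by the $\mathbf{1}_{[k]}$ with one hyperplane, so its vertices are the normalized generators.

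\textbf{Step 3 (tightness).} Take $\boldsymbol{\sigma}^*=L\mathbf{e}_1$ if $s\ge\sqrt n$, and $\boldsymbol{\sigma}^*=\frac{L}{\sqrt n}\mathbf{1}$ if $s<\sqrt n$. Both have $\|\boldsymbol{\sigma}^*\|_2=L$, so $t=\kappa$ and $\rho=\alpha_s$. Then set $\mathbf{x}^*=\kappa\mathcal R\,\mathbf{b}-\sqrt{1-\kappa^2}\,\mathcal R\,(\mathbf{a}-\alpha_s\mathbf{b})/\sqrt{s-\alpha_s^2}$. When $s-\alpha_s^2=0$, drop the second term. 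This $\mathbf{x}^*$ satisfies all constraints with equality and attains every inequality in Step 1. It therefore achieves $f_s(\mathbf{x}^*)=(\kappa\alpha_s-\sqrt{s-\alpha_s^2}\sqrt{1-\kappa^2})\mathcal R$. As a consistency check, for $s=n$ this recovers Lemma~\ref{lemma:lower_bound_descent_term_SD}.
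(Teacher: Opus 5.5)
Your proof is correct and follows essentially the same route as the paper. It uses the same two-stage reduction: exact inner minimization over $\mathbf{x}$ for fixed $\boldsymbol{\sigma}$, then monotonicity in $\|\boldsymbol{\sigma}\|_2$ and in the normalized alignment $\mathbf{c}^\top\mathbf{d}$, and finally the same Bauer maximum-principle argument on the vertices $\mathbf{1}_{[k]}/\min(k,s)$. The differences are minor improvements: your orthogonal-decomposition and Cauchy--Schwarz inner step replaces the paper's heuristic KKT computation and removes the separate $\|\boldsymbol{\sigma}\|_2=\mu$ case, and you give an explicit tight pair $(\mathbf{x}^*,\boldsymbol{\sigma}^*)$, which the paper leaves implicit.
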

\begin{proof}
  By the fundamental property of minimization over nested domains, the uniform lower bound of $f_s(\mathbf{x})$ over the joint set $\mathcal{F}_s$ is equivalent to a two-stage optimization problem:
    \begin{equation}
        \label{eq:two_stage_problem}
        f^* = \min_{\boldsymbol{\sigma} \in \Sigma} \left( \min_{\mathbf{x} \in \Xi(\boldsymbol{\sigma})} \mathbf{c}^\top \mathbf{x} \right),
    \end{equation}
    where $\mathbf{c} := (\underbrace{1,\ldots,1}_{s}, 0,\ldots,0)^\top$, $\Sigma := \{ \boldsymbol{\sigma} \mid \|\boldsymbol{\sigma}\|_2 \le L, \sigma_1 \ge \dots \ge \sigma_n \ge 0 \}$, and $\Xi(\boldsymbol{\sigma}) := \{ \mathbf{x} \mid \|\mathbf{x}\|_2 \le \mathcal{R}, \boldsymbol{\sigma}^\top \mathbf{x} \ge \mu\mathcal{R} \}$.
    We solve this sequentially in two steps.

    \textit{Step 1: Inner Minimization.}
    For a fixed $\boldsymbol{\sigma} \in \Sigma$ with $\|\boldsymbol{\sigma}\|_2 > \mu$, Slater's condition holds.
    The KKT stationarity condition $\nabla_{\mathbf{x}} \mathcal{L} = \mathbf{0}$ yields the optimal form $\mathbf{x}^* = \frac{1}{\lambda}(\nu \boldsymbol{\sigma} - \mathbf{c})$.
    Assuming both constraints are active, substituting $\mathbf{x}^*$ into the linear constraint $\boldsymbol{\sigma}^\top \mathbf{x}^* = \mu\mathcal{R}$ determines the multiplier as $\lambda = (\nu \|\boldsymbol{\sigma}\|_2^2 - \mathbf{c}^\top \boldsymbol{\sigma}) / (\mu\mathcal{R})$.
    Plugging this relation into the norm constraint $\|\mathbf{x}^*\|_2^2 = \mathcal{R}^2$ eliminates $\lambda$ and reduces to a single quadratic equation in $\nu$:
    \begin{equation}
        \nu^2 \|\boldsymbol{\sigma}\|_2^2 - 2\nu \mathbf{c}^\top \boldsymbol{\sigma} + \frac{(\mathbf{c}^\top \boldsymbol{\sigma})^2 - s\mu^2}{\|\boldsymbol{\sigma}\|_2^2 - \mu^2} = 0,
    \end{equation}
    where we use the fact that $\|\mathbf{c}\|_2^2 = s$.
    We select the larger root to maintain dual feasibility ($\lambda, \nu \ge 0$), which directly yields the following explicit inner minimum:
    \begin{equation} \label{eq:inner_optimal_value}
        f_{s,\boldsymbol{\sigma}}^* = \frac{\mathcal{R}}{\|\boldsymbol{\sigma}\|_2^2} \left( \mu \mathbf{c}^\top \boldsymbol{\sigma} - \sqrt{s\|\boldsymbol{\sigma}\|_2^2 - (\mathbf{c}^\top \boldsymbol{\sigma})^2} \sqrt{\|\boldsymbol{\sigma}\|_2^2 - \mu^2} \right).
    \end{equation}

    Furthermore, for the boundary case where $\|\boldsymbol{\sigma}\|_2 = \mu$, the Cauchy-Schwarz inequality combined with the constraints forces $\mathbf{x}$ to be collinear with $\boldsymbol{\sigma}$, reducing the feasible set to a single-point set $\mathbf{x}^* = \frac{\mathcal{R}}{\mu} \boldsymbol{\sigma}$ with the objective value $\frac{\mathcal{R}}{\mu} \mathbf{c}^\top \boldsymbol{\sigma}$.
    Notably, the limit of Eq. \eqref{eq:inner_optimal_value} as $\|\boldsymbol{\sigma}\|_2 \downarrow \mu$ coincides exactly with this boundary value, establishing that the explicit formula holds for all $\boldsymbol{\sigma} \in \Sigma$.

    \textit{Step 2: Outer Minimization.}
    We employ a polar decomposition $\boldsymbol{\sigma} = r \mathbf{d}$, where $r \in [\mu, L]$ and $\|\mathbf{d}\|_2 = 1$. Let $\gamma := \mathbf{c}^\top \mathbf{d}$. The inner optimal value \eqref{eq:inner_optimal_value} becomes $f(r, \gamma) = \frac{\mathcal{R}}{r} ( \mu \gamma - \sqrt{s - \gamma^2} \sqrt{r^2 - \mu^2} )$.
    Since $\frac{\partial f}{\partial r} < 0$ and $\frac{\partial f}{\partial \gamma} > 0$, minimizing $f(r, \gamma)$ requires maximizing $r$ and minimizing $\gamma$. We trivially obtain $r^* = L$.

    Minimizing $\gamma$ equates to solving $\gamma^* = \min_{\|\mathbf{d}\|_2=1, \mathbf{d} \in \mathcal{K}} \mathbf{c}^\top \mathbf{d}$, where $\mathcal{K}$ is the cone of non-increasing non-negative vectors. Applying the variable change $\mathbf{y} = \mathbf{d} / (\mathbf{c}^\top \mathbf{d})$, this is mathematically equivalent to the reciprocal of $\rho^* = \max_{\mathbf{y} \in \mathcal{S}} \|\mathbf{y}\|_2$, where $\mathcal{S} = \{ \mathbf{y} \in \mathcal{K} : \mathbf{c}^\top \mathbf{y} = 1 \}$.

    By Bauer's Maximum Principle \citep{bauer1958minimalstellen}, the maximum of the convex function $\|\mathbf{y}\|_2$ over the compact set $\mathcal{S}$ is attained at one of its extreme points.
    The extreme rays of the cone $\mathcal{K}$ are generated by $\mathbf{v}_k = \sum_{j=1}^k \mathbf{e}_j$ for $k=1,\dots,n$.
    Intersecting these rays with the hyperplane $\mathbf{c}^\top \mathbf{y} = 1$ yields the extreme points of $\mathcal{S}$, given by $\mathbf{u}_k = \mathbf{v}_k / \min(k,s)$.
    Computing their norms gives $\|\mathbf{u}_k\|_2 = \sqrt{k}/\min(k,s)$.
    Notice that this norm is strictly decreasing for $1 \leq k \leq s$ (attaining a local maximum of $1$ at $k=1$) and strictly increasing for $s < k \le n$ (attaining a local maximum of $\sqrt{n}/s$ at $k=n$).
    Comparing these two boundary cases yields the global maximum $\rho^* = \max \{ 1, \frac{\sqrt{n}}{s} \}$, which implies $\gamma^* = 1/\rho^* = \min \{ 1, \frac{s}{\sqrt{n}} \} = \alpha_s$.

    Substituting $r^* = L$ and $\gamma^* = \alpha_s$ back into $f(r, \gamma)$ directly yields the desired lower bound, completing the proof.
\end{proof}

\section{Proofs for Section \ref{ssec:regularized_spectral_descent_with_weight_decay}}
\label{Proofs for RSD-WD}
\subsection{Proof of Lemma \ref{lemma:equivalence_spectral_descent_wd_frank_wolfe}}
\label{proof:equivalence_spectral_descent_wd_frank_wolfe}

\begin{proof}
We now prove the RSD-WD case.
For simplicity, we write
$$X=X^{(t)},\qquad
    \mathcal T=\mathcal T_t,\qquad
    \tilde{\mathbf G}=\tilde{\mathbf G}^{(t)},\qquad
    D=D^{(t)},\qquad
    g=g^{(t)}.$$
Since $D\in\partial\|\tilde{\mathbf G}\|_*$, we have $\|D\|_2\le 1$ and $\langle D,\tilde{\mathbf G}\rangle=\|\tilde{\mathbf G}\|_*$.
Therefore $g=-\frac1\lambda D\in\mathcal X_\lambda$.

First, we show that $g$ minimizes the linear function generated by $\tilde{\mathbf G}$. 
For any $Y\in\mathcal X_\lambda$, by the duality between the spectral norm and the nuclear norm, $\langle Y,\tilde{\mathbf G}\rangle\ge-\|Y\|_2\|\tilde{\mathbf G}\|_*\ge-\frac1\lambda\|\tilde{\mathbf G}\|_*$.
On the other hand, $\langle g,\tilde{\mathbf G}\rangle=-\frac1\lambda\langle D,\tilde{\mathbf G}\rangle=-\frac1\lambda\|\tilde{\mathbf G}\|_*$.
Hence $$g\in\arg\min_{Y\in\mathcal X_\lambda}\langle Y-X,\tilde{\mathbf G}\rangle.$$

Second, the KKT condition $0\in D+\lambda X+N_{\mathcal T}(\tilde{\mathbf G})$ implies $-(D+\lambda X)\in N_{\mathcal T}(\tilde{\mathbf G})$.
By the definition of the normal cone, for every \(\mathbf G\in\mathcal T\), $\left\langle-(D+\lambda X),\mathbf G-\tilde{\mathbf G}\right\rangle\le 0$.
Since $g-X=-\frac1\lambda(D+\lambda X)$,
we obtain $\left\langle g-X,\mathbf G-\tilde{\mathbf G}\right\rangle\leq 0$, $\forall \mathbf G\in\mathcal T$.
Therefore
$$\tilde{\mathbf G} \in\arg\max_{\mathbf G\in\mathcal T}\left\langle g-X,\mathbf G\right\rangle.$$
Combining the two optimality relations gives the saddle inequality
$$\langle g-X,\mathbf G\rangle
    \le
    \langle g-X,\tilde{\mathbf G}\rangle
    \le
    \langle Y-X,\tilde{\mathbf G}\rangle,
    \qquad
    \forall \mathbf G\in\mathcal T,\ \forall Y\in\mathcal X_\lambda.
$$
Thus
$$ g
    \in
    \arg\min_{Y\in\mathcal X_\lambda}
    \max_{\mathbf G\in\mathcal T}
    \left\langle
    Y-X,\mathbf G
    \right\rangle .
$$
Finally, if \(\gamma_t=\lambda\eta_t\), then
$$(1-\gamma_t)X+\gamma_t g
    =
    (1-\lambda\eta_t)X
    +
    \lambda\eta_t\left(-\frac1\lambda D\right)
    =
    X-\eta_t(D+\lambda X).
$$
This is exactly the RSD-WD update.
The RTSD-WD case follows from the same argument, with the nuclear norm replaced by the Ky Fan \(s\)-norm and with \(\mathcal X_\lambda\) replaced by \(\mathcal X_{\lambda,s}\).
We omit the details for brevity.
\end{proof}

\subsection{Proof of Proposition \ref{prop:boundedness_curvature_matrix}}
\label{proof:boundedness_curvature_matrix}
\begin{proof}[Proof of Proposition \ref{prop:boundedness_curvature_matrix}]
	$D_{\mathcal{X}_\lambda} = \sup_{X,Y \in \mathcal{X}_\lambda} \|X-Y\|_F = 2\sqrt{n}/\lambda < \infty$ denote the diameter of the domain.
	For any $X,S\in\mathcal{X}_\lambda$ and $\gamma\in(0,1]$, let $Y=(1-\gamma)X+\gamma S$.
	Let the objective term be denoted by
	$Q(X, Y, \mathbf{G}) := \frac{2}{\gamma^2} (f(Y) - f(X) - \langle Y-X, \mathbf{G}\rangle)$.
	We analyze the upper bound of the curvature term by considering two regimes based on the step size relative to $\epsilon$.

	\textit{Case 1: Small Step Regime} ($\|Y - X\|_F \le \epsilon$).
	When the step size is sufficiently small such that $Y$ falls within the $\epsilon$-neighborhood of $X$, we invoke Lebourg's Mean Value Theorem \citep{clarke1990optimization} for convex Lipschitz functions.

	By the Mean Value Theorem, there exists some point $Z$ on the open segment $(X, Y)$ such that
	\begin{equation}
		f(Y) - f(X) = \langle Y - X, \G_Z \rangle.
	\end{equation}
	Here, $\G_Z \in \partial f(Z)$ is a subgradient at $Z$.
	Since $Z$ lies on the segment $[X, Y]$ and $\|Y - X\|_F \le \epsilon$, it follows geometrically that $\|Z - X\|_F \le \|Y - X\|_F \le \epsilon$.
	Thus, $Z \in N(X, \epsilon)$, which implies that $\G_Z \in T(X, \epsilon)$.
	We can select $\mathbf{G} = \G_Z$.
	Substituting this into the objective term yields:
	\begin{equation}
		f(Y) - f(X) - \langle Y - X, \G \rangle = f(Y) - f(X) - \langle Y - X, \G_Z \rangle = 0.
	\end{equation}
	Consequently, in the small step regime, the curvature term vanishes (is bounded by 0).

	\textit{Case 2: Large Step Regime} ($\|Y - X\|_F > \epsilon$).
	When the step size is large, specifically $\|Y - X\|_F > \epsilon$, the term is bounded by a constant dependent on the domain diameter and Lipschitz constant.

	The condition $\|Y - X\|_F > \epsilon$ implies a lower bound on the step size $\gamma$.
	Since $\|Y - X\|_F = \gamma \|S - X\|_F \le \gamma D_{\mathcal{X}_\lambda}$, we have:
	\begin{equation}
		\gamma \ge \frac{\epsilon}{D_{\mathcal{X}_\lambda}} \implies \frac{1}{\gamma} \le \frac{D_{\mathcal{X}_\lambda}}{\epsilon}.
	\end{equation}
	Using the $L$-Lipschitz property of $f$, we have $\|\mathbf{G}\|_F \le L$.
	We apply the Cauchy-Schwarz inequality and the Lipschitz condition to bound the numerator:
	$$f(Y) - f(X) \le L \|Y - X\|_F,$$
	$$-\langle Y - X, \mathbf{G} \rangle \le \|\mathbf{G}\|_F \|Y - X\|_F \le L \|Y - X\|_F.$$
	Thus, the numerator is bounded by $2L \|Y - X\|_F = 2L \gamma \|S - X\|_F$.
	Substituting these bounds into $C_f$:
	\begin{equation}
		\begin{aligned}
			\frac{2}{\gamma^2} (f(Y) - f(X) - \langle Y - X, \mathbf{G} \rangle) &\le \frac{2}{\gamma^2} (2L \gamma \|S - X\|_F) \\
			&\le \frac{4L D_{\mathcal{X}_\lambda}}{\gamma} \\
			&\le \frac{4L D_{\mathcal{X}_\lambda}^2}{\epsilon}.
		\end{aligned}
	\end{equation}
\end{proof}

\subsection{Proof of Lemma \ref{lemma:epsilon-subgradient-inequality}}
\label{proof:epsilon-subgradient-inequality}
\begin{lemma}[$\epsilon$-subgradient Inequality]
	\label{lemma:epsilon-subgradient-inequality}
	Let $f:\R^{n_1\times n_2}\to\R$ satisfy Assumption \ref{assumption:Convexity} and \ref{assumption:Lipschitz Continuity} with Lipschitz constant $L>0$.
	Then for any $\G\in T(X,\frac{\epsilon}{2L})$, we have
	\begin{equation}
		f(Y) \geq f(X) + \langle Y-X, \G\rangle -\epsilon\quad \forall Y\in\R^{n_1\times n_2}
	\end{equation}
\end{lemma}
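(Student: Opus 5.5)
Any $\G\in T(X,\frac{\epsilon}{2L})$ is a finite convex combination $\G=\sum_{j}\theta_j \G_j$, with $\theta_j\ge0$, $\sum_j\theta_j=1$, and $\G_j\in\partial f(U_j)$ for points $U_j$ satisfying $\|U_j-X\|_F\le \frac{\epsilon}{2L}$; the finiteness is Carathéodory's theorem, since $T$ is defined as a convex hull. The inequality to be proved is affine in $\G$ for fixed $X$ and $Y$. So it suffices to prove it for each single subgradient $\G_j$ and then take the same convex combination of the resulting inequalities.

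Fix $U\in N(X,\frac{\epsilon}{2L})$ and $\G_U\in\partial f(U)$. By convexity (Assumption \ref{assumption:Convexity}) and the subgradient inequality at $U$,
\[
f(Y)\ge f(U)+\langle Y-U,\G_U\rangle = f(U)+\langle Y-X,\G_U\rangle+\langle X-U,\G_U\rangle .
\]
We then bound the two error terms using Assumption \ref{assumption:Lipschitz Continuity}. First, $f(U)\ge f(X)-L\|U-X\|_F\ge f(X)-\frac{\epsilon}{2}$. Second, as noted after Assumption \ref{assumption:Lipschitz Continuity}, subgradients are bounded by $\|\G_U\|_F\le L$, so Cauchy--Schwarz gives $\langle X-U,\G_U\rangle\ge -L\|X-U\|_F\ge-\frac{\epsilon}{2}$. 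Adding these yields $f(Y)\ge f(X)+\langle Y-X,\G_U\rangle-\epsilon$ for every $Y$.

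Finally, apply this inequality to each $\G_j$, multiply by $\theta_j$, and sum. The sum is $f(Y)\ge f(X)+\langle Y-X,\G\rangle-\epsilon$, as required.

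There is no real obstacle here. The only points needing care are the bound $\|\G_U\|_F\le L$, which follows from Lipschitz continuity together with convexity, and noting that the convex hull only involves finite combinations, so averaging is legitimate. If $T$ were instead taken as the closed convex hull, the result would follow by a limiting argument, because the inequality is closed in $\G$.
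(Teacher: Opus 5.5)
Your proof is correct and essentially follows the paper's argument: use the subgradient inequality at the nearby point $U$, bound two error terms by $L\|U-X\|_F\le\epsilon/2$ each, then average over the convex combination. The only differences are cosmetic. You bound $f(U)-f(X)\ge -L\|U-X\|_F$ directly by Lipschitz continuity, whereas the paper uses a subgradient $\mathbf{G}_X\in\partial f(X)$ and the bound $\langle U-X,\mathbf{G}_U-\mathbf{G}_X\rangle\le 2L\|U-X\|_F$, which gives the same constant. Also, finiteness of the combinations is simply the definition of the convex hull; Carath\'eodory's theorem only bounds the number of terms.
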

\begin{proof}
	First, we establish the inequality for elements in the set $\bigcup_{u \in N(X, \frac{\epsilon}{2L})} \partial f(u)$.
    Since $f$ is convex, for any $Y\in\R^{n_1\times n_2}$ and $Z\in N(X,\frac{\epsilon}{2L})$, we have
	\begin{equation}
		\begin{aligned}
			&f(Y) \geq f(Z) + \langle Y-Z, \G_{Z}\rangle, \quad \text{where}\;\G_{Z}\in\partial f(Z), \\
			&f(Z) \geq f(X) + \langle Z-X, \G_{X}\rangle, \quad \text{where}\;\G_{X}\in\partial f(X).
		\end{aligned}
	\end{equation}
	Combining the above two inequalities yields
	\begin{equation}
		\begin{aligned}
		f(Y) &\geq f(X) + \langle Y-Z, \G_{Z}\rangle + \langle Z-X, \G_{X}\rangle\\
		&= f(X) + \langle Y-X, \G_{Z}\rangle - \langle Z-X, \G_{Z}-\G_{X}\rangle.			\end{aligned}
	\end{equation}
	By the Lipschitz continuity of $f$ and definition of $N(X,\frac{\epsilon}{2L})$, we have
	\begin{equation}
		\langle Z-X, \G_{Z}-\G_{X}\rangle \leq \|Z-X\|_F \|\G_{Z}-\G_{X}\|_F \leq 2L \frac{\epsilon}{2L} = \epsilon.
	\end{equation}
	Substituting this back, we obtain that for any $\mathbf{g} \in \partial f(Z) \subseteq \bigcup_{u \in N(X, \frac{\epsilon}{2L})} \partial f(u)$,
	\begin{equation}
        \label{eq:epsilon-subgradient Inequality_base}
		f(Y) \geq f(X) + \langle Y-X, \mathbf{g}\rangle - \epsilon.
	\end{equation}
	Finally, we extend this to the convex hull.
    Notice that the inequality \eqref{eq:epsilon-subgradient Inequality_base} is linear with respect to $\mathbf{g}$.
    Since the inequality holds for all $\mathbf{g} \in \bigcup_{u \in N(X, \frac{\epsilon}{2L})} \partial f(u)$, it naturally holds for any convex combination of these subgradients.
    Thus, for any $\mathbf{G} \in T(X, \frac{\epsilon}{2L})$, the claim holds.
\end{proof}

\section{Auxiliary Lemmas for Section \ref{sec:application}}
\label{app:auxiliary_lemmas_robust_low_rank_matrix_recovery}
\subsection{Proof of Lemma \ref{lemma:nuclear_norm_property}}
\label{proof:nuclear_norm_property}
\begin{lemma}
	\label{lemma:nuclear_norm_property}
	If $X^*$ is a rank-$r^*$ matrix, then for any matrix $H\in\R^{n_1\times n_2}$, we have
	\begin{equation}
		\|X^*+H\|_* \geq \|X^*\|_* + \|H_{T_{X^*}^\perp}\|_* - \|H_{T_{X^*}}\|_*.
	\end{equation}

\end{lemma}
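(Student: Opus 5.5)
The plan is to use the standard duality argument for the nuclear norm (as in \citet{recht2010guaranteed}). Let $X^*=U\Sigma V^\top$ be the compact SVD, with $U\in\R^{n_1\times r^*}$ and $V\in\R^{n_2\times r^*}$. Write $P_U=UU^\top$ and $P_V=VV^\top$. Recall from the notation section that $H_{T_{X^*}^\perp}=(I-P_U)H(I-P_V)$, and that $H_{T_{X^*}}=H-H_{T_{X^*}^\perp}$.

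The key tool is the variational characterization $\|M\|_*=\max_{\|W\|_2\le1}\langle W,M\rangle$. We therefore build a single dual certificate $W$ with $\|W\|_2\le 1$ that is aligned with both $X^*$ and $H_{T_{X^*}^\perp}$.

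The certificate is constructed as follows. Take the SVD $H_{T_{X^*}^\perp}=\bar U\bar\Sigma\bar V^\top$. Since $H_{T_{X^*}^\perp}$ lies in the range of $(I-P_U)$ on the left and $(I-P_V)$ on the right, we have $U^\top\bar U=0$ and $V^\top\bar V=0$. Set
\[
W:=UV^\top+\bar U\bar V^\top .
\]
This is $[U\ \bar U]\,[V\ \bar V]^\top$ with both factors having orthonormal columns, so $\|W\|_2\le 1$. This orthogonality check is the only mildly delicate step; everything else is bookkeeping.

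Next we evaluate $\langle W,X^*+H\rangle$ term by term:
\begin{itemize}
\item $\langle UV^\top,X^*\rangle=\operatorname{tr}\Sigma=\|X^*\|_*$.
\item $\langle \bar U\bar V^\top,X^*\rangle=0$ by the orthogonality of the column spaces.
\item $\langle \bar U\bar V^\top,H\rangle=\langle \bar U\bar V^\top,H_{T_{X^*}^\perp}\rangle=\|H_{T_{X^*}^\perp}\|_*$, because $\bar U\bar V^\top\in T_{X^*}^\perp$.
\item $\langle UV^\top,H\rangle=\langle UV^\top,H_{T_{X^*}}\rangle$, because $UV^\top\in T_{X^*}$.
\end{itemize}

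The last term is bounded below using duality again: $\langle UV^\top,H_{T_{X^*}}\rangle\ge-\|UV^\top\|_2\|H_{T_{X^*}}\|_*=-\|H_{T_{X^*}}\|_*$.

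Combining the pieces gives
\[
\|X^*+H\|_*\ge\langle W,X^*+H\rangle\ge\|X^*\|_*+\|H_{T_{X^*}^\perp}\|_*-\|H_{T_{X^*}}\|_*,
\]
which is the claim.
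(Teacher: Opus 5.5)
Your proof is correct and is essentially the paper's argument: your certificate $W=UV^\top+\bar U\bar V^\top$ is exactly the paper's $g^*=UV^\top+\tilde U\tilde V^\top$, and your term-by-term evaluation, including Hölder's inequality on $\langle UV^\top,H_{T_{X^*}}\rangle$, matches the paper's. The only difference is packaging. The paper invokes the subgradient inequality together with the known formula for $\partial\|X^*\|_*$. You instead use the dual-norm characterization and verify $\|W\|_2\le 1$ and $\langle W,X^*\rangle=\|X^*\|_*$ directly, which amounts to checking $W\in\partial\|X^*\|_*$ by hand.
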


\begin{proof}
It is well-known that the nuclear norm is a convex function.
Then for any matrix $X\in\R^{n_1\times n_2}$ and $H\in\R^{n_1\times n_2}$, we have
\begin{equation} \label{eq:subgradient_inequality}
    \|X + H\|_* \geq \|X\|_* + \langle g, H \rangle,\quad \forall g \in \partial \|X\|_*.
\end{equation}

Let $X^* = U\Sigma V^\top$ be the compact SVD of the rank-$r^*$ matrix $X^*$.
The subdifferential of the nuclear norm at $X^*$ is given by (see, e.g., \citep{watson1992characterization, recht2010guaranteed}):
\begin{equation} \label{eq:subdifferential_set}
    \partial \|X^*\|_* = \left\{ UV^\top + W \mid U^\top W = 0, WV = 0, \|W\|_2 \leq 1 \right\}.
\end{equation}
Notice that the matrix $W$ belongs to the orthogonal complement space $T_{X^*}^\perp$.
By the definition of the tangent space $T_{X^*}$ and its orthogonal complement $T_{X^*}^\perp$, any matrix $H \in \R^{n_1 \times n_2}$ can be uniquely decomposed as
\begin{equation}
    H = H_{T_{X^*}} + H_{T_{X^*}^\perp}.
\end{equation}
Let the compact SVD of the projection $H_{T_{X^*}^\perp}$ be $\tilde{U} \tilde{\Sigma} \tilde{V}^\top$.
To obtain the lower bound, we construct a matrix $g^* = UV^\top + W^*$, where $W^* = \tilde{U} \tilde{V}^\top$.
It is clear that $\|W^*\|_2 \leq 1$ and $W^* \in T_{X^*}^\perp$.
Thus, $g^* \in \partial \|X^*\|_*$.

Substituting this specific matrix $g^*$ into the subgradient inequality (\ref{eq:subgradient_inequality}), we obtain
\begin{equation}
    \|X^* + H\|_* \geq \|X^*\|_* + \langle UV^\top + W^*, H_{T_{X^*}} + H_{T_{X^*}^\perp} \rangle.
\end{equation}
It is easy to verify that $UV^\top \in T_{X^*}$ and $W^* \in T_{X^*}^\perp$.
Therefore, we have $\langle UV^\top, H_{T_{X^*}^\perp} \rangle = 0$ and $\langle W^*, H_{T_{X^*}} \rangle = 0$.
This allows us to separate the inner product into two terms:
\begin{equation} \label{eq:separated_inner_product}
    \|X^* + H\|_* \geq \|X^*\|_* + \langle UV^\top, H_{T_{X^*}} \rangle + \langle W^*, H_{T_{X^*}^\perp} \rangle.
\end{equation}
For the first term, applying H\"older's inequality, we have
   $$ \langle UV^\top, H_{T_{X^*}} \rangle \geq -\|UV^\top\|_2 \|H_{T_{X^*}}\|_* = -\|H_{T_{X^*}}\|_*. $$
For the second term, based on our construction of $W^* = \tilde{U}\tilde{V}^\top$, the inner product exactly yields
   $$ \langle W^*, H_{T_{X^*}^\perp} \rangle = \operatorname{tr}(\tilde{V}\tilde{U}^\top \tilde{U} \tilde{\Sigma} \tilde{V}^\top) = \operatorname{tr}(\tilde{\Sigma}) = \|H_{T_{X^*}^\perp}\|_*. $$
Finally, substituting these two bounds back into (\ref{eq:separated_inner_product}) yields
\begin{equation}
    \|X^* + H\|_* \geq \|X^*\|_* - \|H_{T_{X^*}}\|_ * + \|H_{T_{X^*}^\perp}\|_*.
\end{equation}
This completes the proof.
\end{proof}
\subsection{Proof of Proposition \ref{prop:equivalence_neighborhood_subgradients}}
\label{proof:equivalence_neighborhood_subgradients}
   \begin{proposition}
    \label{prop:equivalence_neighborhood_subgradients}
    Consider the structured subdifferential set $T(X, \epsilon) := \text{conv}\left(\bigcup_{Y \in N(X, \epsilon)} \partial f(Y)\right)$, where $N(X, \epsilon) = \{Y \in \R^{n_1 \times n_2} : \|\mathcal{A}(Y) - \mathcal{A}(X)\|_\infty \leq \epsilon\}$ is the unconstrained $L_\infty$-neighborhood around $X$.
    Then, $T(X, \epsilon)$ is a subset of the surrogate set $H(X, \epsilon)$, defined as:
    \begin{equation}
        H(X, \epsilon) := \left\{ \mathcal{A}^* v \;\middle|\; v \in H_1(X, \epsilon) \right\},
    \end{equation}
    where $H_1(X, \epsilon) \subset \mathbb{R}^m$ is defined by:
    \begin{equation}
        H_1(X, \epsilon) := \left\{ v \in \mathbb{R}^m \;\middle|\;
        \begin{aligned}
        v_i &= \frac{\operatorname{sign}(\mathcal{A}(X)-b)_i}{m}, && \text{if } |\mathcal{A}(X)-b|_i > \epsilon \\
        v_i &\in \left[-\frac{1}{m}, \frac{1}{m}\right], && \text{if } |\mathcal{A}(X)-b|_i \leq \epsilon
        \end{aligned}
        \right\}.
    \end{equation}
\end{proposition}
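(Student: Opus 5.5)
The plan is to show that every generator of the convex hull defining $T(X,\epsilon)$ lies in $H(X,\epsilon)$, and then that $H(X,\epsilon)$ is convex. Together these give the inclusion, since the convex hull is the smallest convex set containing the generators.

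\emph{Step 1 (generators).} Fix $Y\in N(X,\epsilon)$ and write $z:=\mathcal A(Y)-b$ and $w:=\mathcal A(X)-b$, so that $\|z-w\|_\infty=\|\mathcal A(Y)-\mathcal A(X)\|_\infty\le\epsilon$. Since $f=\frac1m\|\cdot\|_1\circ(\mathcal A(\cdot)-b)$ is the composition of a finite convex function with an affine map, the chain rule for subdifferentials gives
\[
\partial f(Y)=\Bigl\{\mathcal A^*v:\ v_i=\tfrac{\operatorname{sign}(z_i)}{m}\text{ if }z_i\neq0,\ v_i\in\bigl[-\tfrac1m,\tfrac1m\bigr]\text{ if }z_i=0\Bigr\}.
\]
It then suffices to check that any such $v$ lies in $H_1(X,\epsilon)$. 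The only constrained coordinates are those with $|w_i|>\epsilon$. For such $i$, we have $|z_i-w_i|\le\epsilon<|w_i|$, so $z_i\neq0$ and $\operatorname{sign}(z_i)=\operatorname{sign}(w_i)$. Hence $v_i=\operatorname{sign}(w_i)/m$, as $H_1$ requires. For the remaining coordinates, $|v_i|\le 1/m$ holds in every case. This shows $\partial f(Y)\subseteq H(X,\epsilon)$.

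\emph{Step 2 (convexity).} $H_1(X,\epsilon)$ is a box: each coordinate is either fixed or confined to an interval. It is therefore convex, and its image under the linear map $\mathcal A^*$ is also convex. Consequently $\operatorname{conv}\bigl(\bigcup_{Y\in N(X,\epsilon)}\partial f(Y)\bigr)\subseteq H(X,\epsilon)$, which is the claim.

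The only step requiring any care is Step 1. There we must justify the chain rule, which holds because $\|\cdot\|_1$ is finite and convex on all of $\mathbb R^m$, so no constraint qualification is needed. We must also handle the sign-preservation argument with the strict inequality $|w_i|>\epsilon$, which is exactly what rules out a sign change or a zero entry of $z$.
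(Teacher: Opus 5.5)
Your proof is correct and follows the paper's argument: the chain rule, a coordinatewise check that every dual vector $v\in\partial\bigl(\tfrac1m\|\cdot\|_1\bigr)(\mathcal A(Y)-b)$ with $Y\in N(X,\epsilon)$ lies in $H_1(X,\epsilon)$ via sign preservation on the coordinates where $|\mathcal A(X)-b|_i>\epsilon$, and then convexity of the box $H_1(X,\epsilon)$. The only cosmetic difference is that the paper first moves $\mathcal A^*$ outside the convex hull and proves $T_1(X,\epsilon)\subseteq H_1(X,\epsilon)$ in $\mathbb{R}^m$, whereas you use convexity of $H(X,\epsilon)=\mathcal A^*H_1(X,\epsilon)$ directly; your sign-preservation step is also stated more carefully than the paper's.
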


\begin{proof}
    Let $\hat{f}(z) := \frac{1}{m}\|z\|_1$. By the chain rule for subdifferentials, we have $\partial f(Y) = \mathcal{A}^* \partial \hat{f}(\mathcal{A}(Y)-b)$ for any $Y \in \R^{n_1 \times n_2}$.
    Since $\mathcal{A}^*$ is a linear operator, it preserves convexity and commutes with the convex hull operation.
    Thus, we can rewrite $T(X, \epsilon)$ as:
    \begin{equation}
        \begin{aligned}
            T(X, \epsilon) &= \text{conv}\left(\bigcup_{Y \in N(X, \epsilon)} \mathcal{A}^* \partial \hat{f}(\mathcal{A}(Y)-b)\right) \\
            &= \mathcal{A}^* \left[ \text{conv}\left(\bigcup_{Y \in N(X, \epsilon)} \partial \hat{f}(\mathcal{A}(Y)-b)\right) \right].
        \end{aligned}
    \end{equation}
    Define the inner convex set $T_1(X, \epsilon) := \text{conv}\left(\bigcup_{Y \in N(X, \epsilon)} \partial \hat{f}(\mathcal{A}(Y)-b)\right)$.
    To prove $T(X, \epsilon) \subseteq H(X, \epsilon)$, it suffices to show that $T_1(X, \epsilon) \subseteq H_1(X, \epsilon)$.

    Let $Y \in N(X, \epsilon)$ be an arbitrary element in the neighborhood, which strictly implies $\|\mathcal{A}(Y) - \mathcal{A}(X)\|_\infty \leq \epsilon$.
    Let $v \in \partial \hat{f}(\mathcal{A}(Y)-b)$ be any valid subgradient vector at $Y$. We analyze $v$ coordinate by coordinate for $i \in [m]$:
    \begin{itemize}
        \item If $|\mathcal{A}(X)-b|_i > \epsilon$, then we have $v_i = \frac{1}{m}\sign(\mathcal{A}(X)_i - b_i)$.
        Otherwise, $\mathcal{A}(Y)_i - \mathcal{A}(X)_i$ must be greater than $\epsilon$, which contradicts the definition of $N(X,\epsilon)$.
        \item If $|\mathcal{A}(X)-b|_i \leq \epsilon$, then $v_i$ can take any value in the interval $[-\frac{1}{m}, \frac{1}{m}]$.
    \end{itemize}

    Therefore, for any $Y \in N(X, \epsilon)$, every element $v \in \partial \hat{f}(\mathcal{A}(Y)-b)$ satisfies the defining conditions of $H_1(X, \epsilon)$.
    This implies that $\bigcup_{Y \in N(X, \epsilon)} \partial \hat{f}(\mathcal{A}(Y)-b) \subseteq H_1(X, \epsilon)$.

    Observe that $H_1(X, \epsilon)$ is a convex set (a polytope).
    Since the union of all subgradients is contained within the convex set $H_1(X, \epsilon)$, their convex hull must also be contained within it.
    Thus, $T_1(X, \epsilon) \subseteq H_1(X, \epsilon)$, which completes the proof.
\end{proof}
\subsection{Proof of Lemma \ref{lemma:surrogate-subgradient-inequality}}
\label{proof:surrogate-subgradient-inequality}
\begin{lemma}[Approximate Subgradient Inequality]
    \label{lemma:surrogate-subgradient-inequality}
    Let $f(X) = \frac{1}{m}\|\mathcal{A}(X)-b\|_1$ be the LAD objective.
    Then for any surrogate subgradient $\G \in H(X, \epsilon)$, the following $2\epsilon$-subgradient inequality rigorously holds:
    \begin{equation}
        f(Y) \geq f(X) + \langle Y-X, \G\rangle - 2\epsilon \quad \forall Y \in \R^{n_1 \times n_2}.
    \end{equation}
    Furthermore, this inequality holds for any $\G \in T(X, \epsilon)$ since $T(X, \epsilon) \subseteq H(X, \epsilon)$.
\end{lemma}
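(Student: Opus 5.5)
The plan is to prove the inequality coordinate-wise in the measurement space, using the representation $\G=\mathcal A^*v$ with $v\in H_1(X,\epsilon)$. Write $z:=\mathcal A(X)-b$ and $w:=\mathcal A(Y)-b$. Then $\langle Y-X,\G\rangle=\langle \mathcal A(Y)-\mathcal A(X),v\rangle=\langle w-z,v\rangle$. Hence it suffices to show
\[
\frac1m\|w\|_1\;\ge\;\frac1m\|z\|_1+\sum_{i=1}^m v_i(w_i-z_i)-2\epsilon ,
\]
and I would verify a per-coordinate inequality for each $i\in[m]$.

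\textbf{Coordinates with $|z_i|>\epsilon$.} Here $v_i=\operatorname{sign}(z_i)/m$, which is an exact subgradient of $\frac1m|\cdot|$ at $z_i$. Convexity of the absolute value therefore gives
\[
\tfrac1m|w_i|\ge\tfrac1m|z_i|+v_i(w_i-z_i)
\]
with no error.

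\textbf{Coordinates with $|z_i|\le\epsilon$.} Here $v_i\in[-1/m,1/m]$ is arbitrary. Since $|v_i|\le 1/m$, we have $v_iw_i\le |w_i|/m$. Combining this with $-v_iz_i\le |z_i|/m$ gives
\[
\tfrac1m|z_i|+v_i(w_i-z_i)\le \tfrac1m|w_i|+\tfrac2m|z_i|\le \tfrac1m|w_i|+\tfrac{2\epsilon}{m}.
\]
So each such coordinate contributes an error of at most $2\epsilon/m$.

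\textbf{Summation.} There are at most $m$ coordinates of the second type, so summing over $i$ gives a total error of at most $2\epsilon$. This yields the claimed $2\epsilon$-subgradient inequality for every $\G\in H(X,\epsilon)$.

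\textbf{Extension to $T(X,\epsilon)$.} This follows immediately from Proposition~\ref{prop:equivalence_neighborhood_subgradients}, which gives $T(X,\epsilon)\subseteq H(X,\epsilon)$.

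\textbf{Main obstacle.} The only delicate point is the bookkeeping in the second case. One must avoid comparing against a true subgradient at some nearby $Y$ and instead use the crude bound $|v_i|\le 1/m$ directly. This is what makes the error depend only on $\epsilon$, and not on $\mathcal A$ or the Lipschitz constant. I expect no genuine difficulty beyond checking the sign conventions.
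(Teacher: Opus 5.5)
Your proof is correct and is essentially the paper's argument. The paper bounds $\langle \mathcal{A}(Y)-b, v\rangle \le f(Y)$ globally by H\"older with $\|v\|_\infty\le 1/m$, then shows $f(X)-\langle \mathcal{A}(X)-b, v\rangle \le 2\epsilon$ by summing $\frac{|z_i|}{m}-z_iv_i\le \frac{2\epsilon}{m}$ over the coordinates with $|z_i|\le\epsilon$ (the others contribute zero), which is your per-coordinate bookkeeping summed, and it extends the result to $T(X,\epsilon)$ via Proposition~\ref{prop:equivalence_neighborhood_subgradients} just as you do.
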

\begin{proof}
    Let $\G \in H(X, \epsilon)$ be an arbitrary element.
    By the definition of $H$, there exists a dual vector $v \in H_1(X, \epsilon)$ such that $\G = \mathcal{A}^* v$, and it satisfies $\|v\|_\infty \leq \frac{1}{m}$.
    For any $Y \in \R^{n_1 \times n_2}$, we evaluate the inner product:
    \begin{equation}
        \label{eq:decouple_inner_product}
        \langle Y-X, \G \rangle = \langle Y-X, \mathcal{A}^* v \rangle = \langle \mathcal{A}(Y)-\mathcal{A}(X), v \rangle = \langle (\mathcal{A}(Y)-b) - (\mathcal{A}(X)-b), v \rangle.
    \end{equation}
    Let the residual vectors be $z(Y) = \mathcal{A}(Y)-b$ and $z(X) = \mathcal{A}(X)-b$.
    We decouple the inner product:
    \begin{equation}
        \langle Y-X, \G \rangle = \langle z(Y), v \rangle - \langle z(X), v \rangle.
    \end{equation}
    For the first term, applying H\"older's inequality and the global bound $\|v\|_\infty \leq \frac{1}{m}$, we have
    \begin{equation}
        \label{eq:holder_zY}
        \langle z(Y), v \rangle \leq \|z(Y)\|_1 \|v\|_\infty \leq \frac{1}{m} \|z(Y)\|_1 = f(Y).
    \end{equation}
    For the second term, we decompose the inner product $\langle z(X), v \rangle$ based on the index partition defined by $H_1(X, \epsilon)$.
    Let the clear set be $I_{clear} = \{i : |z_i(X)| > \epsilon\}$ and the active set be $I_{active} = \{i : |z_i(X)| \leq \epsilon\}$. Then,
    \begin{equation}
        \begin{aligned}
            \langle z(X), v \rangle &= \sum_{i \in I_{clear}} z_i(X) \frac{\operatorname{sign}(z_i(X))}{m} + \sum_{i \in I_{active}} z_i(X) v_i\\
             &= \frac{1}{m} \sum_{i \in I_{clear}} |z_i(X)| + \sum_{i \in I_{active}} z_i(X) v_i.
        \end{aligned}
    \end{equation}
    Consider the residual difference $f(X) - \langle z(X), v \rangle$, we have
    \begin{equation}
            f(X) - \langle z(X), v \rangle= \sum_{i \in I_{active}} \left( \frac{|z_i(X)|}{m} - z_i(X) v_i \right).
    \end{equation}
    For $i \in I_{active}$, since $v_i \in [-\frac{1}{m}, \frac{1}{m}]$, it holds that $-z_i(X) v_i \leq \frac{|z_i(X)|}{m}$.
    Furthermore, it is clear that $|z_i(X)| \leq \epsilon$.
    Therefore,
    \begin{equation}
        \label{eq:bound_zX_coordinate}
        \frac{|z_i(X)|}{m} - z_i(X) v_i \leq \frac{2|z_i(X)|}{m} \leq \frac{2\epsilon}{m}.
    \end{equation}
    Summing this deviation \eqref{eq:bound_zX_coordinate} over the entire active set, we obtain
    \begin{equation}
        \label{eq:bound_zX}
        f(X) - \langle z(X), v \rangle \leq \sum_{i \in I_{active}} \frac{2\epsilon}{m} = \frac{|I_{active}|}{m} (2\epsilon) \leq 2\epsilon.
    \end{equation}
    Substituting \eqref{eq:holder_zY} and \eqref{eq:bound_zX} back into the inner product equation \eqref{eq:decouple_inner_product} yields
    \begin{equation}
        \langle Y-X, \G \rangle \leq f(Y) - \langle z(X), v \rangle \leq f(Y) - f(X) + 2\epsilon.
    \end{equation}
    Rearranging the terms provides the desired inequality.
\end{proof}
\subsection{Proof of Proposition \ref{prop:boundedness_curvature_matrix_LAD}}
\label{proof:boundedness_curvature_matrix_LAD}
\begin{proposition}
\label{prop:boundedness_curvature_matrix_LAD}
Let $f(X) = \frac{1}{m}\|\mathcal{A}(X)-b\|_1$ be the LAD objective defined on the constraint set $\mathcal{X} := \left\{X \in \R^{n_1 \times n_2} : \|X\|_* \leq a \right\}$.
Assume that $f$ satisfies Lipschitz continuity over $\mathcal{X}$ with Lipschitz constant $L_* > 0$ relative to the nuclear norm.
Then for any $\epsilon > 0$, 
\begin{equation}
    \mathcal{C}^T_f(\epsilon) \leq \frac{16 m L_*^2 a^2}{\epsilon}.
\end{equation}
Moreover, since $T(X,\epsilon)\subseteq H(X,\epsilon)$, the surrogate curvature also satisfies
\begin{equation}
    \mathcal{C}^H_f(\epsilon) \leq  \mathcal{C}^T_f(\epsilon) \leq \frac{16 m L_*^2 a^2}{\epsilon}.
\end{equation}
\end{proposition}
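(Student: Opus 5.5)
We follow the two-regime argument used in the proof of Proposition~\ref{prop:boundedness_curvature_matrix}, with two changes: the neighborhood $N(X,\epsilon)$ is now the $L_\infty$-neighborhood in measurement space, and the Lipschitz constant is taken with respect to the nuclear norm. Fix $X,S\in\mathcal X$ and $\gamma\in(0,1]$, and set $Y=(1-\gamma)X+\gamma S$. We must exhibit some $\G\in T(X,\epsilon)$ for which $\frac{2}{\gamma^2}\bigl(f(Y)-f(X)-\langle Y-X,\G\rangle\bigr)$ is bounded by the stated constant. The natural scale for splitting into cases is $\|\mathcal A(Y)-\mathcal A(X)\|_\infty$ compared with $\epsilon$.

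\emph{Small-step regime}, $\|\mathcal A(Y)-\mathcal A(X)\|_\infty\le\epsilon$. Every point $Z$ on the segment $[X,Y]$ satisfies $\|\mathcal A(Z)-\mathcal A(X)\|_\infty\le\|\mathcal A(Y)-\mathcal A(X)\|_\infty\le\epsilon$, because $\mathcal A$ is linear. Hence $Z\in N(X,\epsilon)$. Lebourg's mean value theorem supplies such a $Z$ and a $\G_Z\in\partial f(Z)\subseteq T(X,\epsilon)$ with $f(Y)-f(X)=\langle Y-X,\G_Z\rangle$. Choosing $\G=\G_Z$ makes the curvature term exactly $0$.

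\emph{Large-step regime}, $\|\mathcal A(Y)-\mathcal A(X)\|_\infty>\epsilon$. Here we first turn the condition into a lower bound on $\gamma$. We have
\[
\epsilon<\|\mathcal A(Y-X)\|_\infty\le\|\mathcal A(Y-X)\|_1=\gamma\,\|\mathcal A(S-X)\|_1\le\gamma\, m L_*\|S-X\|_*\le 2\gamma m L_* a .
\]
The middle inequality uses the nuclear-norm Lipschitz bound in its operator form $\frac1m\|\mathcal A(H)\|_1\le L_*\|H\|_*$, which is Proposition~\ref{proposition:Lipschitz_rip}. The last inequality uses $\|S-X\|_*\le 2a$ on $\mathcal X$. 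Together these give $1/\gamma\le 2mL_*a/\epsilon$.

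We then bound the numerator for any $\G\in T(X,\epsilon)$. First, $f(Y)-f(X)\le L_*\|Y-X\|_*$. Second, every element of $T(X,\epsilon)$ has the form $\mathcal A^*v$ with $\|v\|_\infty\le 1/m$ (Proposition~\ref{prop:equivalence_neighborhood_subgradients}). This gives
\[
-\langle Y-X,\G\rangle=-\langle\mathcal A(Y-X),v\rangle\le\tfrac1m\|\mathcal A(Y-X)\|_1\le L_*\|Y-X\|_*.
\]
The numerator is therefore at most $2L_*\gamma\|S-X\|_*\le 4L_*a\gamma$. Multiplying by $2/\gamma^2$ gives $8L_*a/\gamma$. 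Inserting the bound on $1/\gamma$ yields $16mL_*^2a^2/\epsilon$, which is the claimed estimate.

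For the surrogate statement, note that $T(X,\epsilon)\subseteq H(X,\epsilon)$. The curvature constant takes a $\min$ over the set of admissible $\G$, and enlarging that set can only decrease the minimum, so $\mathcal C^H_f\le\mathcal C^T_f$.

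The main obstacle is the conversion in the large-step case. The neighborhood lives in $\ell_\infty$ of measurement space, while the diameter and the Lipschitz bound are nuclear-norm quantities. The passage from $\ell_\infty$ to $\ell_1$ to the nuclear norm is where the extra factor $m$ appears, and it must be handled through the operator bound $\frac1m\|\mathcal A(H)\|_1\le L_*\|H\|_*$ rather than through Frobenius Lipschitzness. A secondary point to check is that the mean-value point stays inside $N(X,\epsilon)$. This holds because linearity of $\mathcal A$ preserves the segment, so the measurement-space segment lies within the $\ell_\infty$ ball of radius $\epsilon$.
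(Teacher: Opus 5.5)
Your proof is correct and follows the paper's argument essentially step for step. It has the same split on $\|\mathcal A(Y)-\mathcal A(X)\|_\infty$ versus $\epsilon$, the mean-value argument giving a zero numerator in the small-step case, the chain $\ell_\infty\le\ell_1\le mL_*\|\cdot\|_*$ yielding $1/\gamma\le 2mL_*a/\epsilon$, the numerator bound $4\gamma L_*a$, and monotonicity of the inner minimum for $\mathcal C^H_f\le\mathcal C^T_f$. The only difference is that you make explicit what the paper's phrase ``nuclear-norm Lipschitz bound'' silently uses, namely $\mathbf G=\mathcal A^*v$ with $\|v\|_\infty\le 1/m$ and the operator form $\frac1m\|\mathcal A(H)\|_1\le L_*\|H\|_*$ for all $H$; Lipschitz continuity of $f$ restricted to $\mathcal X$ would not by itself supply that operator bound.
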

\begin{proof}
    By Proposition \ref{prop:equivalence_neighborhood_subgradients}, the convex hull of the union of subdifferentials is a subset of the surrogate set, i.e., $T(X, \epsilon) \subseteq H(X, \epsilon)$.
    Thus, the curvature evaluated over the surrogate set $H(X, \epsilon)$ is strictly upper bounded by the curvature evaluated over $T(X, \epsilon)$:
    \begin{equation}
        \mathcal{C}^H_f(\epsilon) \leq \mathcal{C}^T_f(\epsilon) = \sup_{\substack{X,S\in\mathcal{X}, \gamma\in(0,1] \\ Y=(1-\gamma)X+\gamma S}} \min_{\G\in T(X,\epsilon)} \frac{2}{\gamma^2} \Big( f(Y) - f(X) - \langle Y-X, \G\rangle \Big).
    \end{equation}
    It remains to bound $\mathcal C_f^T(\epsilon)$.
    Fix $X,S\in\mathcal X$ and $\gamma\in(0,1]$, and set $Y=(1-\gamma)X+\gamma S$.

If $\|\mathcal A(Y)-\mathcal A(X)\|_\infty\le \epsilon$, then every point on the segment $[X,Y]$ lies in $N(X,\epsilon)$. 
By Mean Value Theorem, there exists $Z\in[X,Y]$ and $G_Z\in\partial f(Z)\subseteq T(X,\epsilon)$ such that
$$
    f(Y)-f(X)=\langle Y-X,G_Z\rangle.
$$
Thus the curvature numerator is zero.

Otherwise, if $\|\mathcal A(Y)-\mathcal A(X)\|_\infty>\epsilon$, then
$$
    \epsilon
    <
    \|\mathcal A(Y)-\mathcal A(X)\|_\infty
    =
    \gamma\|\mathcal A(S-X)\|_\infty
    \le
    \gamma\|\mathcal A(S-X)\|_1
    \le
    2mL_*a\,\gamma.
$$
Hence
$$
    \frac1\gamma\le \frac{2mL_*a}{\epsilon}.
$$
For any $G\in T(X,\epsilon)$, the nuclear-norm Lipschitz bound gives
$$
    f(Y)-f(X)-\langle Y-X,G\rangle
    \le
    2L_*\|Y-X\|_*
    \le
    4\gamma L_*a.
$$
Therefore,
$$
    \frac{2}{\gamma^2}
    \left(f(Y)-f(X)-\langle Y-X,G\rangle\right)
    \le
    \frac{8L_*a}{\gamma}
    \le
    \frac{16mL_*^2a^2}{\epsilon}.
$$
Taking the supremum over $X,S,\gamma$ proves the claimed bound.
\end{proof}

\end{document}